\newcommand{\Lm}{\mathrm{L}^\alpha}
\newtheorem{theorem}{Theorem}
\newtheorem*{theorem*}{Theorem}
\newtheorem{corollary}[theorem]{Corollary}
\newtheorem{definition}[theorem]{Definition}
\newtheorem{lemma}[theorem]{Lemma}
\newtheorem*{lemma*}{Lemma}
\newtheorem{proposition}[theorem]{Proposition}
\newtheorem{assumption}[theorem]{Assumption}
\newcommand{\imagi}{\mathsf{i}}
\title{Algorithmic Stability of Heavy-Tailed SGD with General Loss Functions} 
\author{Anant Raj$^*$ \\
 Coordinated Science Laboraotry \\
 University of Illinois Urbana-Champaign. \\
  Inria, Ecole Normale Sup\'erieure \\
  PSL Research University, Paris, France. \\
  \texttt{anant.raj@inria.fr} 
  \vspace{.25cm}
 \\
 \and Lingjiong Zhu$^*$ \\
  Department of Mathematics \\
  Florida State University, FL, USA. \\
  \texttt{zhu@math.fsu.edu} \\
  \and Mert Gürbüzbalaban \\
  Department of Management\\ Science and Information Systems \\
 Rutgers University, Piscataway, USA. \\ 
 Princeton University, NJ, USA\\
  \texttt{mg1625@princeton.edu}  \\
  \and Umut \c{S}im\c{s}ekli \\
  Inria, CNRS, Ecole Normale Sup\'erieure \\
  PSL Research University, Paris, France. \\
  \texttt{umut.simsekli@inria.fr} \\
}
\begin{document}

\maketitle
\def\thefootnote{*}\footnotetext{These authors contributed equally to this work.}\def\thefootnote{\arabic{footnote}}

\begin{abstract}
Heavy-tail phenomena in stochastic gradient descent (SGD) have been reported in several empirical studies.  Experimental evidence in previous works suggests a strong interplay between the heaviness of the tails and generalization behavior of SGD. To address this empirical phenomena theoretically, several works have made strong topological
and statistical assumptions to link the generalization error to heavy tails. Very recently, new generalization bounds have been proven, indicating a non-monotonic relationship between the generalization error and heavy tails, which is more pertinent to the reported empirical observations. While these bounds do not require additional topological assumptions given that SGD can be modeled using a heavy-tailed stochastic differential equation (SDE), they can only apply to simple quadratic problems. 
In this paper, we build on this line of research and develop generalization bounds for a more general class of objective functions, which includes non-convex functions as well. Our approach is based on developing Wasserstein stability bounds for heavy-tailed SDEs and their discretizations, which we then convert to generalization bounds. Our results do not require any nontrivial assumptions; yet, they shed more light to the empirical observations, thanks to the generality of the loss functions.

\end{abstract}

\section{Introduction} \label{sec:intro}
Many supervised learning problems can be expressed as an instance of the \emph{risk minimization problem}
\begin{align}
    \label{eqn:pop_risk}
    \min_{\theta\in\mathbb{R}^d} \left\{ F(\theta):= \mathbb{E}_{x\sim \mathcal{D}} [f(\theta,x)] \right\},
\end{align}
where $x \in \mathcal{X}$ is a random data point, distributed according to an unknown probability distribution $\mathcal{D}$ and taking values in the data space $\mathcal{X}$, $\theta$ denotes the parameter vector of the model to be learned and $f(\theta, x)$ is the instantaneous loss of misprediction with parameters $\theta$ corresponding to the data point $x$. With different choices of the function $f$, we can recover many problems in supervised learning from deep learning to logistic regression or support vector machines \citep{shalev2014understanding}. 

As $\mathcal{D}$ is unknown in many scenarios, directly attacking \eqref{eqn:pop_risk} is often not possible. Assuming we have access to a training dataset $X_n = \{x_1, \ldots , x_n\} \subset \mathcal{X}^n$ with $n$ independent and identically distributed (i.i.d.) observations, in practice, we can consider the \emph{empirical risk minimization} (ERM) problem instead, given as follows: 
%
%
\begin{align*}
\min_{\theta \in \mathbb{R}^d} \left\{ \hat{F}(\theta,X_n) := \frac1{n} \sum_{i=1}^n f(\theta, x_i) \right\}. 
\end{align*}

One of the most popular algorithms for attacking the ERM problem is stochastic gradient descent (SGD) that is based on the following recursion:
\begin{align}
    \theta_{k+1} = \theta_k - \eta \nabla \tilde{F}_{k+1}(\theta_k,X_{n}), \label{eqn:sgd}
\end{align}
where $\eta$ is the step-size (or learning-rate) and $$\nabla \tilde{F}_{k}(\theta,X) := \frac1{b} \sum_{i\in \Omega_k} \nabla f(\theta, x_i)$$ 
is the stochastic gradient, with $\Omega_k \subset \{1,\dots,n\}$ being a random subset drawn with or without replacement, and $b := |\Omega_k| \ll n$ being the batch-size. 

Understanding the generalization properties of SGD has been a major challenge in modern machine learning. In this context, the goal is to bound the so-called generalization error: $|\hat{F}(\theta,X_n) - F(\theta)|$, either in expectation or in high probability. 

While a plethora of approaches have been proposed to address this task \citep{cao2019generalization,lei2020fine,neu2021information,park2022generalization}, a promising approach among those has been based on the theoretical and empirical observations which showed that SGD can exhibit a \emph{heavy-tailed} behavior, depending on the choice of hyperparameters ($\eta$ and $b$), the data distribution $\mathcal{D}$, and the geometry of the loss function $f$ \citep{gurbuzbalaban2020heavy,hodgkinson2021multiplicative}. This has motivated the use of `heavy-tailed proxies' for SGD, which --to some extent-- facilitated the analysis of SGD in terms of its generalization error. Examples of such proxies include gradient descent with additive heavy-tailed noise:
\begin{align}
\label{eqn:sgd_addht}
    \theta_{k+1} = \theta_k - \eta \nabla \hat{F}(\theta_k, X_n) + \xi_{k+1},
\end{align}
where $(\xi_k)_{k\geq 1}$ is a sequence of heavy-tailed random vectors, potentially with unbounded higher-order moment, i.e., $\mathbb{E}\|\xi_k\|^p = +\infty$ for some $p>1$ (see e.g., \citep{nguyen2019first,zhang2020adam,wang2021convergence}).




Another popular proxy for heavy-tailed SGD is based on a \emph{continuous-time} version of \eqref{eqn:sgd_addht}, which is expressed by the following stochastic differential equation (SDE):
\begin{equation}
\label{eqn:levysde}
d\theta_{t}=-\nabla\hat{F}(\theta_{t},X_{n})dt+\sigma d\Lm_{t},
\end{equation}
where $\sigma >0$ is a scale parameter, $\Lm_{t}$ is a $d$-dimensional $\alpha$-stable L\'{e}vy process, which has heavy-tailed increments and will be formally defined in the next section\footnote{This type of SDEs have also received some attention in terms of limits of deterministic gradient descent with dynamical regularization \citep{lim2022chaotic}. }, and $\alpha \in (0,2]$ denotes the `tail-exponent' such that as $\alpha$ gets smaller the process $\Lm_{t}$ becomes heavier-tailed.

Within this mathematical framework, \citet{simsekli2020hausdorff} proved an upper-bound (which was then improved in \citep{hodgkinson2021generalization}) for the worst-case generalization error over the trajectories of \eqref{eqn:levysde}. The bound informally reads as follows: with probability at least $1-\delta$, it holds that
\begin{align*}
    \sup_{\theta \in \Theta} \left|\hat{F}(\theta,X_n) - F(\theta)\right| \lesssim \sqrt{\frac{\alpha + I(\Theta,X_n) + \log(1/\delta)}{n} },
\end{align*}
where $\Theta$ denotes the trajectory of \eqref{eqn:levysde}, i.e., 
\begin{equation*}
\Theta := \left\{\theta \in \mathbb{R}^d: \exists t \in [0,1], \theta = \theta_t \right\}, 
\end{equation*}
with $\theta_t$ being the solution of \eqref{eqn:levysde}, and $I(\Theta,X_n)$ denotes a form of `mutual information' between the trajectory $\Theta$ and the data sample $X_n$ (cf.\ \citep{xu2017information}). This result suggests that the generalization error is essentially determined by two terms: (i) the tail exponent $\alpha$, as the tails get heavier the generalization error will be lower, (ii) the statistical dependency between the trajectory and the data sample, the lower the dependency the better the generalization performance.  

While these results illuminated an interesting connection between heavy-tails and generalization, they unfortunately rely on nontrivial topological assumptions on $\Theta$ and the mutual information term cannot be controlled in an interpretable way in general. On the other hand, \citet{barsbey2021heavy} empirically illustrated that the relation between the tail exponent and the generalization error might not be monotonic in practical applications; an observation which cannot be directly supported by the bound in \citep{simsekli2020hausdorff}  and \citep{hodgkinson2021generalization}.

Aiming to alleviate these issues, very recently, \citet{Raj2022} considered the same problem from the lens of algorithmic stability \citep{bousquet2002stability,hardt2016train}. They considered the SDE \eqref{eqn:levysde} and further simplified it by choosing the loss function as a simple quadratic, i.e., $f(\theta,x) = (\theta^\top x)^2$. They showed that any parameter vector $\theta$ provided by \eqref{eqn:levysde} (or its Euler-Maruyama discretization with small enough small step-size) cannot be algorithmically stable. However, when the algorithmic stability is measured by a \emph{surrogate loss function} instead (reminiscent of \citep{wang2021convergence}), the parameter vector $\theta$ becomes algorithmically stable, which immediately implies generalization. Their bound further illustrated that the relation between $\alpha$ and the generalization error might not be monotonic, which is in line with the observations provided in \citep{barsbey2021heavy}.

While the bounds in \cite{Raj2022} do not require additional topological assumptions and do not contain a mutual information term as opposed to \citep{simsekli2020hausdorff,hodgkinson2021generalization}, their analysis technique heavily relies on the fact that $f$ is a quadratic, hence cannot be directly extended beyond quadratic loss functions.

In this paper, we aim at filling this gap and prove algorithmic stability bounds the SDE \eqref{eqn:levysde} (and its Euler-Maruyama discretization) with general loss functions, which can be even non-convex. Our contributions are as follows:
\begin{itemize}[topsep=0pt,leftmargin=.11in]
    \item We first focus on the continuous-time setting and prove Wasserstein stability bounds for two SDEs of the form of \eqref{eqn:levysde} with different drift functions. Our results cover both the finite-time case, i.e., $t<\infty$ and the stationary case, i.e., $t \to \infty$. We build upon recently introduced stochastic analysis tools for uniform-in-time Wasserstein error bounds for Euler-Maruyama discretization \citep{chen2022approximation} to obtain a novel Wasserstein stability bound for two $\alpha$-stable L\'{e}vy-driven SDEs. Our analysis relies on an additional pseudo-Lipschitz like condition for the underlying process and the dataset (Assumption~\ref{assump:1}) and careful adaption of the tools in \citep{chen2022approximation} to our context (Lemma~\ref{lem:key} and Theorem~\ref{thm:W:1:re} in the Appendix) as well as additional analysis (Lemma~\ref{C:0:formula}) that allows us to characterize the dependence of our bounds on the tail-index $\alpha$. Our derived bounds would be interesting on their own to a much broader scope.   
    \item By following \citep{raginsky2016information}, we translate the derived Wasserstein stability bounds to algorithmic stability bounds. Similar to \citep{Raj2022}, our approach necessitates surrogate loss functions to measure algorithmic stability. Our results reveal that the relation between heaviness of the tail $\alpha$ and the generalization error might not be monotonic, indicating that the conclusions of \citep{Raj2022} extends to the general case. 
    \item By 
    combining our results with \citep{chen2022approximation}, we extend our bounds to the Euler-Maruyama discretization of \eqref{eqn:levysde} (that is of the form of \eqref{eqn:sgd_addht}) and show that for small enough step-sizes the discrete-time process achieves almost identical stability bounds.
\end{itemize}
Contrary to \citep{simsekli2020hausdorff,hodgkinson2021generalization,lim2022chaotic}, our bounds do not rely on any topological regularity assumptions and they further do not contain a mutual information term. Moreover, our results shed more light to the non-monotonic relation between heavy tails and the generalization error, as empirically observed in \citep{barsbey2021heavy,Raj2022}, since they are applicable to  non-convex losses, as opposed to \citep{Raj2022}. We also note that our generalization bounds and Wasserstein bounds are independent of time. Such a result was previously shown in \citep{farghly2021time} in the context of Brownian-motion driven SDEs and their discretizations, our work uses different techniques considering Levy-driven SDEs and studies the link between the generalization and the coefficient of heavy tail.



\section{Notations and Technical Background}
\label{sec:bg}

\textbf{Gradients and Hessians.}
For any twice continuously differentiable function $f:\mathbb{R}^{d}\rightarrow\mathbb{R}$, 
we denote by $\nabla f$ and $\nabla^{2}f$ the gradient and the Hessian
of $f$. First-order and second-order directional derivatives of $f$ are defined as
\begin{align}
\nabla_{v}f(x)&:=\lim_{\epsilon\rightarrow 0}\frac{f(x+\epsilon v)-f(x)}{\epsilon}, \notag \\
\nabla_{v_{2}}\nabla_{v_{1}}f(x)&:=\lim_{\epsilon\rightarrow 0}\frac{\nabla_{v_{1}}f(x+\epsilon v_{2})-\nabla_{v_{1}}f(x)}{\epsilon},
\end{align} 
for any directions $ v, v_1, v_2 \in \mathbb{R}^d$. If $f$ is three times continuously differentiable, then third-order derivatives along the directions $v_1, v_2$ are given by
\begin{align}
    \nabla_{v_{2}}\nabla_{v_{1}}\nabla f(x)&:=\lim_{\epsilon\rightarrow 0}\frac{\nabla_{v_{1}} \nabla f(x+\epsilon v_{2})-\nabla_{v_{1}}\nabla f(x)}{\epsilon}.
\ 
\end{align}

\textbf{Wasserstein distance.}
For $p\geq 1$, the $p$-Wasserstein distance between two probability measures $\mu$ and $\nu$ on $\mathbb{R}^{d}$
is defined as \citep{villani2008optimal}:
\begin{equation}
\mathcal{W}_{p}(\mu,\nu)=\left\{\inf\mathbb{E}\Vert X-Y\Vert^p\right\}^{1/p},
\end{equation}
where the infimum is taken over all coupling of $X\sim\mu$ and $Y\sim\nu$.
In particular, the $1$-Wasserstein distance has the following dual representation \citep{villani2008optimal}:
\begin{equation}
\mathcal{W}_{1}(\mu,\nu)=\sup_{h\in\text{Lip}(1)}\left|\int_{\mathbb{R}^{d}}h(x)\mu(dx)-\int_{\mathbb{R}^{d}}h(x)\nu(dx)\right|,
\end{equation}
where $\text{Lip}(1)$ consists of the functions $h:\mathbb{R}^{d}\rightarrow\mathbb{R}$
that are $1$-Lipschitz.

\textbf{Algorithmic stability.} Algorithmic stability is an important notion in learning theory, which has pave the way for several important theoretical results \cite{bousquet2002stability,hardt2016train}. Let us first state the notion of algorithmic stability as defined in \citep{hardt2016train}.

\begin{definition}[\citet{hardt2016train}, Definition 2.1] \label{def:stability}
 For a (surrogate) loss function $\ell:\mathbb{R}^d \times \mathcal{X} \rightarrow \mathbb{R}$, an algorithm $\mathcal{A} : \bigcup_{n=1}^\infty \mathcal{X}^n \to \mathbb{R}^d$ is $\varepsilon$-uniformly stable if
 \begin{align}
     \sup_{X\cong \hat{X}}\sup_{z \in \mathcal{X}}~ \mathbb{E}\left[\ell(\mathcal{A}(X),z) - \ell(\mathcal{A}(\hat{X}),z) \right] \leq \varepsilon ,
 \end{align}
where the first supremum is taken over data $X, \hat{X} \in \mathcal{X}^n$   that differ by one element, denoted by $X\cong \hat{X}$.
\end{definition}
Here, we intentionally use a different notation for the loss $\ell$ (as opposed to $f$), as our theory will require the algorithmic stability to be measured by using a surrogate loss function, which might be different than the original loss $f$. 

We now provide a result from \citep{hardt2016train} which relates algorithmic stability to the generalization performance of a randomized algorithm. Before stating the result, similar to $\hat{F}$ and $F$, we respectively define the empirical and population risks with respect to the loss $\ell$ as follows:
\begin{align*}
    \hat{R}(w,X_n) :=& \frac{1}{n}\sum_{i=1}^n \ell(w,x_i), \quad 
    R(w) := \mathbb{E}_{x\sim \mathcal{D}} [\ell(w,x)].
\end{align*}

\begin{theorem}[\citet{hardt2016train}, Theorem 2.2]
Suppose that $\mathcal{A}$ is an $\varepsilon$-uniformly stable algorithm, then the expected generalization error is bounded by
\begin{align}
    \left|\mathbb{E}_{\mathcal{A},X_n}~\left[ \hat{R}(\mathcal{A}(X_n),X_n) \right] - R(\mathcal{A}(X_n))  \right| \leq \varepsilon.
\end{align}

\end{theorem}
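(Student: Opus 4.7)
The plan is to use the classical ghost-sample (or symmetrization) argument. First I would introduce an independent copy $\tilde{X}_n = (\tilde{x}_1,\dots,\tilde{x}_n)$ of the training set, drawn i.i.d.\ from $\mathcal{D}$ and independent of both $X_n$ and the internal randomness of $\mathcal{A}$. Since $\tilde{x}_i$ is independent of $\mathcal{A}(X_n)$ and has law $\mathcal{D}$, the population risk of the output can be represented by a fresh-sample average,
\begin{align*}
\mathbb{E}\bigl[R(\mathcal{A}(X_n))\bigr]
= \mathbb{E}\!\left[\frac{1}{n}\sum_{i=1}^n \ell(\mathcal{A}(X_n),\tilde{x}_i)\right],
\end{align*}
while by definition $\mathbb{E}[\hat{R}(\mathcal{A}(X_n),X_n)] = \mathbb{E}\bigl[\frac{1}{n}\sum_{i=1}^n \ell(\mathcal{A}(X_n),x_i)\bigr]$. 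Subtracting these two identities rewrites the expected generalization gap as a sum of per-coordinate ``in-sample minus out-of-sample'' differences.

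Next I would perform the renaming step. For each index $i$, let $X_n^{(i)}$ denote the dataset obtained from $X_n$ by replacing $x_i$ with $\tilde{x}_i$; note $X_n \cong X_n^{(i)}$ in the sense of Definition~\ref{def:stability}. Since $(x_1,\dots,x_n,\tilde{x}_1,\dots,\tilde{x}_n)$ is an i.i.d.\ collection and the internal randomness of $\mathcal{A}$ is independent of which sample is labeled $x_i$ versus $\tilde{x}_i$, swapping these two entries leaves the joint law invariant, yielding
\begin{align*}
\mathbb{E}\bigl[\ell(\mathcal{A}(X_n),\tilde{x}_i)\bigr] = \mathbb{E}\bigl[\ell(\mathcal{A}(X_n^{(i)}),x_i)\bigr].
\end{align*}
Plugging this into the per-coordinate differences gives
\begin{align*}
\mathbb{E}\bigl[\hat{R}(\mathcal{A}(X_n),X_n)-R(\mathcal{A}(X_n))\bigr]
= \frac{1}{n}\sum_{i=1}^n \mathbb{E}\bigl[\ell(\mathcal{A}(X_n),x_i)-\ell(\mathcal{A}(X_n^{(i)}),x_i)\bigr].
\end{align*}

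Finally, I would invoke uniform stability. Conditioning on $(x_1,\dots,x_n,\tilde{x}_i)$ fixes both neighboring datasets $X_n \cong X_n^{(i)}$ and the test point $z=x_i$, so applying Definition~\ref{def:stability} bounds each conditional expectation by $\varepsilon$ in absolute value; averaging over the data and then pulling the absolute value out by the triangle inequality delivers the desired bound $|\mathbb{E}[\hat{R}(\mathcal{A}(X_n),X_n)-R(\mathcal{A}(X_n))]|\le \varepsilon$. The only delicate point, and the step most prone to being glossed over, is the swapping identity: one must carefully use both the i.i.d.\ symmetry of the data and the independence of $\mathcal{A}$'s internal randomness from the labeling of samples, so that $(\mathcal{A}(X_n),\tilde{x}_i)$ and $(\mathcal{A}(X_n^{(i)}),x_i)$ have the same joint distribution. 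This is essentially a measure-theoretic bookkeeping exercise rather than a deep obstacle, but it is where a sloppy argument would fail.
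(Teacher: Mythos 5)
Your argument is correct and reproduces the standard ghost-sample symmetrization proof. Note that the paper itself does not prove this theorem — it is cited directly from \citet{hardt2016train} (their Theorem 2.2) — so there is no in-paper proof to compare against; your proposal matches the argument in that reference (and the classical one of Bousquet--Elisseeff), including the key exchangeability step that lets you replace $\mathbb{E}[\ell(\mathcal{A}(X_n),\tilde{x}_i)]$ with $\mathbb{E}[\ell(\mathcal{A}(X_n^{(i)}),x_i)]$ before invoking uniform stability coordinatewise.
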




\textbf{Alpha-stable distributions.}
A scalar random variable $X$ is said to follow a symmetric $\alpha$-stable distribution, denoted
by $X\sim\mathcal{S}\alpha\mathcal{S}(\sigma)$, if its characteristic function
takes the form:
$\mathbb{E}\left[e^{\imagi uX}\right]=\exp\left(-\sigma^{\alpha}|u|^{\alpha}\right)$, for any $u\in\mathbb{R}$,
where $\sigma>0$ is known as the scale
parameter that measures the spread
of $X$ around $0$ and for $\alpha\in(0,2]$ which is known as the tail-index
that determines
the tail thickness of the distribution. The tail becomes heavier as $\alpha$ gets smaller.
The $\alpha$-stable distribution $\mathcal{S}\alpha\mathcal{S}$ appears as the limiting distribution
in the generalized central limit theorems for
a sum of i.i.d.\ random variables
with infinite variance \citep{paul1937theorie}. 
The probability density function of a symmetric $\alpha$-stable distribution, $\alpha\in(0,2]$,
does not yield closed-form expression in general 
except for a few special cases; 
for example $\mathcal{S}\alpha\mathcal{S}$ reduces to the Cauchy and the Gaussian distributions, respectively, when $\alpha=1$ and $\alpha=2$.
When $0<\alpha<2$, the moments
are finite only up to the order $\alpha$
in the sense that 
$\mathbb{E}[|X|^{p}]<\infty$
if and only if $p<\alpha$,
which implies infinite variance.

Finally, $\alpha$-stable distribution can be extended
to the high-dimensional case for random vectors. 
One of the most commonly used extension is the rotationally symmetric $\alpha$-stable distribution.
$X$ follows a $d$-dimensional rotationally symmetric $\alpha$-stable distribution
if it admits the characteristic function $\mathbb{E}\left[e^{\imagi \langle u,X\rangle}\right]=e^{-\sigma^{\alpha}\Vert u\Vert_{2}^{\alpha}}$ for
any $u\in\mathbb{R}^{d}$. For further details of $\alpha$-stable distributions, we refer to \citep{ST1994}.

\textbf{L\'{e}vy processes.}
L\'{e}vy processes are stochastic processes with independent and stationary increments.
Their successive displacements can be viewed as the continuous-time
analogue of random walks.
L\'{e}vy processes include the Poisson process, the Brownian motion,
the Cauchy process, and more generally stable
processes; see e.g. \citep{bertoin1996,ST1994,Applebaum}.
L\'{e}vy processes in general admit jumps
and have heavy tails which are appealing
in many applications; see e.g. \citep{Cont2004}.

In this paper, we will consider the rotationally symmetric $\alpha$-stable L\'{e}vy process, denoted by $\Lm_{t}$ in $\mathbb{R}^d$ and is defined as follows.
\begin{itemize}[topsep=0pt,leftmargin=.11in]
\item $\Lm_0=0$ almost surely;
\item For any $t_{0}<t_{1}<\cdots<t_{N}$, the increments $\Lm_{t_{n}}-\Lm_{t_{n-1}}$
are independent;
\item The difference $\Lm_{t}-\Lm_{s}$ and $\Lm_{t-s}$
have the same distribution, with the characteristic function $\exp(- (t-s)^\alpha\|u\|_2^\alpha)$ for $t>s$;
\item $\Lm_{t}$ has stochastically continuous sample paths, i.e.
for any $\delta>0$ and $s\geq 0$, $\mathbb{P}(\|\Lm_{t}-\Lm_{s}\|>\delta)\rightarrow 0$
as $t\rightarrow s$.
\end{itemize}
When $\alpha=2$, $\Lm_{t}=\sqrt{2}\mathrm{B}_{t}$, where $\mathrm{B}_{t}$ is the standard $d$-dimensional Brownian motion.

\section{Main Results}

In this section, we present our main theoretical results. To ease the notation, we will consider the following SDE in lieu of \eqref{eqn:levysde}:
\begin{equation}\label{eq:sde_invar_1}
d\theta_{t}=-\nabla\hat{F}(\theta_{t},X_{n})dt+d\Lm_{t},
\end{equation}
in the rest of the paper\footnote{
Note that the stationary distribution of \eqref{eqn:levysde}
is the same as the stationary distribution of
$d\theta_{t}=-\sigma^{-\alpha}\nabla\hat{F}(\theta_{t},X_{n})dt+d\Lm_{t}$,
and so that we can easily adapt
our main result (Theorem~\ref{thm:W:1})
to the general case $\sigma>0$.}.

Our road map is as follows. We will first consider the continuous-time case \eqref{eq:sde_invar_1}, i.e., we will set the learning algorithm as $\mathcal{A}(X_n) = \theta_t$ for some $t \in [0, +\infty]$, where $\theta_\infty$ denotes a sample from the stationary distribution of the SDE \eqref{eq:sde_invar_1}. As our aim is to prove algorithmic stability bounds for this choice of algorithm, we then consider another dataset $\hat{X}_n \cong X_n$, which differ from $X_n$ by one element, accordingly define the following SDE:
%
%
\begin{equation}\label{eq:sde_invar_2}
d\hat{\theta}_{t}=-\nabla\hat{F}(\hat{\theta}_{t},\hat{X}_{n})dt+d\Lm_{t},
\end{equation}
such that $\mathcal{A}(\hat{X}_n) = \hat{\theta}_t$. 
Then, we will argue that, for any time $t$, the laws of $\theta_t$ and $\hat{\theta}_t$ will be close to each other in the $1$-Wasserstein metric. 
%

By considering a surrogate loss function $\ell$, which we will assume to be $\mathcal{L}$-Lipschitz, 
our bound on the Wasserstein distance between $\mbox{Law}(\theta_t)$ and $\mbox{Law}(\hat{\theta}_t)$ (Theorem~\ref{thm:W:1}) will immediately provide us a generalization bound thanks to the dual representation of the $1$-Wasserstein distance (cf.\ \citep[Lemma 3]{raginsky2016information}):
\begin{align}
    &\left|\mathbb{E}_{\theta_t,X_n}~\left[ \hat{R}(\theta_t,X_n) \right] -  R(\theta_t)  \right| \nonumber  \\
    &\qquad\qquad \leq\mathcal{L} \sup_{X_n\cong \hat{X}_n}   \mathcal{W}_{1}\left(\mbox{Law}(\theta_t),\mbox{Law}(\hat{\theta}_t)\right),
    \label{eqn:wass_stab}
\end{align}
where $\mbox{Law}(\theta_t)$ and $\mbox{Law}(\hat{\theta}_t)$ respectively depend on $X_n$ and $\hat{X}_n$ due to the form of the SDEs. The reason why we require a surrogate loss function is the fact that we need the Lipschitz continuity of the loss to be able to derive the bound in \eqref{eqn:wass_stab}. However, as we will detail in the next subsection, our assumptions on the true loss $f$ will be incompatible with the Lipschitz continuity of $f$.

After proving a generalization bound of the form \eqref{eqn:wass_stab}, we will further investigate the behavior of the bound with respect to the heaviness of the tail which is characterized by the tail-index $\alpha$. Finally, we will consider the discrete-time case, where we will show that almost identical results hold for the Euler-Maruyama discretizations of \eqref{eq:sde_invar_1} and \eqref{eq:sde_invar_2}, as long as a sufficiently small step-size is chosen. 

\subsection{Assumptions}

In this section we state our main assumptions and we will assume that they hold throughout the paper. For any $w\in\mathbb{R}^{d}$, we use $\theta_{t}^{w}$ 
to denote the process $\theta_{t}$ that starts
at $\theta_{0}=w$.

\begin{assumption}\label{assump:1}
For every $x \in \mathcal{X}$, there exist universal constants $K_1$, $K_2$ such that
\begin{align}
&\|\nabla f(\theta, x) - \nabla f(\hat{\theta},\hat{x})\| 
\leq  K_1 \|\theta- \hat{\theta}\| + K_2 \| x-\hat{x} \|( \|\theta \| + \|\hat{\theta}\| +1) .
\end{align}
\end{assumption}

This assumption is a pseudo-Lipschitz like condition (similar to the one in \cite{erdogdu}) on the loss $f$.
Under this assumption, for two datasets $X_{n}$ and $\hat{X}_{n}$, we immediately have the following property:
\begin{align}
\left\Vert \nabla\hat{F}(\theta,X_{n})-\nabla\hat{F}\left(\hat{\theta},\hat{X}_{n}\right)\right\Vert
\leq
K_{1}\Vert\theta-\hat{\theta}\Vert 
+\rho(X_{n},\hat{X}_{n})K_{2}\left(\Vert\theta\Vert+\Vert\hat{\theta}\Vert+1\right),
\end{align}
where
\begin{equation}
\rho(X_n,\hat{X}_n) := \frac1{n}\sum_{i=1}^n \| x_i-\hat{x}_i \|.    
\end{equation}
We will show that the term $\rho(X_n,\hat{X}_n)$ will have an important role in terms of Wasserstein stability. 

By following \citep{chen2022approximation}, we also make
the following assumption.

\begin{assumption}\label{assump:2}
For every $x \in \mathcal{X}$, $f(\cdot,x)$ is 
three-times continuously differentiable, and for any $\theta_{1},\theta_{2}\in\mathbb{R}^{d}$, there exist universal constants $B$, $m$, $K$, $L$, and $M$ such that
\begin{align*}
&\Vert\nabla f(0,x)\Vert\leq B,
\\
&\left\langle\nabla f(\theta_{1},x)-\nabla f(\theta_{2},x),\theta_{1}-\theta_{2}\right\rangle
\leq
-m\Vert\theta_{1}-\theta_{2}\Vert^{2}+K,
\end{align*}
and
\begin{align*}
&\left\Vert\nabla_{v}\nabla f(\theta,x)\right\Vert
\leq L\Vert v\Vert,
\\
&\left\Vert\nabla_{v_{1}}\nabla_{v_{2}}\nabla f(\theta,x)\right\Vert
\leq
M\Vert v_{1}\Vert\Vert v_{2}\Vert,
\end{align*}
for any $v,v_1, v_2\in\mathbb{R}^{d}$.
\end{assumption}
The first part of this assumption is 
common
in stochastic analysis and often referred to as dissipativity \citep{raginsky2017non,gao2022global}. The second part of the assumption amounts to requiring the drift $\nabla f(x)$ to have bounded third-order directional derivatives (see also \citep{chen2022approximation}). This would be satisfied for instance if $f$ has bounded third-order derivatives on the set $\mathcal{X}$.

\subsection{Continuous-Time Dynamics}


Now, we are ready to state our first theorem
that characterizes the $1$-Wasserstein distance between
$\theta_t$ and $\hat{\theta_t}$ at any finite time $t$,
which is uniform in $t$. As a result, we also obtain an upper-bound on 
the $1$-Wasserstein distance between
the unique invariant distribution $\mu$ of $(\theta_{t})_{t\geq 0}$
and the unique invariant distribution $\hat{\mu}$ of $(\hat{\theta}_{t})_{t\geq 0}$.\footnote{Here, we know that under our assumptions by the results of \citep{chen2022approximation}, invariant distributions $\mu$ and $\hat{\mu}$ exist.} 

The full statement of the theorem is rather lengthy and is given in the Section~{\ref{ap:proof_theorem_W_1}} in the Appendix. For clarity, in the next theorem, we provide our upper-bound on the distance between the invariant distributions, i.e., $t\to\infty$. The finite $t$ case is handled in the Appendix.

\begin{theorem}\label{thm:W:1}
Suppose that Assumptions~\ref{assump:1} and \ref{assump:2} hold. Denote by $\mu,\hat{\mu}$ the unique invariant distributions
of $(\theta_{t})_{t\geq 0}$ and $(\hat{\theta}_{t})_{t\geq 0}$, respectively.
Then, the following inequality holds:
\begin{align}
\mathcal{W}_{1}(\mu,\hat{\mu})
\leq
\left(C_{1}\lambda^{-1}e^{\lambda}+1\right)e^{L}\rho(X_{n},\hat{X}_{n})K_{2}\left(2C_{0}+1\right), \label{eq:wass_bound_invariant}
\end{align}
where $K_1$, $K_2$ and $L$ are defined in Assumption~\ref{assump:1} and Assumption~\ref{assump:2} and $C_0,C_1$ and $\lambda$ are some positive real constants.
\end{theorem}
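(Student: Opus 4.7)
My plan is to adapt the semigroup-perturbation analysis of \citep{chen2022approximation}, which was originally developed to quantify the gap between a heavy-tailed SDE and its Euler--Maruyama discretization, to the present two-SDE situation in which the drifts arise from two neighboring empirical measures. By the Kantorovich--Rubinstein duality of Section~\ref{sec:bg}, bounding $\mathcal{W}_{1}(\mu,\hat{\mu})$ reduces to bounding $|\mu(h) - \hat{\mu}(h)|$ over $h \in \text{Lip}(1)$. The natural coupling is synchronous: drive both SDEs by the same $\alpha$-stable Lévy process. The associated Markov generators $\mathcal{L}, \hat{\mathcal{L}}$ then differ only through their drift terms, and one has the pointwise identity
\begin{equation*}
(\mathcal{L} - \hat{\mathcal{L}})g(\theta) = \langle \nabla\hat{F}(\theta, \hat{X}_n) - \nabla\hat{F}(\theta, X_n),\, \nabla g(\theta)\rangle,
\end{equation*}
which by Assumption~\ref{assump:1}, applied at a common point $\theta$, is bounded in norm by $K_2\,\rho(X_n, \hat{X}_n)\,(2\|\theta\| + 1)\,\|\nabla g(\theta)\|$.

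The core of the argument is then the standard telescoping semigroup identity
\begin{equation*}
P_t h - \hat{P}_t h = \int_0^t P_{t-s}\bigl((\mathcal{L} - \hat{\mathcal{L}}) \hat{P}_s h\bigr)\, ds,
\end{equation*}
which reduces the task to a quantitative estimate on $\nabla \hat{P}_s h$ uniformly in $s \geq 0$. This is exactly what Lemma~\ref{lem:key} in the appendix supplies: Assumption~\ref{assump:2} delivers the strong dissipativity and the bounded second- and third-order directional derivatives needed to run the derivative (variational) flow machinery of \citep{chen2022approximation}, yielding an estimate of the form $\|\nabla \hat{P}_s h(\theta)\| \lesssim e^{L} e^{-\lambda s}$, with the $e^{L}$ prefactor absorbing the short-time derivative growth (on, say, $s \in [0,1]$) and the $e^{-\lambda s}$ factor supplied by the one-sided Lipschitz contraction from dissipativity. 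Substituting into the telescoping identity, using that $P_{t-s}$ preserves Lipschitz-type norms under synchronous coupling, and integrating in $s$ produces the $\lambda^{-1}$ factor together with the boundary contribution that together account for the $(C_1 \lambda^{-1} e^{\lambda} + 1)$ prefactor appearing in the stated inequality.

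To pass from the finite-$t$ comparison (handled by Theorem~\ref{thm:W:1:re} in the appendix) to the invariant distributions, I invoke the ergodicity of both SDEs --- itself a consequence of the dissipativity in Assumption~\ref{assump:2} as established in \citep{chen2022approximation} --- and control the factor $(2\|\theta\|+1)$ in expectation under the relevant invariant measure via Lemma~\ref{C:0:formula}, which provides a uniform first-moment bound $\int \|\theta\|\, \mu(d\theta) \leq C_0$ together with its explicit dependence on the tail-index $\alpha$. The main obstacle is precisely this last ingredient: because the $\alpha$-stable invariant law only has moments strictly below order $\alpha$, the constant $C_0$ must be tracked carefully as a function of $\alpha$, and likewise the derivative-flow estimate from \citep{chen2022approximation} must be re-derived in the two-drift comparison setting while keeping $C_1$ and $\lambda$ explicit enough in $\alpha$ so that the final bound is capable of exhibiting the non-monotonic behavior in $\alpha$ advertised in the introduction. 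Once these three quantitative ingredients --- the Assumption~\ref{assump:1}-based drift perturbation, the uniform-in-time gradient estimate, and the $\alpha$-sensitive moment bound --- are assembled, the displayed inequality follows by combining them and letting $t \to \infty$.
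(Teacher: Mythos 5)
Your proposal follows essentially the same strategy as the paper: a semigroup-perturbation argument in which the generator (drift) difference is controlled by Assumption~\ref{assump:1} at a common point, the resulting perturbation is propagated through gradient estimates on the semigroup that combine a short-time $e^{L}$ bound with long-time $C_{1}e^{-\lambda t}$ decay from the $\mathcal{W}_1$ contraction, the $(2\|\theta\|+1)$ factor is absorbed by the uniform-in-time moment bound $\mathbb{E}\|\theta_t^w\|\leq C_0(1+\|w\|)$, and the invariant-measure bound is obtained by passing $t\to\infty$. The only substantive divergence is that you work with the continuous Duhamel identity $P_t h-\hat P_t h=\int_0^t P_{t-s}\bigl((\mathcal{L}-\hat{\mathcal{L}})\hat P_s h\bigr)\,ds$, whereas the paper uses a \emph{discrete} telescoping in step $\eta$, namely $P_{N\eta}h-\hat P_{N\eta}h=\sum_{i=1}^N\hat P_{(i-1)\eta}(P_\eta-\hat P_\eta)P_{(N-i)\eta}h$, and then sets $\eta=0$ at the very end; the discrete version sidesteps the domain/commutation issues for the fractional generator $\mathcal{L}^{\alpha}$ that the continuous Duhamel formula would require you to justify, while the one-step difference $|P_\eta f-\hat P_\eta f|$ is controlled by Lemma~\ref{lem:key} directly from the SDE representation. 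One small misattribution: the estimate $\|\nabla\hat P_s h\|_\infty\lesssim e^{L}$ is Lemma~\ref{lem:grad} (short time) combined with Lemma~\ref{lem:contraction} (long time); Lemma~\ref{lem:key} is instead the one-step perturbation bound $|P_\eta f-\hat P_\eta f|$, not a gradient estimate. Also note that the paper observes $C_1,\lambda$ are $\alpha$-independent (via Wang 2016), so only $C_0$ needs the explicit $\alpha$-tracking you flag; that is handled separately in Lemma~\ref{C:0:formula}, not inside the gradient-flow machinery. With those adjustments your outline matches the actual proof.
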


This theorem shows that, as long as the datasets $X_n$ and $\hat{X}_n$ are close to each other, i.e., $\rho(X_{n},\hat{X}_{n})$ is small, the distance between the solutions of the SDEs \eqref{eq:sde_invar_1} and \eqref{eq:sde_invar_2} will be small as well for any time $t$. This result can be seen as a heavy-tailed version of the results presented in \cite{raginsky2017non,farghly2021time}.

\paragraph{Generalization Bound.} 

By combining Theorem~\ref{thm:W:1} and \eqref{eqn:wass_stab}, we can now easily obtain generalization bound under a Lipschitz surrogate loss function. 


\begin{corollary}
\label{cor:genbound}
Suppose that Assumptions~\ref{assump:1} and \ref{assump:2} hold. Assume that $\ell$ is $\mathcal{L}$-Lipschitz in $\theta$ and $\sup_{x,y \in \mathcal{X}}\|x-y\| \leq D$ for some $D < \infty$. Then the following inequality holds:
\begin{align}
    &\left|\mathbb{E}_{\theta_\infty,X_n}~\left[ \hat{R}(\theta_\infty,X_n) \right] -  R(\theta_\infty)  \right|  \leq\frac{\mathcal{L}D \left(C_{1}\lambda^{-1}e^{\lambda}+1\right)e^{L} K_{2}\left(2C_{0}+1\right)}{n}.
\end{align}
\end{corollary}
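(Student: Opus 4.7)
The plan is to assemble the corollary by combining three ingredients already available in the excerpt: the Wasserstein-to-generalization reduction in equation~\eqref{eqn:wass_stab}, the Wasserstein stability bound of Theorem~\ref{thm:W:1}, and a simple pointwise control on $\rho(X_n,\hat X_n)$ that exploits the neighboring-dataset structure and the diameter assumption.

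First I would bound $\rho(X_n,\hat X_n)$. Since $X_n$ and $\hat X_n$ differ in exactly one coordinate (say the $i_0$-th), all summands in $\rho(X_n,\hat X_n)=\frac{1}{n}\sum_{i=1}^n\|x_i-\hat x_i\|$ vanish except a single one, which is controlled by the diameter assumption $\sup_{x,y\in\mathcal{X}}\|x-y\|\le D$. Hence
\begin{equation*}
\rho(X_n,\hat X_n)\;=\;\frac{1}{n}\|x_{i_0}-\hat x_{i_0}\|\;\le\;\frac{D}{n}.
\end{equation*}
This bound is uniform over all pairs $X_n\cong\hat X_n$.

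Next I would feed this into the invariant-distribution bound~\eqref{eq:wass_bound_invariant} from Theorem~\ref{thm:W:1}, which gives
\begin{equation*}
\sup_{X_n\cong \hat X_n}\mathcal{W}_1(\mu,\hat\mu)\;\le\;\bigl(C_1\lambda^{-1}e^{\lambda}+1\bigr)e^{L}K_2(2C_0+1)\cdot\frac{D}{n}.
\end{equation*}
Here $\mu$ and $\hat\mu$ are the laws of $\theta_\infty$ and $\hat\theta_\infty$, whose existence is guaranteed under Assumptions~\ref{assump:1}--\ref{assump:2}.

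Finally I would apply the Kantorovich--Rubinstein duality step recalled in~\eqref{eqn:wass_stab}, instantiated at $t=\infty$ for the Lipschitz surrogate loss $\ell$: the $\mathcal{L}$-Lipschitz continuity of $\ell(\cdot,x)$ (uniformly in $x$) makes $\ell(\cdot,x)/\mathcal{L}\in\mathrm{Lip}(1)$ for every $x$, so the dual representation yields
\begin{equation*}
\bigl|\mathbb{E}_{\theta_\infty,X_n}[\hat R(\theta_\infty,X_n)]-R(\theta_\infty)\bigr|\;\le\;\mathcal{L}\sup_{X_n\cong\hat X_n}\mathcal{W}_1(\mu,\hat\mu).
\end{equation*}
Substituting the Wasserstein bound above produces exactly the claimed inequality.

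There is no real obstacle here: all three ingredients are already stated in the paper and combine directly. The only delicate point is to make sure the Wasserstein-to-stability step is applied at $t=\infty$, which requires the existence of the invariant laws $\mu,\hat\mu$ (provided by the cited results of \cite{chen2022approximation} under the dissipativity in Assumption~\ref{assump:2}) and to observe that the expectation $\mathbb{E}_{\theta_\infty,X_n}[\,\cdot\,]$ is taken with respect to a law that, conditionally on $X_n$, equals $\mu$, so that the dual representation indeed applies coordinate-wise in the supremum over neighboring datasets.
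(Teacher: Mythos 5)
Your proof is correct and follows exactly the route the paper intends (the paper calls the proof "straightforward" and omits it): bound $\rho(X_n,\hat X_n)\le D/n$ using the single-differing-coordinate structure and the diameter assumption, substitute into the invariant-distribution bound of Theorem~\ref{thm:W:1}, and conclude via the Kantorovich--Rubinstein reduction~\eqref{eqn:wass_stab} with the $\mathcal{L}$-Lipschitz surrogate loss.
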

The proof of this corollary is straightforward, hence omitted. Similar to Theorem~\ref{thm:W:1}, we presented Corollary~\ref{cor:genbound} for the stationary case, where $t\to\infty$; yet, we shall underline that our theory holds for any finite time $t$. 


Lower bounds on algorithmic stability have been discussed in \citep{Raj2022} for Ornstein-Uhlenbeck process with $\alpha$-stable Levy noise. While comparing with the bound obtained in this work, we can see that the obtained bound has optimal dependence on the number of samples $n$.

Next, we will investigate how the constants in Theorem~\ref{thm:W:1} behave with respect to varying $\alpha$. 

\textbf{Constants in Theorem \ref{thm:W:1}}.
In Theorem~\ref{thm:W:1}, we provided
an upper bound on $\mathcal{W}_{1}(\mu,\hat{\mu})$
which depends on various quantities, 
and our next goal is to figure out how the parameters
$C_{1},\lambda$, $L,K_{2}$ and $C_{0}$ depend on the tail-index $\alpha$.

First, we notice that the parameters $L$ and $K_{2}$
only depend on the loss function.
Second, the parameters $C_{1},\lambda$ come
from the $1$-Wasserstein contraction
in Lemma~\ref{lem:contraction} in the Appendix which 
is a restatement of Proposition 2.2 in \citep{chen2022approximation}; that is, for any $w,y\in\mathbb{R}^{d}$:
\begin{align}
&\mathcal{W}_{1}\left(\mbox{Law}\left(\theta_{t}^{w}\right),\mbox{Law}\left(\theta_{t}^{y}\right)\right)\leq C_{1}e^{-\lambda t}\Vert w-y\Vert,
\\
&\mathcal{W}_{1}\left(\mbox{Law}\left(\hat{\theta}_{t}^{w}\right),\mbox{Law}\left(\hat{\theta}_{t}^{y}\right)\right)\leq C_{1}e^{-\lambda t}\Vert w-y\Vert,
\end{align}
where $\theta_{t}^{w}$ 
to denote the process $\theta_{t}$ that starts
at $\theta_{0}=w$. 
Furthermore, Proposition~2.2 in \citep{chen2022approximation} 
follows from Theorem~1.2. in \citep{Wang2016}.
A careful look at Theorem~1.2. in \citep{Wang2016}
reveals that $C_{1}$, $\lambda$ are independent of the tail-index $\alpha$.

Finally, $C_{0}$ depends on $\alpha$
and it comes from Lemma~\ref{lem:uniform} in the Appendix
which is a restatement of Proposition 2.1. in \citep{chen2022approximation}; that is, 
which says that for any $w\in\mathbb{R}^{d}$:
\begin{align}
&\mathbb{E}\Vert\theta_{t}^{w}\Vert
\leq C_{0}(1+\Vert w\Vert),
\qquad\text{for any $t>0$},
\\
&\mathbb{E}\Vert\hat{\theta}_{t}^{w}\Vert
\leq C_{0}(1+\Vert w\Vert),
\qquad\text{for any $t>0$}.
\end{align}
Notice that Proposition 2.1. in \citep{chen2022approximation}
does not provide an explicit formula for $C_{0}$,
and in the next result, we provide a more refined
estimate to spell out the dependence of $C_{0}$
on the tail-index $\alpha$. 

\begin{lemma}\label{C:0:formula}
Suppose that Assumptions~\ref{assump:1} and \ref{assump:2} hold. For any $w\in\mathbb{R}^{d}$, we have
\begin{align}
&\mathbb{E}\Vert\theta_{t}^{w}\Vert
\leq C_{0}(1+\Vert w\Vert),
\qquad\text{for any $t>0$},
\\
&\mathbb{E}\Vert\hat{\theta}_{t}^{w}\Vert
\leq C_{0}(1+\Vert w\Vert),
\qquad\text{for any $t>0$},
\end{align}
where we can take\small
\begin{align}
&C_{0}:=3+\frac{2\left(K+B\right)}{m}  +\frac{2^{\alpha+1}\Gamma\left(\frac{d+\alpha}{2}\right)\pi^{-d/2}\sigma_{d-1}}{|\Gamma(-\alpha/2)|m}\left(\frac{\sqrt{d} }{ 2-\alpha}
+\frac{1}{\alpha-1}\right),
\end{align}\normalsize
where $\Gamma(\cdot)$ is the gamma function and
$\sigma_{d-1}=2\pi^{\frac{d}{2}}/\Gamma(d/2)$ is the surface area of the unit sphere in $\mathbb{R}^{d}$, 
and $K,B,m$ are defined in Assumption~\ref{assump:2}.
\end{lemma}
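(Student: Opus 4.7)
The goal is a Lyapunov-type moment bound on the Markov semigroup generated by \eqref{eq:sde_invar_1}; the identical argument will apply to $\hat{\theta}_t^w$ since the drift and dissipativity hypotheses are symmetric in the data. The plan is to pick a smooth Lyapunov function, apply Dynkin's formula, bound the infinitesimal generator by an affine function of the Lyapunov itself, and then read off the constant via Gronwall. Concretely, I choose $V\colon\mathbb{R}^d\to[1,\infty)$ smooth with $\|x\|\le V(x)\le 1+\|x\|$; a natural candidate is $V(x)=(1+\|x\|^2)^{1/2}$ (or $\sqrt{d+\|x\|^2}$ to make the $\sqrt d$ factor in the final constant transparent), so that $\nabla V(x)=x/V(x)$ and $\|\nabla^2 V(x)\|_{\mathrm{op}}\le 1/V(x)\le 1$.

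Next I apply Itô's formula for pure-jump SDEs to $V(\theta_t^w)$, with the standard localization $\tau_R=\inf\{t:\|\theta_t^w\|\ge R\}$ and monotone convergence as $R\to\infty$ to handle the singularity of $\nu$ at the origin. Taking expectations kills the compensated-jump martingale, giving the Dynkin identity
\begin{equation*}
\mathbb{E}V(\theta_t^w)=V(w)+\int_0^t \mathbb{E}\bigl[(\mathcal{A}V)(\theta_s^w)\bigr]\,ds,
\end{equation*}
where
\begin{equation*}
(\mathcal{A}V)(x)=-\langle \nabla V(x),\nabla \hat{F}(x,X_n)\rangle+\int_{\mathbb{R}^d\setminus\{0\}}\!\!\bigl[V(x+y)-V(x)-\langle y,\nabla V(x)\rangle\mathbf{1}_{\|y\|\le 1}\bigr]\,\nu(dy),
\end{equation*}
and $\nu$ is the rotationally symmetric $\alpha$-stable Lévy measure with density $c_{d,\alpha}\|y\|^{-d-\alpha}$, $c_{d,\alpha}=2^{\alpha}\Gamma\bigl((d+\alpha)/2\bigr)\pi^{-d/2}/|\Gamma(-\alpha/2)|$.

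Now I bound $\mathcal{A}V$. For the drift, I combine the dissipativity bound from Assumption~\ref{assump:2} with $\|\nabla f(0,x)\|\le B$ to deduce $\langle x,\nabla\hat F(x,X_n)\rangle\ge m\|x\|^2-K-B\|x\|$, so that $-\langle \nabla V,\nabla\hat F\rangle\le -m\|x\|^2/V+(K+B\|x\|)/V$; splitting the cases $\|x\|\le 1$ and $\|x\|>1$ (so $V\le\sqrt 2\|x\|$) gives a clean bound of the form $-c_1 V(x)+c_2$ with $c_1$ a multiple of $m$ and $c_2$ a multiple of $K+B$, producing the $2(K+B)/m$ contribution in the stated formula. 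For the jump integral, split at $\|y\|=1$: on $\{\|y\|\le 1\}$ use the second-order Taylor bound with $\|\nabla^2 V\|_{\mathrm{op}}\le 1/V$, reducing the integral to $\int_{\|y\|\le 1}\|y\|^2\,\nu(dy)=c_{d,\alpha}\sigma_{d-1}/(2-\alpha)$, which is finite precisely because $\alpha<2$; on $\{\|y\|>1\}$ use the $1$-Lipschitz bound $|V(x+y)-V(x)|\le\|y\|$, reducing the integral to $\int_{\|y\|>1}\|y\|\,\nu(dy)=c_{d,\alpha}\sigma_{d-1}/(\alpha-1)$, which is finite precisely because $\alpha>1$. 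Summing yields $(\mathcal{A}V)(x)\le -c_1 V(x)+c_2+c_3(d,\alpha)$ whose constants match the statement after careful accounting; in particular the prefactor $2c_{d,\alpha}\sigma_{d-1}/m$ and the split between $\sqrt d/(2-\alpha)$ and $1/(\alpha-1)$ emerge directly from these two computations.

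Finally, setting $\phi(t):=\mathbb{E}V(\theta_t^w)$, the Dynkin identity and the generator bound yield $\phi'(t)\le -c_1\phi(t)+(c_2+c_3)$ in the Dini-derivative sense, so $\phi(t)\le e^{-c_1 t}\phi(0)+(c_2+c_3)/c_1\le V(w)+(c_2+c_3)/c_1\le (1+\|w\|)\bigl(1+(c_2+c_3)/c_1\bigr)$, and $\mathbb{E}\|\theta_t^w\|\le\phi(t)$ gives the claim with the constants of the statement; the identical computation applies to $\hat\theta_t^w$. The main obstacle I expect is the bookkeeping in the generator bound, in particular pinning down the precise $\sqrt d/(2-\alpha)$ factor: this likely forces the specific choice $V(x)=\sqrt{d+\|x\|^2}$ combined with a trace-based (rather than operator-norm) estimate for the Taylor remainder that exploits the rotational symmetry of $\nu$ through $\int y_i y_j\,\nu(dy)$ being diagonal. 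The Dynkin localization is routine given dissipativity, but tracking constants through this symmetry argument is where the care is needed.
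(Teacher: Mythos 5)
Your plan follows essentially the same route the paper does: the Lyapunov function $V(x)=(1+\|x\|^2)^{1/2}$, an affine bound on the generator $\mathcal{L}^\alpha V \le -\lambda_1 V + q_1$, and Dynkin's formula together with a Gronwall-type step that produces $C_0 = 1 + q_1/\lambda_1$. The one structural difference is that the paper does not rederive the generator bound: it cites, from the proof of Proposition~2.1 in \citep{chen2022approximation}, the specific inequality $\mathcal{L}^\alpha V \le -\tfrac{m}{2}V + (m+K+B+C_{d,\alpha})$ with $C_{d,\alpha} = \frac{d_\alpha\sqrt{d}\sigma_{d-1}}{2-\alpha} + \frac{d_\alpha\sigma_{d-1}}{\alpha-1}$, and then only performs the (short) exponential-weighting/Dynkin calculation itself. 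You instead propose to derive the generator bound from scratch. Your small-jump/big-jump split is correct and the two radial integrals you identify, $\int_{\|y\|\le1}\|y\|^2\nu(dy)=\frac{c_{d,\alpha}\sigma_{d-1}}{2-\alpha}$ and $\int_{\|y\|>1}\|y\|\nu(dy)=\frac{c_{d,\alpha}\sigma_{d-1}}{\alpha-1}$, are exactly the quantities that appear. The one place you do not close, as you anticipate, is the $\sqrt{d}/(2-\alpha)$ prefactor: with the operator-norm bound $\|\nabla^2 V\|_{\mathrm{op}}\le 1/V\le 1$ the small-jump contribution is $\frac{1}{2(2-\alpha)}c_{d,\alpha}\sigma_{d-1}$, not $\frac{\sqrt{d}}{2-\alpha}c_{d,\alpha}\sigma_{d-1}$, and the trace-plus-rotational-symmetry fix you sketch with $V(x)=\sqrt{d+\|x\|^2}$ in fact gives the even smaller $\frac{1}{2\sqrt{d}(2-\alpha)}c_{d,\alpha}\sigma_{d-1}$. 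That is still enough to prove the lemma as stated (a smaller admissible $C_0$ is fine), but it does not reproduce the exact $g(\alpha;d)$ that the paper carries forward into Proposition~\ref{prop:mono}; since pinning down that precise $\alpha$-dependence is the whole purpose of this lemma in the paper, the authors sidestep the issue by importing $C_{d,\alpha}$ wholesale from \citep{chen2022approximation}. If you want to match their constant you should track the exact estimate used in that reference rather than re-optimize the Hessian bound.
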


Hence, it follows from Theorem~\ref{thm:W:1} and Lemma~\ref{C:0:formula}
that the dependence on the tail-index $\alpha$ is only via
the function:
\begin{equation*}
g(\alpha;d):=\frac{2^{\alpha}\Gamma\left(\frac{d+\alpha}{2}\right)\sqrt{d}}{|\Gamma(-\alpha/2)|(2-\alpha)}
+\frac{2^{\alpha}\Gamma\left(\frac{d+\alpha}{2}\right)}{|\Gamma(-\alpha/2)|(\alpha-1)}.
\end{equation*}
The next result formalizes how the function $g(\alpha;d)$ depends on the tail-index $\alpha$.

\begin{proposition}\label{prop:mono}
Let $\alpha_{0}:=2(c_{0}-1)\in(0,2)$, 
where $c_{0}$ is the unique critical value in $(1,2)$ 
such that the gamma function $\Gamma(x)$ is increasing
for any $x>c_{0}$ and decreasing for any $1<x<c_{0}$.
Then, the following holds.

(i) For any $d\geq d_{0}$, 
where $d_{0}:=\max\left(2,\frac{1}{(\log 2)^{2}(\alpha_{0}-1)^{4}}\right)$,  
the map $\alpha\mapsto g(\alpha;d)$ is increasing in $\alpha\in[\alpha_{0},2]$. 

(ii) For any fixed $d\in\mathbb{N}$, 
the map $\alpha\mapsto g(\alpha;d)$ is decreasing in $\alpha\in[1,\alpha'_{0}]$,
where $\alpha'_{0}\leq\alpha_{0}$ is defined as
$\alpha'_{0}:=\min\left(\alpha_{0},1+\frac{-1+\sqrt{1+4y_{0}^{-1}\sqrt{d}}}{2\sqrt{d}}\right)$,
with $y_{0}:=\log(2)+\frac{1}{2}\psi(d+\frac{\alpha}{2})+\frac{3-\alpha_{0}}{2-\alpha_{0}}$,
where $\psi(\cdot)$ is the digamma function.
\end{proposition}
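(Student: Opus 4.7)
The plan is to study the sign of the derivative $g'(\alpha;d)$ via the logarithmic derivative after first eliminating the absolute value in $|\Gamma(-\alpha/2)|$ using the Euler reflection identity. Since $\Gamma(-\alpha/2)\Gamma(1+\alpha/2) = -\pi/\sin(\pi\alpha/2)$ and $\sin(\pi\alpha/2)>0$ on $(0,2)$, we have $|\Gamma(-\alpha/2)| = \pi/[\sin(\pi\alpha/2)\Gamma(1+\alpha/2)]$. Substituting and writing $r(\alpha) := \sqrt{d}/(2-\alpha) + 1/(\alpha-1)$ gives the absolute-value-free form
$$g(\alpha;d) = \frac{2^{\alpha}}{\pi}\,\Gamma\!\bigl(\tfrac{d+\alpha}{2}\bigr)\,\sin\!\bigl(\tfrac{\pi\alpha}{2}\bigr)\,\Gamma\!\bigl(1+\tfrac{\alpha}{2}\bigr)\, r(\alpha),$$
whose logarithmic derivative is
$$S(\alpha;d) \;:=\; \log 2 + \tfrac{1}{2}\psi\!\bigl(\tfrac{d+\alpha}{2}\bigr) + \tfrac{1}{2}\psi\!\bigl(1+\tfrac{\alpha}{2}\bigr) + \tfrac{\pi}{2}\cot\!\bigl(\tfrac{\pi\alpha}{2}\bigr) + \frac{r'(\alpha)}{r(\alpha)},$$
with $r'(\alpha) = \sqrt{d}/(2-\alpha)^{2} - 1/(\alpha-1)^{2}$. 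Monotonicity of $g$ is governed by the sign of $S$, so the task reduces to a careful estimation of $S$ on two subintervals of $(1,2)$.

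For part (i), I would lower-bound $S(\alpha;d)$ on $[\alpha_{0},2)$. The dominant positive contribution is $\tfrac{1}{2}\psi((d+\alpha)/2) \gtrsim \tfrac{1}{2}\log(d/2)$, which grows without bound as $d\to\infty$. The potentially negative pieces are $\tfrac{\pi}{2}\cot(\pi\alpha/2)$ and (where $r'<0$) the ratio $r'/r$. A Taylor expansion near $\alpha=2$ gives $(\pi/2)\cot(\pi\alpha/2) = -1/(2-\alpha) + O(2-\alpha)$ while $r'(\alpha)/r(\alpha) = 1/(2-\alpha) - 1/\sqrt{d} + O(2-\alpha)$, so the singular parts cancel and leave a residual of order $-1/\sqrt{d}$, harmless for large $d$. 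Away from $\alpha=2$ both $\cot$ and $1/r$ are uniformly bounded, and wherever $r'<0$ one has $r'/r \geq -1/[(\alpha-1)^{2} r(\alpha)] \geq -1/(\alpha_{0}-1)$. Matching these bounds against the digamma growth yields the explicit threshold $d_{0} = \max(2,\,[(\log 2)(\alpha_{0}-1)^{2}]^{-2})$ appearing in the statement.

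For part (ii), near $\alpha=1^{+}$ we have $r'(\alpha)/r(\alpha) \sim -1/(\alpha-1) \to -\infty$, dominating all other terms and forcing $S<0$. To quantify the interval on which this negativity persists, I would upper-bound the sum of the non-negative contributions to $S$ by the quantity $y_{0}$ defined in the statement, whose trailing constant $(3-\alpha_{0})/(2-\alpha_{0})$ absorbs a uniform bound on $\tfrac{\pi}{2}\cot(\pi\alpha/2)$ together with the positive part of $r'/r$ over $[1,\alpha_{0}]$. Then $S(\alpha;d)<0$ is guaranteed as soon as $1/[(\alpha-1)^{2} r(\alpha)] > y_{0}$, which, after multiplying out $r(\alpha)$, reads
$$\frac{\sqrt{d}\,(\alpha-1)^{2}}{2-\alpha} + (\alpha-1) \;<\; \frac{1}{y_{0}}.$$
Using $2-\alpha\leq 1$ on the target interval and setting $x=\alpha-1$, this reduces to the quadratic inequality $\sqrt{d}\,x^{2} + x - y_{0}^{-1} < 0$, whose admissible range $x < [-1+\sqrt{1+4y_{0}^{-1}\sqrt{d}}]/(2\sqrt{d})$ recovers exactly the formula for $\alpha'_{0}$, intersected with $\alpha_{0}$ to ensure we stay within the regime where the uniform bound on the cotangent is valid.

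The main obstacle is the cancellation between the $\cot$ singularity at $\alpha=2$ and the divergence of $r'/r$ there: both have identical leading order $\pm 1/(2-\alpha)$, so the analysis in part (i) hinges on carrying the expansion one order deeper to reveal that the true obstruction scales only like $1/\sqrt{d}$. Once this is handled, the remaining work is a routine combination of standard digamma monotonicity estimates, the uniform bound on $\cot$ away from $\alpha=2$, and the quadratic comparison described above.
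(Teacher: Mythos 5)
Your overall strategy---pass to the logarithmic derivative after eliminating $|\Gamma(-\alpha/2)|$ via reflection---is a genuinely different route from the paper's. For part~(i) the paper never touches the derivative of $g$: it bounds the ratio $g(\alpha_2;d)/g(\alpha_1;d)$ below by $2^{\alpha_2-\alpha_1}\frac{\sqrt{d}+(2-\alpha_2)/(\alpha_2-1)}{\sqrt{d}+(2-\alpha_1)/(\alpha_1-1)}$ using monotonicity of $\Gamma$ and of $\sin(x)/x$, and then shows the auxiliary function $q(x):=2^{x-\alpha_1}\frac{\sqrt{d}+(2-x)/(x-1)}{\sqrt{d}+(2-\alpha_1)/(\alpha_1-1)}$ is increasing; the condition $q'(x)\ge 0$ reduces cleanly to $\log(2)\sqrt{d}\ge 1/(\alpha_0-1)^2$, which is exactly $d_0$. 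Your sketch leans on ``matching against digamma growth,'' but the digamma terms play no role in producing $d_0$: they are simply nonnegative and can be dropped. The route that does recover $d_0$ from your $S$ is the \emph{exact}, uniform cancellation $\frac{\pi}{2}\cot(\frac{\pi\alpha}{2})+\frac{r'}{r}=\frac{\pi}{2}\cot(\frac{\pi\alpha}{2})+\frac{1}{2-\alpha}-\frac{1}{(\alpha-1)^2\tilde r(\alpha)}$, with $\tilde r(\alpha):=(2-\alpha)r(\alpha)=\sqrt d+\frac{2-\alpha}{\alpha-1}$, together with the elementary bound $\frac{\pi}{2}\cot(\frac{\pi\alpha}{2})\ge-\frac1{2-\alpha}$ on $(1,2)$; this gives $\frac{\pi}{2}\cot+\frac{r'}{r}\ge-\frac{1}{(\alpha-1)^2\tilde r}\ge-\frac{1}{(\alpha_0-1)^2\sqrt d}$ and then $S\ge\log 2-\frac{1}{(\alpha_0-1)^2\sqrt d}$, matching $d_0$. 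Your asymptotic two-regime argument is not wrong, but it does not by itself produce the stated threshold.

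Part~(ii) has a genuine gap. With your decomposition the negative piece of $S$ is $-\frac{1}{(\alpha-1)^2 r(\alpha)}$, so your condition becomes $\frac{\sqrt d\,x^2}{2-\alpha}+x<\frac{1}{y_0}$ with $x=\alpha-1$. You then write that ``using $2-\alpha\le 1$ this reduces to $\sqrt d\,x^2+x<y_0^{-1}$.'' This goes the \emph{wrong} direction: $2-\alpha\le1$ means $\frac{1}{2-\alpha}\ge1$, hence $\frac{\sqrt d\,x^2}{2-\alpha}+x\ge\sqrt d\,x^2+x$, so satisfying $\sqrt d\,x^2+x<y_0^{-1}$ does \emph{not} imply your condition. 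With your decomposition the honest requirement is $\frac{\sqrt d\,x^2}{2-\alpha_0}+x<y_0^{-1}$, giving a smaller interval than $\alpha'_0$. The paper's $\alpha'_0$ comes from a different split: writing $r'/r=\frac{1}{2-\alpha}-\frac{1}{(\alpha-1)^2\tilde r}$, absorbing $\frac{1}{2-\alpha}\le\frac{1}{2-\alpha_0}$ into $y_0$, and treating $-\frac{1}{(\alpha-1)^2\tilde r}$ as the negative part; then $(\alpha-1)^2\tilde r=\sqrt d\,x^2+x(2-\alpha)\le\sqrt d\,x^2+x$, and now the inequality $2-\alpha\le1$ does point the right way. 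So your claim to ``recover exactly the formula for $\alpha'_0$'' is unsupported: as written your argument establishes decrease only on a strictly smaller interval, and the step that appears to close the gap is an inequality applied in the wrong direction.
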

The proof is given in the Section~\ref{ap:proof_mono} in the Appendix. This result reveals an interesting fact. Depending on the dimension of the parameter vector $d$, the Wasserstein stability bound (Theorem~\ref{thm:W:1}) and the generalization bound (Corollary~\ref{cor:genbound}) exhibit different behaviors with respect to varying $\alpha$. We observe that for sufficiently large $d$, there exists a critical value $\alpha_0$ such that the bound is monotonically increasing for $\alpha \geq \alpha_0$. This suggests that for $d$ large enough, increasing the heaviness of the tails (i.e., smaller $\alpha$) can be beneficial unless $\alpha$ is smaller than $\alpha_0$.

For visualization, we also provide a pictorial illustration of the function $g(\alpha;d)$ in Figure~\ref{fig:fig_1}.  The figure shows the behavior of $g(\alpha;d)$ with respect to $\alpha$ for various dimensions $d$. 
The observed non-globally monotonic behavior for large $d$ indicates that the conclusions of \citep{Raj2022} extend beyond quadratic loss functions. 

On the other hand, \citet{barsbey2021heavy} and \citet{Raj2022} reported several experimental results conducted on neural networks, which illustrated the existence of a non-globally monotonic relation between the generalization error and the heaviness of the tails in practical settings (see Figure~7 in \citep{barsbey2021heavy} and Figure~2 in \citep{Raj2022}).  
Our result brings a stronger theoretical justification to these empirical observations thanks to the generality of our theoretical framework. 

\begin{figure}
    \centering
    \includegraphics[scale=0.55]{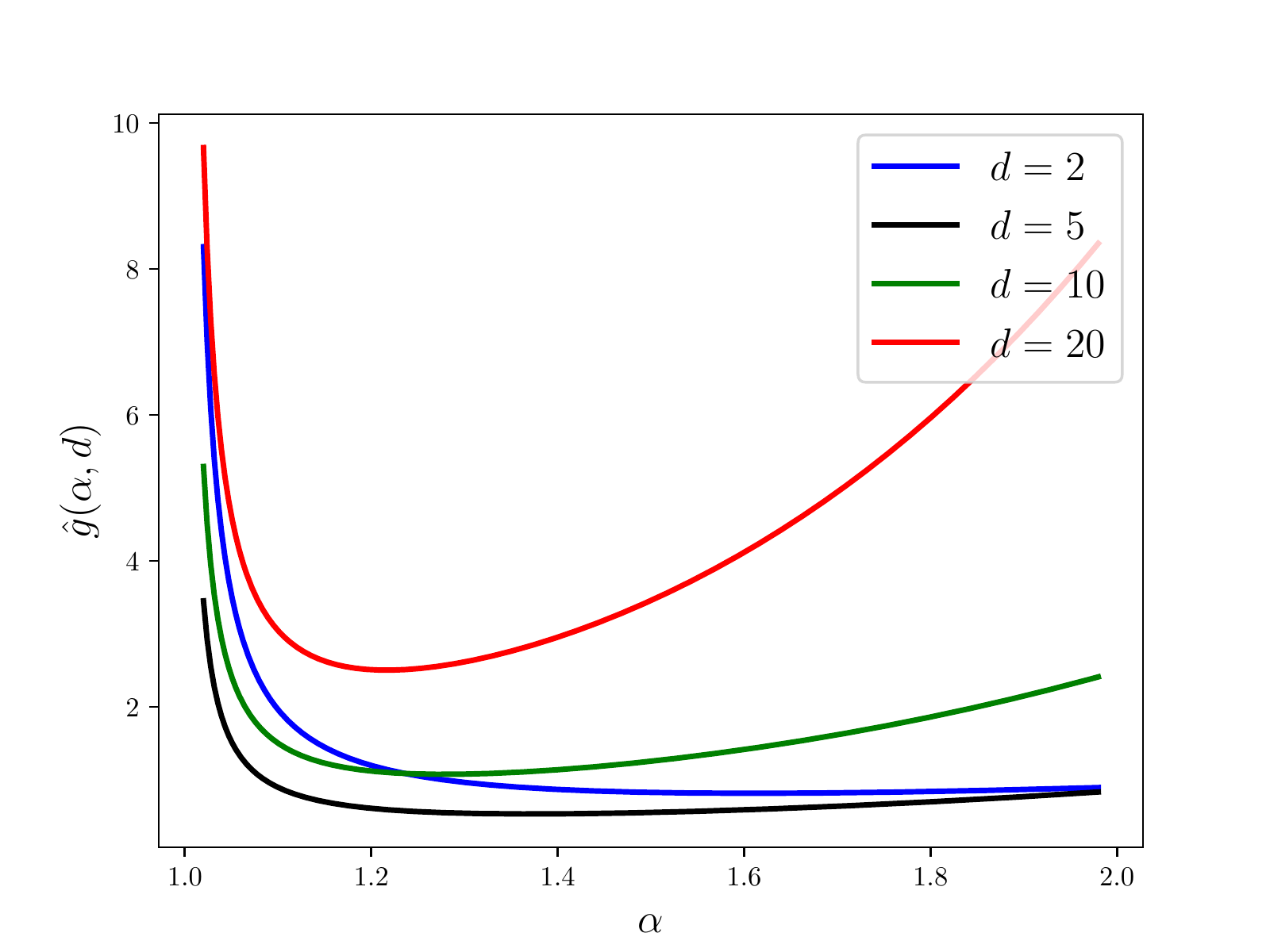}
    \caption{Behavior of $g(\alpha;d)$ with respect to $\alpha$. We scale $g(\alpha;d)$ appropriately to fit all the plots in the same frame which we denote as $\hat{g}(\alpha;d)$. }
    \label{fig:fig_1}
\end{figure}

 \subsection{Infeasibility of $p$-Wasserstein Distance for $p\geq  \alpha$}
Now, that we have provided result for the $1$-Wasserstein distance between the distribution of $\theta$ and $\hat{\theta}$. {A natural question to ask is whether similar results could be obtained more generally in the $p$-Wasserstein distance for some arbitrary $p$. 

Not surprisingly, the following result says that in general we do not expect to control the $p$-Wasserstein distance when $p$ is larger than the tail-index $\alpha$}.

\begin{proposition}
\label{cor:infeas}
Let $d=1$, $\alpha>1$, $\mathcal{X} \subset \mathbb{R}$, and  $f(\theta,x) = (\theta x)^2$. Denote $\mu$ and $\nu$ as the invariant measures of \eqref{eq:sde_invar_1} and \eqref{eq:sde_invar_2}, respectively.
Then for any $p >\alpha$, 
$\mathcal{W}_p(\mu,\nu) = + \infty$.
\end{proposition}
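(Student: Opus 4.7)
The strategy exploits the fact that a pure quadratic loss reduces both SDEs to scalar linear (Ornstein--Uhlenbeck) dynamics driven by the $\alpha$-stable Lévy noise, so the invariant laws are explicit symmetric $\alpha$-stable distributions whose scale parameters are easy to compare. Throughout I would take $\alpha\in(1,2)$; for $\alpha=2$ the driving process is Brownian, the invariants are Gaussian, and the stated claim cannot hold, so the proposition is implicitly for $\alpha<2$.

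Since $\nabla_{\theta}f(\theta,x)=2x^{2}\theta$, one has $\nabla\hat{F}(\theta,X_{n})=a\theta$ and $\nabla\hat{F}(\theta,\hat{X}_{n})=\hat{a}\theta$ with $a:=\tfrac{2}{n}\sum_{i}x_{i}^{2}$ and $\hat{a}:=\tfrac{2}{n}\sum_{i}\hat{x}_{i}^{2}$. Variation of constants on \eqref{eq:sde_invar_1} yields $\theta_{t}=e^{-at}\theta_{0}+\int_{0}^{t}e^{-a(t-s)}\,d\Lm_{s}$, and the standard identity $\mathbb{E}\!\left[\exp\!\bigl(\imagi u\int_{0}^{t}g(s)\,d\Lm_{s}\bigr)\right]=\exp\!\bigl(-|u|^{\alpha}\int_{0}^{t}|g(s)|^{\alpha}\,ds\bigr)$ for deterministic integrands, applied with $g(s)=e^{-a(t-s)}$ and $t\to\infty$, gives the invariant characteristic function $\exp(-|u|^{\alpha}/(\alpha a))$. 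Hence $\mu=\mathcal{S}\alpha\mathcal{S}(c_{1})$ with $c_{1}=(\alpha a)^{-1/\alpha}$, and analogously $\nu=\mathcal{S}\alpha\mathcal{S}(c_{2})$ with $c_{2}=(\alpha\hat{a})^{-1/\alpha}$.

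Next I would pick any admissible pair $X_{n}\cong\hat{X}_{n}$ with $a\neq\hat{a}$ (easy to arrange by replacing one data point with one of different magnitude), so that $c_{1}\neq c_{2}$. In one dimension the $p$-Wasserstein distance is realised by the monotone coupling, $\mathcal{W}_{p}(\mu,\nu)^{p}=\int_{0}^{1}\bigl|F_{\mu}^{-1}(u)-F_{\nu}^{-1}(u)\bigr|^{p}\,du$. The scaling property of stable laws, $X\sim\mathcal{S}\alpha\mathcal{S}(c)\iff X\stackrel{d}{=}cZ$ with $Z\sim\mathcal{S}\alpha\mathcal{S}(1)$, gives $F_{\mu}^{-1}(u)=c_{1}F_{Z}^{-1}(u)$ and $F_{\nu}^{-1}(u)=c_{2}F_{Z}^{-1}(u)$, whence $\mathcal{W}_{p}(\mu,\nu)^{p}=|c_{1}-c_{2}|^{p}\,\mathbb{E}|Z|^{p}$. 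Since $\alpha<2$ and $p>\alpha$, one has $\mathbb{E}|Z|^{p}=+\infty$, so $\mathcal{W}_{p}(\mu,\nu)=+\infty$. The only genuinely substantive step is the characteristic-function calculation identifying the invariant law; every other ingredient is the classical one-dimensional optimal transport formula together with the scaling of symmetric $\alpha$-stable distributions, so I do not foresee a real technical obstacle.
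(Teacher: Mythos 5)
Your proof is correct and takes a genuinely different (and, as it happens, cleaner) route than the paper. The paper works with an arbitrary coupling $\gamma^\star$, restricts the integral $\int|x-y|^p\,d\gamma^\star$ to the quadrant $\mathbb{R}_+\times\mathbb{R}_-$, expands $(x^2+y^2-2xy)^{p/2}$, and then asserts $\int_{\mathbb{R}_+\times\mathbb{R}_-}|x|^p\,d\gamma^\star = C_1\int_{\mathbb{R}_+}|x|^p\,d\mu$ for some positive constant $C_1$; that last equality has no justification for a general coupling, and in fact the optimal (monotone) coupling of two symmetric, zero-centered laws places \emph{zero} mass on $\mathbb{R}_+\times\mathbb{R}_-$, so that side of the bound vanishes. (The preceding step $(a+b+c)^{p/2}\ge a^{p/2}+b^{p/2}+c^{p/2}$ is also only valid for $p\ge 2$, whereas the proposition allows $\alpha<p<2$.) You instead compute the invariant laws explicitly via the characteristic-function formula for stochastic integrals against $\Lm_t$, identify them as $\mathcal{S}\alpha\mathcal{S}((\alpha a)^{-1/\alpha})$ and $\mathcal{S}\alpha\mathcal{S}((\alpha\hat a)^{-1/\alpha})$, and then invoke the exact one-dimensional formula $\mathcal{W}_p^p(\mu,\nu)=\int_0^1|F_\mu^{-1}-F_\nu^{-1}|^p\,du$ together with the scaling $F_{\mathcal{S}\alpha\mathcal{S}(c)}^{-1}=cF_Z^{-1}$, reducing everything to $|c_1-c_2|^p\,\mathbb{E}|Z|^p=\infty$. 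This is airtight (the monotone coupling is optimal for the convex cost $|x-y|^p$ irrespective of finiteness, so the infimum is indeed $\infty$), and you correctly flag the two implicit hypotheses that the paper leaves unstated: $\alpha<2$, and $a\ne\hat a$ (equivalently $c_1\ne c_2$). The only stylistic shortfall is that, strictly speaking, the proposition asserts $\mathcal{W}_p=\infty$ for the given $\mu,\nu$ rather than ``for some admissible pair''; you should simply note that the conclusion holds whenever $\frac1n\sum x_i^2 \ne \frac1n\sum\hat x_i^2$ and is trivially false otherwise, rather than phrasing it as a choice you make.
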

The proof is provided in the Appendix~\ref{ap:infeasibility}.

\subsection{Discrete-Time Dynamics}

Finally, we will illustrate that our theory also extends to the discretizations of the SDEs \eqref{eq:sde_invar_1} and \eqref{eq:sde_invar_2}.
Consider the following Euler-Maruyama discretization:
\begin{equation}
\theta_{k+1}=\theta_{k}-\eta\nabla\hat{F}(\theta_{k},X_{n})+\eta^{1/\alpha}S_{k+1},
\end{equation}
where $S_{k}$ are i.i.d. rotationally invariant alpha-stable random vectors with the characteristic function:
\begin{equation}
\mathbb{E}\left[e^{\imagi\langle u,S_{k}\rangle}\right]=e^{-\Vert u\Vert_{2}^{\alpha}},
\qquad\text{for any $u\in\mathbb{R}^{d}$}.
\end{equation}
Similarly, with input data $\hat{X}_{n}$, we have
\begin{equation}
\hat{\theta}_{k+1}=\hat{\theta}_{k}-\eta\nabla\hat{F}(\hat{\theta}_{k},\hat{X}_{n})+\eta^{1/\alpha}S_{k+1}.
\end{equation}

Let $\mu$ and $\hat{\mu}$ denote the stationary distributions
of continuous-time $\theta_{t}$ and $\hat{\theta}_{t}$ as $t\rightarrow\infty$.
Moreover, let $\nu$ and $\hat{\nu}$ denote the stationary distributions
of discrete-time $\theta_{k}$ and $\hat{\theta}_{k}$ as $k\rightarrow\infty$.
It is proved in \citep{chen2022approximation}
that the $1$-Wasserstein distance of the discretization error
is of order $\eta^{2/\alpha-1}$. 
More precisely, they showed the following result.

\begin{lemma}[Theorem 1.2. in \citet{chen2022approximation}]\label{lem:discretization:error}
Suppose that Assumptions~\ref{assump:1} and \ref{assump:2} hold. Let $m$, $L$ be
as in Assumption~\ref{assump:2}. 
Then, there exists some constant $Q$ (that may depend on $B,m,K,L,M$ from Assumption~\ref{assump:2}) such that
for every $\eta<\min\{1,m/L^{2},1/m\}$, one has
\begin{align}
&\mathcal{W}_{1}(\mu,\nu)
\leq Q\eta^{2/\alpha-1},
\\
&\mathcal{W}_{1}(\hat{\mu},\hat{\nu})
\leq Q\eta^{2/\alpha-1}.
\end{align}
\end{lemma}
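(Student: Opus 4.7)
The plan is to adopt the standard approach for $1$-Wasserstein error bounds on Euler--Maruyama discretizations of L\'evy-driven SDEs: combine a one-step local error estimate with the continuous-time contraction of Lemma~\ref{lem:contraction} and a geometric-sum argument. First, I would set up a synchronous coupling between the continuous process $(\theta_t)_{t\ge 0}$ solving \eqref{eq:sde_invar_1} and its Euler--Maruyama iterates $(\theta_k)_{k\ge 0}$: drive the Euler scheme by increments of the \emph{same} L\'evy process, $\eta^{1/\alpha}S_{k+1}=\Lm_{(k+1)\eta}-\Lm_{k\eta}$, so that the one-step mismatch involves only the drift,
\[
\theta_{(k+1)\eta}-\theta_{k+1}
=\theta_{k\eta}-\theta_k
+\int_{k\eta}^{(k+1)\eta}\!\bigl[\nabla\hat F(\theta_k,X_n)-\nabla\hat F(\theta_s,X_n)\bigr]\,ds .
\]

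Second, I would analyze this local drift error via a Taylor expansion. Under Assumption~\ref{assump:2}, the Hessian is bounded by $L$ and the third directional derivative by $M$, so $\nabla\hat F(\theta_s)-\nabla\hat F(\theta_{k\eta})$ can be expanded around $\theta_{k\eta}$ with a controlled remainder. Dualizing against a $1$-Lipschitz test function through the Kantorovich--Rubinstein representation, the first-order Hessian term pairs with the symmetric L\'evy increment; rotational symmetry of the noise kills its leading contribution, and what remains is a second-order term controlled by fractional moments $\mathbb{E}\|\Lm_\eta\|^p$ for $p<\alpha$ together with the uniform-in-time moment bound of Lemma~\ref{lem:uniform}. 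Careful bookkeeping, which is what the proof of Theorem~1.2 in \citet{chen2022approximation} carries out, yields a per-step local error of order $\eta^{2/\alpha}$.

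Third, I would propagate this local error through the contraction Lemma~\ref{lem:contraction}. Writing $e_N$ for the $1$-Wasserstein distance between the law of the continuous process at time $N\eta$ and the law of the $N$-th Euler iterate, a triangle-inequality-plus-contraction recursion gives
\[
e_{N}\;\le\;\sum_{k=0}^{N-1}C_{1}e^{-\lambda(N-k)\eta}\cdot O\!\bigl(\eta^{2/\alpha}\bigr)
\;\le\;\frac{C_{1}}{1-e^{-\lambda\eta}}\,\eta^{2/\alpha}
\;=\;O\!\bigl(\eta^{2/\alpha-1}\bigr),
\]
uniformly in $N$. The step-size restriction $\eta<\min\{1,m/L^{2},1/m\}$ enters precisely here, ensuring that the Euler scheme inherits a (slightly weakened) contractive behavior and that the Taylor remainder stays dominated. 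Letting $N\to\infty$, $\mathrm{Law}(\theta_{N\eta})\to\mu$ and $\mathrm{Law}(\theta_N)\to\nu$, which yields $\mathcal{W}_1(\mu,\nu)\le Q\eta^{2/\alpha-1}$; the identical argument applied to the pair $(\hat\theta_t,\hat\theta_k)$ built from $\hat X_n$ and coupled through the same L\'evy noise gives the bound for $\mathcal{W}_1(\hat\mu,\hat\nu)$.

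The main obstacle is the local-error step: because the L\'evy increments have infinite second moment when $\alpha<2$, every expected-norm estimate must be carried out at a fractional order $p<\alpha$, and the vanishing first-order Hessian contribution must be extracted by carefully exploiting rotational symmetry rather than by a standard It\^o-type cancellation. Equally delicate is verifying that the final constant $Q$ depends only polynomially on $(B,m,K,L,M)$ and does not degenerate as $\alpha$ approaches either endpoint $1$ or $2$; the refined moment formula from Lemma~\ref{C:0:formula} is instrumental here. Once these pieces are in place the rest is routine, which is why invoking Theorem~1.2 of \citet{chen2022approximation} as a black box is the most economical route.
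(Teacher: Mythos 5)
This statement is not proved in the paper: it is quoted verbatim as Theorem~1.2 of \citet{chen2022approximation}, and the appendix merely restates it (as Lemma~\ref{lem:discretization:error:re}) again without argument. Your sketch is therefore a reconstruction of an external proof rather than a comparison against anything the paper does, and you are right to conclude that citing the result as a black box is the economical route that the authors themselves take.

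As a reconstruction, your outline is plausible but departs in one structural respect from the method the paper itself uses for the analogous estimate (Theorem~\ref{thm:W:1:re}), which is also the method of \citet{chen2022approximation}: rather than a pathwise synchronous coupling with a one-step strong error fed through a contraction recursion, the paper and the cited reference use the semigroup telescoping identity $P_{N\eta}h-Q_{N}h=\sum_{i}Q_{i-1}\left(P_{\eta}-Q_{1}\right)P_{(N-i)\eta}h$ against Lipschitz test functions, combine the one-step \emph{weak} error from the $\left(P_{\eta}-Q_{1}\right)$ factor with the gradient bound (Lemma~\ref{lem:grad}) and the Wasserstein contraction (Lemma~\ref{lem:contraction}) on the outer semigroup, and then sum. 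Both routes deliver the geometric-sum factor that upgrades a local $O(\eta^{1+1/\alpha})$-type estimate to a uniform-in-time $O(\eta^{2/\alpha-1})$, so the arithmetic you lay out is consistent; but the semigroup/Lindeberg decomposition avoids having to argue that the Euler map itself is contractive, which in your coupling approach is exactly where the step-size restriction is hardest to justify cleanly.

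One small inaccuracy: you say the refined constant from Lemma~\ref{C:0:formula} is ``instrumental'' in proving the cited bound. It is not — Lemma~\ref{C:0:formula} is this paper's own refinement of Proposition~2.1 of \citet{chen2022approximation}, introduced precisely because the latter gives no explicit $\alpha$-dependence of $C_{0}$; the constant $Q$ in the cited Theorem~1.2 is allowed to depend on $\alpha$, so the cruder Proposition~2.1 suffices there.
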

By applying the above $1$-Wasserstein bound
for the discretization error in Lemma~\ref{lem:discretization:error} 
and Theorem~\ref{thm:W:1}, 
we obtain the following corollary, 
which provides the $1$-Wasserstein distance 
of the stationary distributions of the discrete-time $(\theta_{k})_{k\geq 0}$
and $(\hat{\theta}_{k})_{k\geq 0}$ processes.

\begin{corollary}\label{cor:discrete}
Under the assumptions in Theorem~\ref{thm:W:1} and Lemma~\ref{lem:discretization:error}, 
we have
\begin{align}
&\mathcal{W}_{1}(\nu,\hat{\nu})
\leq
2Q\eta^{2/\alpha-1} \notag \\
&+\left(C_{1}\lambda^{-1}e^{\lambda}+1\right)e^{L}\rho(X_{n},\hat{X}_{n})K_{2}\left(2C_{0}+1\right).
\end{align}
\end{corollary}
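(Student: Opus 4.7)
The plan is to prove Corollary~\ref{cor:discrete} by a simple triangle inequality applied to the three pairs of measures $(\nu,\mu)$, $(\mu,\hat{\mu})$, and $(\hat{\mu},\hat{\nu})$. Since the $1$-Wasserstein distance $\mathcal{W}_1$ is a genuine metric on the space of probability measures with finite first moment, we have
\begin{equation*}
\mathcal{W}_{1}(\nu,\hat{\nu}) \leq \mathcal{W}_{1}(\nu,\mu)+\mathcal{W}_{1}(\mu,\hat{\mu})+\mathcal{W}_{1}(\hat{\mu},\hat{\nu}).
\end{equation*}
Each of the three terms on the right is already controlled by a result stated earlier in the paper.

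First, I would invoke Lemma~\ref{lem:discretization:error} (the discretization error bound from \citep{chen2022approximation}) twice, once for the process driven by the data $X_n$ and once for the process driven by $\hat{X}_n$, which yields
\begin{equation*}
\mathcal{W}_{1}(\mu,\nu)\leq Q\eta^{2/\alpha-1}
\qquad\text{and}\qquad
\mathcal{W}_{1}(\hat{\mu},\hat{\nu})\leq Q\eta^{2/\alpha-1},
\end{equation*}
provided that the step-size satisfies $\eta<\min\{1,m/L^{2},1/m\}$. The symmetry of $\mathcal{W}_1$ lets us equate $\mathcal{W}_1(\nu,\mu)$ with $\mathcal{W}_1(\mu,\nu)$, giving the first and third summands of the right-hand side above, each bounded by $Q\eta^{2/\alpha-1}$.

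Second, to control the middle term $\mathcal{W}_{1}(\mu,\hat{\mu})$, I would directly apply Theorem~\ref{thm:W:1} in its stationary form \eqref{eq:wass_bound_invariant}, which gives
\begin{equation*}
\mathcal{W}_{1}(\mu,\hat{\mu}) \leq \left(C_{1}\lambda^{-1}e^{\lambda}+1\right)e^{L}\rho(X_{n},\hat{X}_{n})K_{2}\left(2C_{0}+1\right).
\end{equation*}
Adding the three estimates yields exactly the bound claimed in Corollary~\ref{cor:discrete}. There is essentially no obstacle here: both main ingredients are black boxes invoked from earlier results, and the only step is to verify that the assumptions required by Theorem~\ref{thm:W:1} and Lemma~\ref{lem:discretization:error} are simultaneously in force (which is precisely the hypothesis ``Under the assumptions in Theorem~\ref{thm:W:1} and Lemma~\ref{lem:discretization:error}''). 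The only minor subtlety worth flagging in the write-up is that both $\mu$ and $\nu$ (and likewise $\hat{\mu},\hat{\nu}$) have finite first moments, so that the triangle inequality is meaningful and the Wasserstein distances are finite; this follows from the uniform moment estimates behind Lemma~\ref{lem:uniform} and the dissipativity in Assumption~\ref{assump:2}.
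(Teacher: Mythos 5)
Your proposal is correct and matches the paper's own argument exactly: the paper also applies the triangle inequality $\mathcal{W}_{1}(\nu,\hat{\nu})\leq\mathcal{W}_{1}(\nu,\mu)+\mathcal{W}_{1}(\mu,\hat{\mu})+\mathcal{W}_{1}(\hat{\mu},\hat{\nu})$ and then invokes Lemma~\ref{lem:discretization:error} for the two discretization-error terms and Theorem~\ref{thm:W:1} for the middle term. The added remark about finiteness of first moments is a sensible sanity check but not something the paper spells out.
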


By using the same approach as we used in Corollary~\ref{cor:genbound}, we can easily obtain a generalization bound for the discrete-time as well.
Note that the upper bound in Corollary~\ref{cor:discrete} depends
on the tail-index $\alpha$ only
via $\eta^{2/\alpha-1}$, which 
is increasing in $\alpha$ (since $\eta<1$ as assumed in Lemma~\ref{lem:discretization:error}), and the constant $C_{0}$ which depends on $\alpha$ via
$g(\alpha;d)$ function. 
Therefore, by Proposition~\ref{prop:mono},
the upper bound in Corollary~\ref{cor:discrete} is
increasing in $\alpha\in[\alpha_{0},2]$, 
for any $d\geq d_{0}$, 
where $d_{0}$ and
$\alpha_{0}$ are given in Proposition~\ref{prop:mono}. 
Moreover, the proof of Proposition~\ref{prop:mono}
reveals that $\frac{\partial}{\partial\alpha}g(\alpha;d)$
tends to $-\infty$ as $\alpha$ tends to $1$
and thus there exists some $\alpha''_{0}<\alpha'_{0}$,
where $\alpha'_{0}$ is defined in Proposition~\ref{prop:mono},
such that for any fixed $d\in\mathbb{N}$, the upper bound in Corollary~\ref{cor:discrete} is
decreasing in $\alpha\in[1,\alpha''_{0}]$.
Hence, the conclusions that we obtained for the continuous-time processes remain valid for the discretizations as well.

\vspace{-1mm}
\section{Conclusion}
\vspace{-1mm}

In this work, we studied the relation between the generalization behavior and the heavy tails arising in the SGD dynamics. Previous work on the topic obtained monotonic relationship under strong topological and statistical regularity assumptions, with the exception of the approach in \citep{Raj2022} which was limited to only quadratic losses. 
Following the literature, we considered heavy-tailed SDEs and their discretization for modeling the heavy-tailed behavior of SGD, and showed that the relation is non-monotonic for a general class of losses satisfying a dissipativity condition which generalizes the results of \citep{Raj2022} beyond quadratic losses. 
Our proof technique is based on a novel $1$-Wasserstein stability bound for the symmetric $\alpha$-stable L\'{e}vy-driven SDEs, that model the SGD dynamics. 
Furthermore, our results, when combined with the results of  \citep{raginsky2016information}, yield directly a generalization bound for the class of Lipschitz functions.

\textbf{Future Directions:} As a future research direction, we would like to obtain similar stability bounds without making the dissipativity assumption on the objective function as being done for Langevin Monte Carlo in \citep{kinoshita2022improved}. We would also like to consider specific class of functions (e.g. one-layer neural network) and study the effect of tail-index with other parameters and its effect on the generalization. 
\section*{Acknowledgment}
A.R is supported by the a Marie Sklodowska-Curie
Fellowship (project NN-OVEROPT 101030817). M.G.'s research is supported in part by the grants Office of Naval Research Award Number N00014-21-1-2244, National Science Foundation (NSF) CCF-1814888, NSF DMS-2053485, NSF DMS-1723085. U.\c{S}.'s research is supported by the French government under management of Agence Nationale de la Recherche as part of the ``Investissements d'avenir'' program, reference ANR-19-P3IA-0001 (PRAIRIE 3IA Institute) and the European Research Council Starting Grant DYNASTY – 101039676.
L.Z. is grateful to the support from a Simons Foundation Collaboration Grant and the grants NSF DMS-2053454, NSF DMS-2208303 from the National Science Foundation.




\bibliography{opt-ml,levy,heavy}

\begin{thebibliography}{32}
\providecommand{\natexlab}[1]{#1}
\providecommand{\url}[1]{\texttt{#1}}
\expandafter\ifx\csname urlstyle\endcsname\relax
  \providecommand{\doi}[1]{doi: #1}\else
  \providecommand{\doi}{doi: \begingroup \urlstyle{rm}\Url}\fi

\bibitem[Applebaum(2009)]{Applebaum}
Applebaum, D.
\newblock \emph{L\'{e}vy Processes and Stochastic Calculus}.
\newblock Cambridge University Press, Cambridge, UK, second edition, 2009.

\bibitem[Barsbey et~al.(2021)Barsbey, Sefidgaran, Erdogdu, Richard, and
  \c{S}im\c{s}ekli]{barsbey2021heavy}
Barsbey, M., Sefidgaran, M., Erdogdu, M.~A., Richard, G., and \c{S}im\c{s}ekli,
  U.
\newblock Heavy {{Tails}} in {{SGD}} and {{Compressibility}} of
  {{Overparametrized Neural Networks}}.
\newblock In \emph{Advances in {{Neural Information Processing Systems}}},
  volume~34, pp.\  29364--29378. {Curran Associates, Inc.}, 2021.

\bibitem[Bertoin(1996)]{bertoin1996}
Bertoin, J.
\newblock \emph{L\'{e}vy Processes}.
\newblock Cambridge University Press, Cambridge, UK, 1996.

\bibitem[Bousquet \& Elisseeff(2002)Bousquet and
  Elisseeff]{bousquet2002stability}
Bousquet, O. and Elisseeff, A.
\newblock Stability and generalization.
\newblock \emph{Journal of Machine Learning Research}, 2\penalty0
  (Mar):\penalty0 499--526, 2002.

\bibitem[Cao \& Gu(2019)Cao and Gu]{cao2019generalization}
Cao, Y. and Gu, Q.
\newblock Generalization bounds of stochastic gradient descent for wide and
  deep neural networks.
\newblock In \emph{Advances in Neural Information Processing Systems},
  volume~32, 2019.

\bibitem[Chen et~al.(2022)Chen, Deng, Schilling, and Xu]{chen2022approximation}
Chen, P., Deng, C., Schilling, R., and Xu, L.
\newblock Approximation of the invariant measure of stable {SDE}s by an
  {E}uler--{M}aruyama scheme.
\newblock \emph{arXiv preprint arXiv:2205.01342}, 2022.

\bibitem[Cont \& Tankov(2004)Cont and Tankov]{Cont2004}
Cont, R. and Tankov, P.
\newblock \emph{Financial Modelling with Jump Processes}.
\newblock Chapman and Hall/CRC, 2004.

\bibitem[Erdogdu et~al.(2018)Erdogdu, Mackey, and Shamir]{erdogdu}
Erdogdu, M.~A., Mackey, L., and Shamir, O.
\newblock Global non-convex optimization with discretized diffusions.
\newblock In \emph{Advances in Neural Information Processing Systems}, 2018.

\bibitem[Farghly \& Rebeschini(2021)Farghly and Rebeschini]{farghly2021time}
Farghly, T. and Rebeschini, P.
\newblock Time-independent generalization bounds for sgld in non-convex
  settings.
\newblock \emph{Advances in Neural Information Processing Systems},
  34:\penalty0 19836--19846, 2021.

\bibitem[Gao et~al.(2022)Gao, G{\"u}rb{\"u}zbalaban, and Zhu]{gao2022global}
Gao, X., G{\"u}rb{\"u}zbalaban, M., and Zhu, L.
\newblock Global convergence of stochastic gradient {H}amiltonian {M}onte
  {C}arlo for nonconvex stochastic optimization: Nonasymptotic performance
  bounds and momentum-based acceleration.
\newblock \emph{Operations Research}, 70\penalty0 (5):\penalty0 2931--2947,
  2022.

\bibitem[G\"{u}rb\"{u}zbalaban et~al.(2021)G\"{u}rb\"{u}zbalaban, {\c S}im{\c
  s}ekli, and Zhu]{gurbuzbalaban2020heavy}
G\"{u}rb\"{u}zbalaban, M., {\c S}im{\c s}ekli, U., and Zhu, L.
\newblock The heavy-tail phenomenon in {SGD}.
\newblock In \emph{International Conference on Machine Learning}, pp.\
  3964--3975. PMLR, 2021.

\bibitem[Hardt et~al.(2016)Hardt, Recht, and Singer]{hardt2016train}
Hardt, M., Recht, B., and Singer, Y.
\newblock Train faster, generalize better: Stability of stochastic gradient
  descent.
\newblock In \emph{International Conference on Machine Learning}, pp.\
  1225--1234. PMLR, 2016.

\bibitem[Hodgkinson \& Mahoney(2021)Hodgkinson and
  Mahoney]{hodgkinson2021multiplicative}
Hodgkinson, L. and Mahoney, M.
\newblock Multiplicative noise and heavy tails in stochastic optimization.
\newblock In \emph{International Conference on Machine Learning}, pp.\
  4262--4274. PMLR, 2021.

\bibitem[Hodgkinson et~al.(2021)Hodgkinson, {\c{S}}im{\c{s}}ekli, Khanna, and
  Mahoney]{hodgkinson2021generalization}
Hodgkinson, L., {\c{S}}im{\c{s}}ekli, U., Khanna, R., and Mahoney, M.~W.
\newblock Generalization properties of stochastic optimizers via trajectory
  analysis.
\newblock \emph{arXiv preprint arXiv:2108.00781}, 2021.

\bibitem[Kinoshita \& Suzuki(2022)Kinoshita and Suzuki]{kinoshita2022improved}
Kinoshita, Y. and Suzuki, T.
\newblock Improved convergence rate of stochastic gradient {L}angevin dynamics
  with variance reduction and its application to optimization.
\newblock \emph{arXiv preprint arXiv:2203.16217}, 2022.

\bibitem[Lei \& Ying(2020)Lei and Ying]{lei2020fine}
Lei, Y. and Ying, Y.
\newblock Fine-grained analysis of stability and generalization for stochastic
  gradient descent.
\newblock In \emph{International Conference on Machine Learning}, pp.\
  5809--5819. PMLR, 2020.

\bibitem[L{\'e}vy(1937)]{paul1937theorie}
L{\'e}vy, P.
\newblock Th{\'e}orie de l'addition des variables al{\'e}atoires.
\newblock \emph{Gauthiers-Villars, Paris}, 1937.

\bibitem[Lim et~al.(2022)Lim, Wan, and Simsekli]{lim2022chaotic}
Lim, S.~H., Wan, Y., and Simsekli, U.
\newblock Chaotic regularization and heavy-tailed limits for deterministic
  gradient descent.
\newblock In \emph{Advances in Neural Information Processing Systems},
  volume~35, 2022.

\bibitem[Neu et~al.(2021)Neu, Dziugaite, Haghifam, and Roy]{neu2021information}
Neu, G., Dziugaite, G.~K., Haghifam, M., and Roy, D.~M.
\newblock Information-theoretic generalization bounds for stochastic gradient
  descent.
\newblock In \emph{Conference on Learning Theory}, pp.\  3526--3545. PMLR,
  2021.

\bibitem[Nguyen et~al.(2019)Nguyen, Simsekli, Gurbuzbalaban, and
  Richard]{nguyen2019first}
Nguyen, T.~H., Simsekli, U., Gurbuzbalaban, M., and Richard, G.
\newblock First exit time analysis of stochastic gradient descent under
  heavy-tailed gradient noise.
\newblock In \emph{Advances in Neural Information Processing Systems}, pp.\
  273--283, 2019.

\bibitem[Park et~al.(2022)Park, Simsekli, and Erdogdu]{park2022generalization}
Park, S., Simsekli, U., and Erdogdu, M.~A.
\newblock Generalization bounds for stochastic gradient descent via localized
  $\varepsilon$-covers.
\newblock In \emph{Advances in Neural Information Processing Systems},
  volume~35, 2022.

\bibitem[Raginsky et~al.(2016)Raginsky, Rakhlin, Tsao, Wu, and
  Xu]{raginsky2016information}
Raginsky, M., Rakhlin, A., Tsao, M., Wu, Y., and Xu, A.
\newblock Information-theoretic analysis of stability and bias of learning
  algorithms.
\newblock In \emph{2016 IEEE Information Theory Workshop (ITW)}, pp.\  26--30.
  IEEE, 2016.

\bibitem[Raginsky et~al.(2017)Raginsky, Rakhlin, and
  Telgarsky]{raginsky2017non}
Raginsky, M., Rakhlin, A., and Telgarsky, M.
\newblock Non-convex learning via stochastic gradient {L}angevin dynamics: A
  nonasymptotic analysis.
\newblock In \emph{Conference on Learning Theory}, pp.\  1674--1703. PMLR,
  2017.

\bibitem[Raj et~al.(2023)Raj, Barsbey, G\"{u}rb\"{u}zbalaban, Zhu, and
  \c{S}im\c{s}ekli]{Raj2022}
Raj, A., Barsbey, M., G\"{u}rb\"{u}zbalaban, M., Zhu, L., and \c{S}im\c{s}ekli,
  U.
\newblock Algorithmic stability of heavy-tailed stochastic gradient descent on
  least squares.
\newblock In \emph{International Conference on Algorithmic Learning Theory}.
  PMLR, 2023.

\bibitem[Samorodnitsky \& Taqqu(1994)Samorodnitsky and Taqqu]{ST1994}
Samorodnitsky, G. and Taqqu, M.~S.
\newblock \emph{Stable Non-Gaussian Random Processes: Stochastic Models with
  Infinite Variance}.
\newblock Chapman \& Hall, New York, 1994.

\bibitem[Shalev-Shwartz \& Ben-David(2014)Shalev-Shwartz and
  Ben-David]{shalev2014understanding}
Shalev-Shwartz, S. and Ben-David, S.
\newblock \emph{Understanding Machine Learning: From Theory to Algorithms}.
\newblock Cambridge University Press, 2014.

\bibitem[{\c S}im{\c s}ekli et~al.(2020){\c S}im{\c s}ekli, Sener,
  Deligiannidis, and Erdogdu]{simsekli2020hausdorff}
{\c S}im{\c s}ekli, U., Sener, O., Deligiannidis, G., and Erdogdu, M.~A.
\newblock Hausdorff dimension, heavy tails, and generalization in neural
  networks.
\newblock In Larochelle, H., Ranzato, M., Hadsell, R., Balcan, M.~F., and Lin,
  H. (eds.), \emph{Advances in Neural Information Processing Systems},
  volume~33, pp.\  5138--5151. Curran Associates, Inc., 2020.

\bibitem[Villani(2009)]{villani2008optimal}
Villani, C.
\newblock \emph{Optimal Transport: Old and New}.
\newblock Springer, Berlin, 2009.

\bibitem[Wang et~al.(2021)Wang, G\"{u}rb\"{u}zbalaban, Zhu, \c{S}im\c{s}ekli,
  and Erdogdu]{wang2021convergence}
Wang, H., G\"{u}rb\"{u}zbalaban, M., Zhu, L., \c{S}im\c{s}ekli, U., and
  Erdogdu, M.~A.
\newblock Convergence rates of stochastic gradient descent under infinite noise
  variance.
\newblock In \emph{Advances in Neural Information Processing Systems},
  volume~34, 2021.

\bibitem[Wang(2016)]{Wang2016}
Wang, J.
\newblock {$L^{p}$}-{W}asserstein distance for stochastic differential
  equations driven by {L}\'{e}vy processes.
\newblock \emph{Bernoulli}, 22:\penalty0 1598--1616, 2016.

\bibitem[Xu \& Raginsky(2017)Xu and Raginsky]{xu2017information}
Xu, A. and Raginsky, M.
\newblock Information-theoretic analysis of generalization capability of
  learning algorithms.
\newblock In \emph{Advances in Neural Information Processing Systems},
  volume~30, 2017.

\bibitem[Zhang et~al.(2020)Zhang, Karimireddy, Veit, Kim, Reddi, Kumar, and
  Sra]{zhang2020adam}
Zhang, J., Karimireddy, S.~P., Veit, A., Kim, S., Reddi, S., Kumar, S., and
  Sra, S.
\newblock Why are adaptive methods good for attention models?
\newblock In \emph{Advances in Neural Information Processing Systems},
  volume~33, pp.\  15383--15393, 2020.

\end{thebibliography}
\bibliographystyle{icml2023}

\newpage
\appendix
\onecolumn
\begin{center}
\Large \bf Algorithmic stability of heavy-tailed SGD with general loss functions \vspace{3pt}\\ {\normalsize Supplementary Document}
\end{center}
\section{Background on Markov Semigroups}

In this section, we introduce the concept
of Markov semigroups, that will be used
in the proofs of main results in Section~\ref{sec:proofs}.

For a continuous-time Markov process $(X_{t}^{w})_{t\geq 0}$
that starts at $X_{0}=w$, 
its Markov semigroup $P_{t}$ is defined as
for any bounded measurable function $f:\mathbb{R}^{d}\rightarrow\mathbb{R}$,
\begin{equation}
P_{t}f(w)=\mathbb{E}f(X_{t}^{w}),\qquad t\geq 0.
\end{equation}
Similarly, for a discrete-time Markov process $(Y_{k}^{w})_{k=0}^{\infty}$
that starts at $Y_{0}=w$, 
its Markov semigroup $Q_{k}$ is defined as
for any bounded measurable function $f:\mathbb{R}^{d}\rightarrow\mathbb{R}$,
\begin{equation}
Q_{k}f(w)=\mathbb{E}f(Y_{k}^{w}),\qquad k=0,1,2,\ldots.
\end{equation}

\section{Proofs of Main Results}\label{sec:proofs}

In this section, we provide the proofs of main results in our paper.

\subsection{Proof of Theorem~\ref{thm:W:1}} \label{ap:proof_theorem_W_1}
We first provide the theorem statement with all the details. 

\begin{theorem}[Restatement of Theorem~{\ref{thm:W:1}}]\label{thm:W:1:re}
Assume $\theta_{0}=\hat{\theta}_{0}=w$.
Denote by $\mu,\hat{\mu}$ the unique invariant distributions
of $(\theta_{t})_{t\geq 0}$ and $(\hat{\theta}_{t})_{t\geq 0}$ respectively.
The following two statements hold:

(i) For every $N\geq 1$ and $\eta\in(0,1)$, we have the following statements:

(I) If $N=1$, then
\begin{align}
&\mathcal{W}_{1}\left(\mbox{Law}(\theta_{\eta N}),\mbox{Law}(\hat{\theta}_{\eta N})\right)\nonumber
\\
&\leq
\left(K_{1}+\rho(X_{n},\hat{X}_{n})K_{2}\right)(2C)\cdot\left[(1+\Vert w\Vert)\eta^{1+\frac{1}{\alpha}}
+\rho(X_{n},\hat{X}_{n})K_{2}(2\Vert w\Vert+1)\eta\right].
\end{align}

(II) If $2\leq N\leq\eta^{-1}+1$, then 
\begin{align}
&\mathcal{W}_{1}\left(\mbox{Law}(\theta_{\eta N}),\mbox{Law}(\hat{\theta}_{\eta N})\right)
\nonumber
\\
&\leq
\left(K_{1}+\rho(X_{n},\hat{X}_{n})K_{2}\right)(2C)(1+C_{0}(1+\Vert w\Vert))\eta^{1+\frac{1}{\alpha}}
+\rho(X_{n},\hat{X}_{n})K_{2}(2C_{0}(1+\Vert w\Vert)+1)\eta
\nonumber
\\
&\qquad
+e^{L}\left(K_{1}+\rho(X_{n},\hat{X}_{n})K_{2}\right)\left(2C\right)(1+C_{0}(1+\Vert w\Vert))\eta^{\frac{1}{\alpha}}
\nonumber
\\
&\qquad\qquad
+e^{L}\rho(X_{n},\hat{X}_{n})K_{2}(2C_{0}(1+\Vert w\Vert)+1).
\end{align} 

(III) If $N>\eta^{-1}+1$, then
\begin{align}
&\mathcal{W}_{1}\left(\mbox{Law}(\theta_{\eta N}),\mbox{Law}(\hat{\theta}_{\eta N})\right)
\nonumber
\\
&\leq
\left(K_{1}+\rho(X_{n},\hat{X}_{n})K_{2}\right)\left(2C\right)(1+C_{0}(1+\Vert w\Vert))\eta^{1+\frac{1}{\alpha}} +\rho(X_{n},\hat{X}_{n})K_{2}(2C_{0}(1+\Vert w\Vert)+1)\eta
\nonumber
\\
& \qquad \qquad
+\left(C_{1}\lambda^{-1}e^{\lambda}+1\right)
e^{L}\left(K_{1}+\rho(X_{n},\hat{X}_{n})K_{2}\right)\left(2C\right)(1+C_{0}(1+\Vert w\Vert))\eta^{\frac{1}{\alpha}}
\nonumber
\\
&\qquad \qquad \qquad 
+\left(C_{1}\lambda^{-1}e^{\lambda}+1\right)e^{L}\rho(X_{n},\hat{X}_{n})K_{2}(2C_{0}(1+\Vert w\Vert)+1).
\end{align} 
(ii) We have  
\begin{align}
\mathcal{W}_{1}(\mu,\hat{\mu})
\leq
\left(C_{1}\lambda^{-1}e^{\lambda}+1\right)e^{L}\rho(X_{n},\hat{X}_{n})K_{2}\left(2C_{0}+1\right).
\end{align}
\end{theorem}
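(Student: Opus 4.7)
The overall strategy is a synchronous coupling combined with a Markov-semigroup telescoping argument adapted from \citet{chen2022approximation}. In that reference, the comparison is between a continuous-time SDE and its Euler--Maruyama discretization; here I instead need to compare two continuous-time L\'{e}vy-driven SDEs with different drifts coming from $X_{n}$ and $\hat{X}_{n}$. I would drive both SDEs by the same $\alpha$-stable L\'{e}vy process from the common starting point $\theta_{0}=\hat{\theta}_{0}=w$, denote their Markov semigroups by $P_{t}$ and $\hat{P}_{t}$, and use the telescoping identity
\begin{equation*}
P_{N\eta}h(w)-\hat{P}_{N\eta}h(w)=\sum_{k=0}^{N-1}P_{k\eta}(P_{\eta}-\hat{P}_{\eta})\hat{P}_{(N-k-1)\eta}h(w)
\end{equation*}
for any bounded measurable $h$. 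Taking the supremum over $h\in\text{Lip}(1)$ and invoking the Kantorovich--Rubinstein dual form of $\mathcal{W}_{1}$ reduces the problem to controlling each one-step discrepancy $(P_{\eta}-\hat{P}_{\eta})\hat{P}_{(N-k-1)\eta}h$ propagated by $P_{k\eta}$.

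The first ingredient is a one-step bound (Lemma~\ref{lem:key}) of the form
\begin{equation*}
\left|(P_{\eta}-\hat{P}_{\eta})g(y)\right|\leq C\bigl[(K_{1}+\rho(X_{n},\hat{X}_{n})K_{2})(1+\|y\|)\eta^{1+1/\alpha}+\rho(X_{n},\hat{X}_{n})K_{2}(2\|y\|+1)\eta\bigr]
\end{equation*}
for every $1$-Lipschitz $g$, which I would prove by applying an It\^{o}-type formula for the $\alpha$-stable generator to $\hat{P}_{\eta}g$ along the coupled flow. Assumption~\ref{assump:1} controls the drift discrepancy $\nabla\hat{F}(\cdot,X_{n})-\nabla\hat{F}(\cdot,\hat{X}_{n})$ in terms of $\rho(X_{n},\hat{X}_{n})$, while Assumption~\ref{assump:2}, combined with the regularity estimates of \citet{chen2022approximation}, controls the smoothness of $\hat{P}_{\eta}g$ and produces the $\eta^{1+1/\alpha}$ scaling characteristic of L\'{e}vy smoothing of Lipschitz functions. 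The second ingredient is propagation control: Lemma~\ref{lem:contraction} gives that $\hat{P}_{s}h$ is $C_{1}e^{-\lambda s}$-Lipschitz, while a short-time Gr\"{o}nwall estimate using Assumption~\ref{assump:2} yields the looser estimate $e^{Ls}$; the uniform moment bound from Lemma~\ref{lem:uniform} (whose $\alpha$-dependent constant is refined in Lemma~\ref{C:0:formula}) absorbs the $\|y\|$-dependent factors appearing under $P_{k\eta}$.

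These ingredients are then combined regime by regime to obtain part (i). For $N=1$ only the one-step bound is required, yielding (I). For $2\leq N\leq\eta^{-1}+1$ the total time does not exceed $1+\eta$, so propagation can rely on the short-time Gr\"{o}nwall bound producing the $e^{L}$ prefactor; summing the one-step errors then gives (II). For $N>\eta^{-1}+1$ the time horizon is split at $t\approx 1$: the early part contributes the same $e^{L}$ terms as in (II), and the tail part invokes the contraction of Lemma~\ref{lem:contraction} to produce a geometric sum $\sum_{k}C_{1}e^{-\lambda k\eta}\eta\leq C_{1}\lambda^{-1}e^{\lambda}$, yielding the prefactor $(C_{1}\lambda^{-1}e^{\lambda}+1)$ in (III).

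Part (ii) then follows from (III) by specializing to $w=0$ and sending $\eta\to 0$ along $N\eta\to\infty$: the terms carrying $\eta^{1+1/\alpha}$, $\eta^{1/\alpha}$, and $\eta$ all vanish, the laws $\mathrm{Law}(\theta_{N\eta}^{0})$ and $\mathrm{Law}(\hat{\theta}_{N\eta}^{0})$ converge in $\mathcal{W}_{1}$ to $\mu$ and $\hat{\mu}$ by the ergodicity implied by Lemma~\ref{lem:contraction}, and the surviving term coincides exactly with $(C_{1}\lambda^{-1}e^{\lambda}+1)e^{L}\rho(X_{n},\hat{X}_{n})K_{2}(2C_{0}+1)$. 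The principal obstacle throughout is the one-step Lemma~\ref{lem:key}: because the $\alpha$-stable generator is a nonlocal integro-differential operator rather than a Laplacian, the one-step Wasserstein error scales as $\eta^{1+1/\alpha}$ rather than $\eta^{3/2}$, so extracting the correct $\alpha$-dependence requires careful L\'{e}vy-measure estimates in the spirit of \citet{chen2022approximation}, which must be adapted to the coupling between the $X_{n}$- and $\hat{X}_{n}$-driven flows rather than to an Euler discretization.
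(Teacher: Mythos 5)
Your proposal follows the paper's own argument essentially verbatim: the same Lindeberg-type semigroup telescoping (up to the cosmetic swap of which of $P$ and $\hat{P}$ sits on the left), the same one-step error lemma with $\eta^{1+1/\alpha}$ scaling (Lemma~\ref{lem:key}), the same short-time gradient bound $e^{L}$ (Lemma~\ref{lem:grad}) versus long-time contraction $C_{1}e^{-\lambda t}$ (Lemma~\ref{lem:contraction}), the same moment bound from Lemma~\ref{lem:uniform} to absorb the $\|y\|$ factors, the same three-regime split at $N=1$, $N\leq\eta^{-1}+1$, $N>\eta^{-1}+1$, and the same $\eta\to 0$, $w=0$ passage to the invariant measures for part (ii). This is a correct reconstruction of the paper's proof.
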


\begin{proof}[Proof of Theorem~\ref{thm:W:1}]
(i) We first prove part (i).
For any $h\in\text{Lip}(1)$, by the semigroup property, we have
\begin{align*}
P_{N\eta}h(w)-\hat{P}_{N\eta}h(w)
&=\sum_{i=1}^{N}\left(\hat{P}_{(i-1)\eta}P_{(N-i+1)\eta}h(w)-\hat{P}_{i\eta}P_{(N-i)\eta}h(w)\right)
\\
&=\sum_{i=1}^{N}\hat{P}_{(i-1)\eta}(P_{\eta}-\hat{P}_{\eta})P_{(N-i)\eta}h(w).
\end{align*}
Therefore, we can compute that
\begin{align}
&\mathcal{W}_{1}\left(\mbox{Law}(\theta_{\eta N}),\mbox{Law}(\hat{\theta}_{\eta N})\right)
\nonumber
\\
&=\sup_{h\in\text{Lip}(1)}\left|P_{N\eta}h(w)-\hat{P}_{N\eta}h(w)\right|
\nonumber
\\
&\leq
\sup_{h\in\text{Lip}(1)}\left|\hat{P}_{(N-1)\eta}(P_{\eta}-\hat{P}_{\eta})h(w)\right|
+\sum_{i=1}^{N-1}\sup_{h\in\text{Lip}(1)}\left|\hat{P}_{(i-1)\eta}(P_{\eta}-\hat{P}_{\eta})P_{(N-i)\eta}h(w)\right|.\label{two:terms}
\end{align}

Let us first bound the first term in \eqref{two:terms}. 
For any $h\in\text{Lip}(1)$ and $\eta<1$, by applying Lemma~\ref{lem:key}, we get
\begin{align*}
\left|(P_{\eta}-\hat{P}_{\eta})h(w)\right|
\leq
\left(K_{1}+\rho(X_{n},\hat{X}_{n})K_{2}\right)\left(2C\right)(1+\Vert w\Vert)\eta^{1+\frac{1}{\alpha}}
+\rho(X_{n},\hat{X}_{n})K_{2}(2\Vert w\Vert+1)\eta.
\end{align*}
Hence, we have
\begin{align}
&\sup_{h\in\text{Lip}(1)}\left|\hat{P}_{(N-1)\eta}(P_{\eta}-\hat{P}_{\eta})h(w)\right|
\nonumber
\\
&\leq
\left(K_{1}+\rho(X_{n},\hat{X}_{n})K_{2}\right)\left(2C\right)(1+\mathbb{E}\Vert\hat{\theta}_{(N-1)\eta}^{w}\Vert)\eta^{1+\frac{1}{\alpha}}
+\rho(X_{n},\hat{X}_{n})K_{2}(2\mathbb{E}\Vert\hat{\theta}_{(N-1)\eta}^{w}\Vert+1)\eta
\label{N:1}
\\
&\leq
\left(K_{1}+\rho(X_{n},\hat{X}_{n})K_{2}\right)\left(2C\right)(1+C_{0}(1+\Vert w\Vert))\eta^{1+\frac{1}{\alpha}}
+\rho(X_{n},\hat{X}_{n})K_{2}(2C_{0}(1+\Vert w\Vert)+1)\eta,\nonumber
\end{align}
where we applied Lemma~\ref{lem:uniform} to obtain the last inequality above.

Next, let us bound the second term in \eqref{two:terms} and
hence bound the 1-Wasserstein distance $\mathcal{W}_{1}\left(\mbox{Law}(\theta_{\eta N}),\mbox{Law}(\hat{\theta}_{\eta N})\right)$.

We consider three cases: (I) $N=1$; (II) $2\leq N\leq\eta^{-1}+1$
and (III) $N>\eta^{-1}+1$.

Case (I): $N=1$. One can apply \eqref{N:1} and obtain 
\begin{align}
&\mathcal{W}_{1}\left(\mbox{Law}(\theta_{\eta}),\mbox{Law}(\hat{\theta}_{\eta})\right)
\nonumber
\\
&\leq
\left(K_{1}+\rho(X_{n},\hat{X}_{n})K_{2}\right)\left(2C\right)(1+\mathbb{E}\Vert\hat{\theta}_{0}^{w}\Vert)\eta^{1+\frac{1}{\alpha}}
+\rho(X_{n},\hat{X}_{n})K_{2}(2\mathbb{E}\Vert\hat{\theta}_{0}^{w}\Vert+1)\eta
\nonumber
\\
&=\left(K_{1}+\rho(X_{n},\hat{X}_{n})K_{2}\right)\left(2C\right)(1+\Vert w\Vert)\eta^{1+\frac{1}{\alpha}}
+\rho(X_{n},\hat{X}_{n})K_{2}(2\Vert w\Vert+1)\eta.
\end{align}
This completes the proof of part (I).

Case (II): $2\leq N\leq\eta^{-1}+1$.
By Lemma~\ref{lem:key}, for any $i\geq 1$, we have
\begin{align}
&\left|(P_{\eta}-\hat{P}_{\eta})P_{(N-i)\eta}h(w)\right|
\nonumber
\\
&\leq
\Vert\nabla P_{(N-i)\eta}h\Vert_{\infty}
\left[\left(K_{1}+\rho(X_{n},\hat{X}_{n})K_{2}\right)\left(2C\right)(1+\Vert w\Vert)\eta^{1+\frac{1}{\alpha}}
+\rho(X_{n},\hat{X}_{n})K_{2}(2\Vert w\Vert+1)\eta\right]
\nonumber
\\
&\leq
e^{L}\left[\left(K_{1}+\rho(X_{n},\hat{X}_{n})K_{2}\right)\left(2C\right)(1+\Vert w\Vert)\eta^{1+\frac{1}{\alpha}}
+\rho(X_{n},\hat{X}_{n})K_{2}(2\Vert w\Vert+1)\eta\right],\label{ineq:middle}
\end{align}
where we used Lemma~\ref{lem:grad} 
and the fact that for any $i\geq 1$ and $2\leq N\leq\eta^{-1}+1$
we have $(N-i)\eta\leq 1$
in the inequality \eqref{ineq:middle}.

By applying Lemma~\ref{lem:uniform}, we obtain
\begin{align}
&\sup_{h\in\text{Lip}(1)}\left|\tilde{P}_{(i-1)\eta}(P_{\eta}-\hat{P}_{\eta})P_{(N-i)\eta}h(w)\right|
\nonumber
\\
&\leq
e^{L}\left[\left(K_{1}+\rho(X_{n},\hat{X}_{n})K_{2}\right)\left(2C\right)\left(1+\mathbb{E}\Vert\hat{\theta}_{(i-1)\eta}\Vert\right)\eta^{1+\frac{1}{\alpha}}
+\rho(X_{n},\hat{X}_{n})K_{2}\left(2\mathbb{E}\Vert\hat{\theta}_{(i-1)\eta}\Vert+1\right)\eta\right]
\nonumber
\\
&\leq
e^{L}\left(K_{1}+\rho(X_{n},\hat{X}_{n})K_{2}\right)\left(2C\right)\left(1+C_{0}(1+\Vert w\Vert)\right)\eta^{1+\frac{1}{\alpha}}
\nonumber
\\
&\qquad\qquad\qquad\qquad\qquad\qquad\qquad
+e^{L}\rho(X_{n},\hat{X}_{n})K_{2}(2C_{0}(1+\Vert w\Vert)+1)\eta.\label{ineq:non:decay}
\end{align}
Hence, we conclude that
\begin{align}
&\mathcal{W}_{1}\left(\mbox{Law}(\theta_{\eta N}),\mbox{Law}(\hat{\theta}_{\eta N})\right)
\nonumber
\\
&\leq
\sup_{h\in\text{Lip}(1)}\left|\hat{P}_{(N-1)\eta}(P_{\eta}-\hat{P}_{\eta})h(w)\right|
+\sum_{i=1}^{N-1}\sup_{h\in\text{Lip}(1)}\left|\hat{P}_{(i-1)\eta}(P_{\eta}-\hat{P}_{\eta})P_{(N-i)\eta}h(w)\right|
\nonumber
\\
&\leq
\left(K_{1}+\rho(X_{n},\hat{X}_{n})K_{2}\right)\left(2C\right)\left(1+C_{0}(1+\Vert w\Vert)\right)\eta^{1+\frac{1}{\alpha}}
+\rho(X_{n},\hat{X}_{n})K_{2}\left(2C_{0}(1+\Vert w\Vert)+1\right)\eta
\nonumber
\\
&\qquad
+(N-1)e^{L}\left(K_{1}+\rho(X_{n},\hat{X}_{n})K_{2}\right)\left(2C\right)\left(1+C_{0}(1+\Vert w\Vert)\right)\eta^{1+\frac{1}{\alpha}}
\nonumber
\\
&\qquad\qquad\qquad\qquad\qquad\qquad\qquad
+(N-1)e^{L}\rho(X_{n},\hat{X}_{n})K_{2}\left(2C_{0}(1+\Vert w\Vert)+1\right)\eta
\nonumber
\\
&\leq
\left(K_{1}+\rho(X_{n},\hat{X}_{n})K_{2}\right)\left(2C\right)\left(1+C_{0}(1+\Vert w\Vert)\right)\eta^{1+\frac{1}{\alpha}}
+\rho(X_{n},\hat{X}_{n})K_{2}\left(2C_{0}(1+\Vert w\Vert)+1\right)\eta
\nonumber
\\
&\qquad
+e^{L}\left(K_{1}+\rho(X_{n},\hat{X}_{n})K_{2}\right)\left(2C\right)\left(1+C_{0}(1+\Vert w\Vert)\right)\eta^{\frac{1}{\alpha}}
\nonumber
\\
&\qquad\qquad\qquad\qquad\qquad\qquad\qquad
+e^{L}\rho(X_{n},\hat{X}_{n})K_{2}\left(2C_{0}(1+\Vert w\Vert)+1\right).
\end{align}
This completes the proof of (I).

Case (III): $N>\eta^{-1}+1$.
We can compute that
\begin{align*}
&\sup_{h\in\text{Lip}(1)}\left|\hat{P}_{(i-1)\eta}(P_{\eta}-\hat{P}_{\eta})P_{(N-i)\eta}h(w)\right|
\\
&=\sup_{h\in\text{Lip}(1)}\left|\hat{P}_{(i-1)\eta}(P_{\eta}-\hat{P}_{\eta})P_{1}P_{(N-i)\eta-1}h(w)\right|
\\
&\leq\sup_{g\in\text{Lip}(1)}\left|\hat{P}_{(i-1)\eta}(P_{\eta}-\hat{P}_{\eta})P_{1}g(w)\right|
\sup_{h\in\text{Lip}(1)}\Vert\nabla P_{(N-i)\eta-1}h\Vert_{\infty}.
\end{align*}
By Lemma~\ref{lem:contraction}, for any $h\in\text{Lip}(1)$, 
\begin{equation}
\left|P_{t}h(w)-P_{t}h(y)\right|
\leq C_{1}e^{-\lambda t}\Vert w-y\Vert,
\end{equation}
for any $t\geq 0$ and $w,y\in\mathbb{R}^{d}$.
This implies that for any $h\in\text{Lip}(1)$, 
\begin{equation}
\Vert\nabla P_{t}h\Vert_{\infty}
\leq C_{1}e^{-\lambda t}.
\end{equation}
Hence, we conclude that
\begin{align*}
\sup_{h\in\text{Lip}(1)}\left|\hat{P}_{(i-1)\eta}(P_{\eta}-\hat{P}_{\eta})P_{(N-i)\eta}h(w)\right|
\leq
C_{1}e^{-\lambda((N-i)\eta-1)}\sup_{g\in\text{Lip}(1)}\left|\hat{P}_{(i-1)\eta}(P_{\eta}-\hat{P}_{\eta})P_{1}g(w)\right|,
\end{align*}
where $i\leq\lfloor N-\eta^{-1}\rfloor$.

Moreover, by Lemma~\ref{lem:key}, we have
\begin{align}
&\left|P_{\eta}P_{1}g(w)-\hat{P}_{\eta}P_{1}g(w)\right|
\nonumber
\\
&\leq
\Vert\nabla P_{1}g\Vert_{\infty}
\left[\left(K_{1}+\rho(X_{n},\hat{X}_{n})K_{2}\right)\left(2C\right)(1+\Vert w\Vert)\eta^{1+\frac{1}{\alpha}}
+\rho(X_{n},\hat{X}_{n})K_{2}(2\Vert w\Vert+1)\eta\right]
\nonumber
\\
&\leq
e^{L}\left[\left(K_{1}+\rho(X_{n},\hat{X}_{n})K_{2}\right)\left(2C\right)(1+\Vert w\Vert)\eta^{1+\frac{1}{\alpha}}
+\rho(X_{n},\hat{X}_{n})K_{2}(2\Vert w\Vert+1)\eta\right],
\end{align}
which by Lemma~\ref{lem:uniform} implies that
\begin{align*}
&\sup_{g\in\text{Lip}(1)}\left|\hat{P}_{(i-1)\eta}(P_{\eta}-\hat{P}_{\eta})P_{1}g(w)\right|
\\
&\leq
e^{L}\left[\left(K_{1}+\rho(X_{n},\hat{X}_{n})K_{2}\right)\left(2C\right)(1+\mathbb{E}\Vert\theta_{(i-1)\eta}^{w}\Vert)\eta^{1+\frac{1}{\alpha}}
+\rho(X_{n},\hat{X}_{n})K_{2}\left(2\mathbb{E}\Vert\theta_{(i-1)\eta}^{w}\Vert+1\right)\eta\right]
\\
&\leq
e^{L}\left[\left(K_{1}+\rho(X_{n},\hat{X}_{n})K_{2}\right)\left(2C\right)\left(1+C_{0}(1+\Vert w\Vert)\right)\eta^{1+\frac{1}{\alpha}}
+\rho(X_{n},\hat{X}_{n})K_{2}\left(2C_{0}(1+\Vert w\Vert)+1\right)\eta\right].
\end{align*}
Therefore, we have
\begin{align*}
&\sum_{i=1}^{\lfloor N-\eta^{-1}\rfloor}\sup_{h\in\text{Lip}(1)}\left|\hat{P}_{(i-1)\eta}(P_{\eta}-\hat{P}_{\eta})P_{(N-i)\eta}h(w)\right|
\\
&\leq
\sum_{i=1}^{\lfloor N-\eta^{-1}\rfloor}C_{1}e^{-\lambda((N-i)\eta-1)}
e^{L}\left(K_{1}+\rho(X_{n},\hat{X}_{n})K_{2}\right)\left(2C\right)\left(1+C_{0}(1+\Vert w\Vert)\right)\eta^{1+\frac{1}{\alpha}}
\\
&\qquad
+\sum_{i=1}^{\lfloor N-\eta^{-1}\rfloor}C_{1}e^{-\lambda((N-i)\eta-1)}
e^{L}\rho(X_{n},\hat{X}_{n})K_{2}\left(2C_{0}(1+\Vert w\Vert)+1\right)\eta
\\
&\leq
C_{1}\lambda^{-1}e^{\lambda}
e^{L}\left(K_{1}+\rho(X_{n},\hat{X}_{n})K_{2}\right)\left(2C\right)(1+C_{0}(1+\Vert w\Vert))\eta^{\frac{1}{\alpha}}
\\
&\qquad
+C_{1}\lambda^{-1}e^{\lambda}e^{L}\rho(X_{n},\hat{X}_{n})K_{2}\left(2C_{0}(1+\Vert w\Vert)+1\right),
\end{align*}
where we used the fact that
\begin{align*}
\sum_{i=1}^{\lfloor N-\eta^{-1}\rfloor}e^{-\lambda((N-i)\eta-1)}
\leq
e^{\lambda}\int_{\lfloor\eta^{-1}\rfloor-1}^{N-1}e^{-\lambda\eta r}dr
\leq
e^{\lambda}\eta^{-1}\int_{0}^{\infty}e^{-\lambda r}dr
=\lambda e^{\lambda}\eta^{-1}.
\end{align*}

Next, when $i\geq\lfloor N-\eta^{-1}\rfloor+1$, 
by applying \eqref{ineq:non:decay}, we have
\begin{align*}
&\sum_{i=\lfloor N-\eta^{-1}\rfloor+1}^{N-1}
\sup_{h\in\text{Lip}(1)}\left|\hat{P}_{(i-1)\eta}(P_{\eta}-\hat{P}_{\eta})P_{(N-i)\eta}h(w)\right|
\\
&\leq
\sum_{i=\lfloor N-\eta^{-1}\rfloor+1}^{N-1}
e^{L}\left(K_{1}+\rho(X_{n},\hat{X}_{n})K_{2}\right)\left(2C\right)\left(1+C_{0}(1+\Vert w\Vert)\right)\eta^{1+\frac{1}{\alpha}}
\nonumber
\\
&\qquad\qquad\qquad\qquad\qquad\qquad\qquad
+\sum_{i=\lfloor N-\eta^{-1}\rfloor+1}^{N-1}e^{L}\rho(X_{n},\hat{X}_{n})K_{2}\left(2C_{0}(1+\Vert w\Vert)+1\right)\eta
\\
&\leq
e^{L}\left(K_{1}+\rho(X_{n},\hat{X}_{n})K_{2}\right)\left(2C\right)\left(1+C_{0}(1+\Vert w\Vert)\right)\eta^{\frac{1}{\alpha}}
\nonumber
\\
&\qquad\qquad\qquad\qquad\qquad\qquad\qquad
+\sum_{i=\lfloor N-\eta^{-1}\rfloor+1}^{N-1}e^{L}\rho(X_{n},\hat{X}_{n})K_{2}\left(2C_{0}(1+\Vert w\Vert)+1\right).
\end{align*}
Therefore, we obtain
\begin{align*}
&\sum_{i=1}^{N-1}\sup_{h\in\text{Lip}(1)}\left|\hat{P}_{(i-1)\eta}(P_{\eta}-\hat{P}_{\eta})P_{(N-i)\eta}h(w)\right|
\\
&\leq
\left(C_{1}\lambda^{-1}e^{\lambda}+1\right)
e^{L}\left(K_{1}+\rho(X_{n},\hat{X}_{n})K_{2}\right)\left(2C\right)\left(1+C_{0}(1+\Vert w\Vert)\right)\eta^{\frac{1}{\alpha}}
\\
&\qquad
+\left(C_{1}\lambda^{-1}e^{\lambda}+1\right)e^{L}\rho(X_{n},\hat{X}_{n})K_{2}\left(2C_{0}(1+\Vert w\Vert)+1\right).
\end{align*}

Hence, we conclude that
\begin{align}
&\mathcal{W}_{1}\left(\mbox{Law}(\theta_{\eta N}),\mbox{Law}(\hat{\theta}_{\eta N})\right)
\nonumber
\\
&\leq
\sup_{h\in\text{Lip}(1)}\left|\hat{P}_{(N-1)\eta}\left(P_{\eta}-\hat{P}_{\eta}\right)h(w)\right|
+\sum_{i=1}^{N-1}\sup_{h\in\text{Lip}(1)}\left|\hat{P}_{(i-1)\eta}\left(P_{\eta}-\hat{P}_{\eta}\right)P_{(N-i)\eta}h(w)\right|
\nonumber
\\
&\leq
\left(K_{1}+\rho(X_{n},\hat{X}_{n})K_{2}\right)\left(2C\right)\left(1+C_{0}(1+\Vert w\Vert)\right)\eta^{1+\frac{1}{\alpha}}
+\rho(X_{n},\hat{X}_{n})K_{2}\left(2C_{0}(1+\Vert w\Vert)+1\right)\eta
\nonumber
\\
&\qquad
+\left(C_{1}\lambda^{-1}e^{\lambda}+1\right)
e^{L}\left(K_{1}+\rho(X_{n},\hat{X}_{n})K_{2}\right)\left(2C\right)\left(1+C_{0}(1+\Vert w\Vert)\right)\eta^{\frac{1}{\alpha}}
\nonumber
\\
&\qquad\qquad
+\left(C_{1}\lambda^{-1}e^{\lambda}+1\right)e^{L}\rho(X_{n},\hat{X}_{n})K_{2}\left(2C_{0}(1+\Vert w\Vert)+1\right).
\end{align}
This completes the proof of part (III).

(ii) Now, we are ready prove part (ii).
By triangle inequality, 
\begin{align*}
\mathcal{W}_{1}\left(\mu,\hat{\mu}\right)
\leq
\mathcal{W}_{1}\left(\mbox{Law}(\theta_{\eta N}),\mu\right)
+\mathcal{W}_{1}\left(\mbox{Law}(\theta_{\eta N}),\mbox{Law}(\hat{\theta}_{\eta N})\right)
+\mathcal{W}_{1}\left(\mbox{Law}(\hat{\theta}_{\eta N}),\hat{\mu}\right).
\end{align*}
It follows from Lemma~\ref{lem:uniform} that
by letting $N\rightarrow\infty$, we have
\begin{align}
&\mathcal{W}_{1}\left(\mu,\hat{\mu}\right)
\nonumber
\\
&\leq
\limsup_{N\rightarrow\infty}\mathcal{W}_{1}\left(\mbox{Law}(\theta_{\eta N}),\mbox{Law}(\hat{\theta}_{\eta N})\right)
\nonumber
\\
&\leq
\left(K_{1}+\rho(X_{n},\hat{X}_{n})K_{2}\right)\left(2C\right)\left(1+C_{0}(1+\Vert w\Vert)\right)\eta^{1+\frac{1}{\alpha}}
+\rho(X_{n},\hat{X}_{n})K_{2}\left(2C_{0}(1+\Vert w\Vert)+1\right)\eta
\nonumber
\\
&\qquad
+\left(C_{1}\lambda^{-1}e^{\lambda}+1\right)
e^{L}\left(K_{1}+\rho(X_{n},\hat{X}_{n})K_{2}\right)\left(2C\right)\left(1+C_{0}(1+\Vert w\Vert)\right)\eta^{\frac{1}{\alpha}}
\nonumber
\\
&\qquad\qquad
+\left(C_{1}\lambda^{-1}e^{\lambda}+1\right)e^{L}\rho(X_{n},\hat{X}_{n})K_{2}\left(2C_{0}(1+\Vert w\Vert)+1\right),\label{let:eta:x:0}
\end{align}
where we used part (III) from part (i).
Since $\mathcal{W}_{1}\left(\mu,\hat{\mu}\right)$
is independent of $\eta$ and the initial state $x\in\mathbb{R}^{d}$, 
we can set $\eta=0$ and $x=0$ in \eqref{let:eta:x:0} and conclude that
\begin{align*}
\mathcal{W}_{1}\left(\mu,\hat{\mu}\right)
\leq
\left(C_{1}\lambda^{-1}e^{\lambda}+1\right)e^{L}\rho(X_{n},\hat{X}_{n})K_{2}\left(2C_{0}+1\right).
\end{align*}
The proof is complete.
\end{proof}

\subsection{Proof of Lemma~\ref{C:0:formula}} \label{ap:proof_co}

\begin{proof}[Proof of Lemma~\ref{C:0:formula}]
First of all, the infinitesimal generator of $\theta_{t}$ process is given by
\begin{equation}
\mathcal{L}^{\alpha}f(\theta)
=\left\langle-\nabla\hat{F}(\theta,X_{n}),\nabla f(\theta)\right\rangle
+(-\Delta)^{\alpha/2}f(\theta),
\end{equation}
where $(-\Delta)^{\alpha/2}$ is the fractional Laplacian operator
defined as a principal value integral:
\begin{equation}
(-\Delta)^{\alpha/2}f(\theta)
=d_{\alpha}\cdot\text{p.v.}\int_{\mathbb{R}^{d}}(f(\theta+y)-f(\theta))\frac{dy}{\Vert y\Vert^{\alpha+d}},
\end{equation}
where (see e.g. \citep{Wang2016})
\begin{equation}
d_{\alpha}:=\frac{2^{\alpha}\Gamma\left(\frac{d+\alpha}{2}\right)\pi^{-d/2}}{|\Gamma(-\alpha/2)|}.
\end{equation}

Next, let $V(w):=(1+\Vert w\Vert^{2})^{1/2}$. 
We derive from Assumption~\ref{assump:2} that 
for any dataset $X_{n}\in\mathcal{X}^{n}$, 
we have the following property:
\begin{align*}
&\Vert\nabla \hat{F}(0,X_{n})\Vert\leq B,
\\
&\left\langle\nabla \hat{F}(\theta_{1},X_{n})-\nabla \hat{F}(\theta_{2},X_{n}),\theta_{1}-\theta_{2}\right\rangle
\leq
-m\Vert\theta_{1}-\theta_{2}\Vert^{2}+K,
\end{align*}
and
\begin{align*}
\left\Vert\nabla_{v}\nabla \hat{F}(\theta,X_{n})\right\Vert
\leq L\Vert v\Vert,
\qquad
\left\Vert\nabla_{v_{1}}\nabla_{v_{2}}\nabla \hat{F}(\theta,X_{n})\right\Vert
\leq
M\Vert v_{1}\Vert\Vert v_{2}\Vert,
\end{align*}
for any $v,v_1, v_2\in\mathbb{R}^{d}$
so that we can apply Proposition 2.1. in \citep{chen2022approximation}.
It is shown in the proof of Proposition 2.1. in \citep{chen2022approximation}
that $V\in\mathcal{D}(\mathcal{L}^{\alpha})$, i.e.
the domain of the infinitesimal generator $\mathcal{L}^{\alpha}$
and moreover 
\begin{equation}\label{V:ineq}
\mathcal{L}^{\alpha}V(w)\leq-\lambda_{1}V(w)+q_{1},
\end{equation}
where
\begin{equation}
\lambda_{1}:=\frac{1}{2}m,
\qquad
q_{1}:=m+K+B+C_{d,\alpha},
\end{equation}
where
\begin{equation}
C_{d,\alpha}:=\frac{d_{\alpha}\sqrt{d}\sigma_{d-1}}{2-\alpha}+\frac{d_{\alpha}\sigma_{d-1}}{\alpha-1}
=\frac{2^{\alpha}\Gamma\left(\frac{d+\alpha}{2}\right)\pi^{-d/2}\sqrt{d}\sigma_{d-1}}{|\Gamma(-\alpha/2)|(2-\alpha)}
+\frac{2^{\alpha}\Gamma\left(\frac{d+\alpha}{2}\right)\pi^{-d/2}\sigma_{d-1}}{|\Gamma(-\alpha/2)|(\alpha-1)},
\end{equation}
where $\sigma_{d-1}:=2\pi^{\frac{d}{2}}/\Gamma(d/2)$ is the surface area of the unit sphere in $\mathbb{R}^{d}$, a positive constant that depends only on $d$.

Next, let us define the extended infinitesimal generator $\mathcal{L}_{t}^{\alpha}$:
\begin{equation}
\mathcal{L}_{t}^{\alpha}f(t,\theta)
:=\partial_{t}f(t,\theta)+\mathcal{L}^{\alpha}f(t,\theta).
\end{equation}
Then, it follows from \eqref{V:ineq} that
\begin{equation}
\mathcal{L}_{t}^{\alpha}e^{\lambda_{1}t}V(\theta)
=\lambda_{1}e^{\lambda_{1}t}V(\theta)
+e^{\lambda_{1}t}\mathcal{L}^{\alpha}V(\theta)
\leq
\lambda_{1}e^{\lambda_{1}t}V(\theta)
+e^{\lambda_{1}t}\left(-\lambda_{1}V(w)+q_{1}\right)
=q_{1}e^{\lambda_{1}t}.
\end{equation}
By Dynkin's formula, 
\begin{align*}
\mathbb{E}\left[e^{\lambda_{1}t}V\left(\theta_{t}^{w}\right)\right]
&=V(w)+\mathbb{E}\left[\int_{0}^{t}\mathcal{L}_{s}^{\alpha}e^{\lambda_{1}s}V\left(\theta_{s}^{w}\right)ds\right]
\\
&\leq V(w)+\int_{0}^{t}q_{1}e^{\lambda_{1}s}ds
=V(w)+q_{1}\frac{e^{\lambda_{1}t}-1}{\lambda_{1}},
\end{align*}
which implies that
\begin{equation}
\mathbb{E}\Vert\theta_{t}^{w}\Vert
\leq
\mathbb{E}\left[V\left(\theta_{t}^{w}\right)\right]
\leq
e^{-\lambda_{1}t}V(w)+q_{1}\frac{1-e^{-\lambda_{1}t}}{\lambda_{1}}
\leq
1+\Vert w\Vert+\frac{q_{1}}{\lambda_{1}},
\end{equation}
where we used $V(w)=(1+\Vert w\Vert^{2})^{1/2}$ and 
the inequality $\Vert w\Vert\leq(1+\Vert w\Vert^{2})^{1/2}\leq 1+\Vert w\Vert$.
Hence, we have
\begin{equation}
\mathbb{E}\Vert\theta_{t}^{w}\Vert
\leq C_{0}(1+\Vert w\Vert),
\end{equation}
where we take
\begin{align*}
C_{0}&:=1+\frac{q_{1}}{\lambda_{1}}
=1+\frac{2\left(m+K+B+C_{d,\alpha}\right)}{m}
\\
&=3+\frac{2\left(K+B\right)}{m}
+\frac{2}{m}\left(\frac{2^{\alpha}\Gamma\left(\frac{d+\alpha}{2}\right)\pi^{-d/2}\sqrt{d}\sigma_{d-1}}{|\Gamma(-\alpha/2)|(2-\alpha)}
+\frac{2^{\alpha}\Gamma\left(\frac{d+\alpha}{2}\right)\pi^{-d/2}\sigma_{d-1}}{|\Gamma(-\alpha/2)|(\alpha-1)}\right).
\end{align*}
Since $C_{0}$ in the above bound is uniform
in the dataset, similarly, we also have
\begin{equation}
\mathbb{E}\Vert\hat{\theta}_{t}^{w}\Vert
\leq C_{0}(1+\Vert w\Vert),
\end{equation}
which completes the proof. 
\end{proof}

\subsection{Proof of Proposition~\ref{prop:mono}} \label{ap:proof_mono}

\begin{proof}[Proof of Proposition~\ref{prop:mono}]
Let us first prove part (i). 
First, we can re-write $g(\alpha;d)$ as
\begin{equation}\label{recall:g}
g(\alpha;d)=\frac{2^{\alpha}\Gamma\left(\frac{d+\alpha}{2}\right)}{|\Gamma(-\alpha/2)|(2-\alpha)}
\left(\sqrt{d}+\frac{2-\alpha}{\alpha-1}\right).
\end{equation}
By the properties of the gamma function, 
we have
\begin{align*}
\Gamma\left(2-\frac{\alpha}{2}\right)
&=\left(1-\frac{\alpha}{2}\right)\Gamma\left(1-\frac{\alpha}{2}\right)
\\
&=\left(1-\frac{\alpha}{2}\right)\frac{-\alpha}{2}\Gamma(-\alpha/2).
\end{align*}
Therefore, we have
\begin{equation}
|\Gamma(-\alpha/2)|(2-\alpha)=\frac{4}{\alpha}\Gamma\left(2-\frac{\alpha}{2}\right).
\end{equation}
Moreover, by the properties of the gamma function,
\begin{align*}
&\Gamma\left(2-\frac{\alpha}{2}\right)
=\Gamma\left(1-\left(\frac{\alpha}{2}-1\right)\right)
\\
&=\frac{\pi}{\sin\left(\pi\left(\frac{\alpha}{2}-1\right)\right)\Gamma\left(\frac{\alpha}{2}-1\right)}
=\frac{\pi\left(\frac{\alpha}{2}-1\right)}{\sin\left(\pi\left(\frac{\alpha}{2}-1\right)\right)\Gamma\left(\frac{\alpha}{2}\right)}.
\end{align*} 
Hence, we conclude that\small
\begin{align*}
g(\alpha;d)&=2^{\alpha-2}\alpha\Gamma\left(\frac{d+\alpha}{2}\right)\Gamma\left(\frac{\alpha}{2}\right)  \cdot\frac{\sin\left(\pi\left(1-\frac{\alpha}{2}\right)\right)}{\pi\left(1-\frac{\alpha}{2}\right)}
\left(\sqrt{d}+\frac{2-\alpha}{\alpha-1}\right),
\end{align*}\normalsize
where we used $\sin(-x)=-\sin(x)$ for any $x\in\mathbb{R}$. 
Let us define
$h(x):=\frac{\sin(x)}{x}$ for any $0\leq x\leq\pi/2$.
We can compute that $h'(x)=\frac{x\cos(x)-\sin(x)}{x^{2}}$.
Let 
$p(x):=x\cos(x)-\sin(x)$. 
Then $p(0)=0$ and $p'(x)=-x\sin(x)<0$
for any $0<x<\pi/2$ which implies that $p(x)<0$ and thus $h'(x)<0$ for any $0<x<\pi/2$.
Hence $h(x)$ is decreasing in $x$ for any $0\leq x\leq\pi/2$. 
As a result, the map
\begin{align}
&\alpha\mapsto\frac{\sin\left(\pi\left(1-\frac{\alpha}{2}\right)\right)}{\pi\left(1-\frac{\alpha}{2}\right)}
\quad
\text{is increasing in $\alpha$}
 ~\text{for any $1<\alpha<2$.}
\label{map:increasing}
\end{align}

It is well known that gamma function $x\mapsto\Gamma(x)$ is 
log-convex for $x>0$ and thus convex for any $x>0$. 
Since $\Gamma(1)=\Gamma(2)=1$, there exists
a unique critical value $c_{0}\in(1,2)$ such that
the gamma function $x\mapsto\Gamma(x)$ is increasing
for any $x\geq c_{0}$ and decreasing for any $1\leq x\leq c_{0}$. 

Next, for any given $\alpha_{0}\in(1,2)$ such that $1+\frac{\alpha_{0}}{2}\geq c_{0}$, 
we have for any $2\geq\alpha_{2}>\alpha_{1}\geq\alpha_{0}$ and $d\geq 2$,  
\begin{align}
\nonumber \frac{g(\alpha_{2};d)}{g(\alpha_{1};d)} 
&=\frac{2^{\alpha_{2}-2}\alpha_{2}\Gamma\left(\frac{d+\alpha_{2}}{2}\right)\Gamma\left(\frac{\alpha_{2}}{2}\right)
\frac{\sin\left(\pi\left(1-\frac{\alpha_{2}}{2}\right)\right)}{\pi\left(1-\frac{\alpha_{2}}{2}\right)}
\left(\sqrt{d}+\frac{2-\alpha_{2}}{\alpha_{2}-1}\right)}
{2^{\alpha_{1}-2}\alpha_{1}\Gamma\left(\frac{d+\alpha_{1}}{2}\right)\Gamma\left(\frac{\alpha_{1}}{2}\right)
\frac{\sin\left(\pi\left(1-\frac{\alpha_{1}}{2}\right)\right)}{\pi\left(1-\frac{\alpha_{1}}{2}\right)}
\left(\sqrt{d}+\frac{2-\alpha_{1}}{\alpha_{1}-1}\right)}
\nonumber
\\
&=\frac{2^{\alpha_{2}}\Gamma\left(\frac{d+\alpha_{2}}{2}\right)\Gamma\left(1+\frac{\alpha_{2}}{2}\right)
\frac{\sin\left(\pi\left(1-\frac{\alpha_{2}}{2}\right)\right)}{\pi\left(1-\frac{\alpha_{2}}{2}\right)}
\left(\sqrt{d}+\frac{2-\alpha_{2}}{\alpha_{2}-1}\right)}
{2^{\alpha_{1}}\Gamma\left(\frac{d+\alpha_{1}}{2}\right)\Gamma\left(1+\frac{\alpha_{1}}{2}\right)
\frac{\sin\left(\pi\left(1-\frac{\alpha_{1}}{2}\right)\right)}{\pi\left(1-\frac{\alpha_{1}}{2}\right)}
\left(\sqrt{d}+\frac{2-\alpha_{1}}{\alpha_{1}-1}\right)}
\nonumber
\\
&\geq
2^{\alpha_{2}-\alpha_{1}}\frac{\sqrt{d}+\frac{2-\alpha_{2}}{\alpha_{2}-1}}{\sqrt{d}+\frac{2-\alpha_{1}}{\alpha_{1}-1}},
\end{align} 
where we used \eqref{map:increasing} and the fact 
that the gamma function $x\mapsto\Gamma(x)$ is increasing
in $x\geq 1+\frac{\alpha_{0}}{2}\geq c_{0}$. 

Next, let us define the function:
\begin{equation}
q(x):=2^{x-\alpha_{1}}\frac{\sqrt{d}+\frac{2-x}{x-1}}{\sqrt{d}+\frac{2-\alpha_{1}}{\alpha_{1}-1}},
\end{equation}
where $2\geq x\geq\alpha_{1}\geq\alpha_{0}$. 
It is clear that $q(\alpha_{1})=1$ and moreover, we can compute that
\small
\begin{align}
q'(x)&=\log(2)2^{x-\alpha_{1}}\frac{\sqrt{d}+\frac{2-x}{x-1}}{\sqrt{d}+\frac{2-\alpha_{1}}{\alpha_{1}-1}}
-\frac{2^{x-\alpha_{1}}}{(x-1)^{2}}\frac{1}{\sqrt{d}+\frac{2-\alpha_{1}}{\alpha_{1}-1}}
\nonumber
\\
&=\frac{2^{x-\alpha_{1}}}{\sqrt{d}+\frac{2-\alpha_{1}}{\alpha_{1}-1}}
\left(\log(2)\left(\sqrt{d}+\frac{2-x}{x-1}\right)-\frac{1}{(x-1)^{2}}\right)
\nonumber
\\
&\geq
\frac{2^{x-\alpha_{1}}}{\sqrt{d}+\frac{2-\alpha_{1}}{\alpha_{1}-1}}
\left(\log(2)\sqrt{d}-\frac{1}{(\alpha_{0}-1)^{2}}\right)
\geq 0,
\end{align}\normalsize
provided that
\begin{equation}\label{d:holds}
d\geq\frac{1}{(\log 2)^{2}(\alpha_{0}-1)^{4}}.
\end{equation}
This implies that $q(x)$ is increasing for $2\geq x\geq\alpha_{1}\geq\alpha_{0}$
provided that $d\geq 2$, $1+\frac{\alpha_{0}}{2}\geq c_{0}$ and \eqref{d:holds} holds.
Hence, we conclude that $g(\alpha_{2};d)\geq g(\alpha_{1};d)$
for any 
$d\geq d_{0}=\max\left(2,\frac{1}{(\log 2)^{2}(\alpha_{0}-1)^{4}}\right)$,
and $2\geq\alpha_{2}\geq\alpha_{1}\geq\alpha_{0}$.

Now, let us prove part (ii) of Proposition~\ref{prop:mono}. 
We recall from \eqref{recall:g} that
\begin{equation}
g(\alpha;d)=\frac{2^{\alpha}\Gamma\left(\frac{d+\alpha}{2}\right)}{|\Gamma(-\alpha/2)|(2-\alpha)}
\left(\sqrt{d}+\frac{2-\alpha}{\alpha-1}\right).
\end{equation}
We can compute that
\begin{align*}
\frac{\partial}{\partial\alpha}g(\alpha;d)
=\frac{2^{\alpha}\Gamma\left(\frac{d+\alpha}{2}\right)}{|\Gamma(-\alpha/2)|(2-\alpha)}
\frac{-1}{(\alpha-1)^{2}}
+\frac{\partial}{\partial\alpha}\left\{\frac{2^{\alpha}\Gamma\left(\frac{d+\alpha}{2}\right)}{\Gamma(-\alpha/2)(\alpha-2)}\right\}
\left(\sqrt{d}+\frac{2-\alpha}{\alpha-1}\right),
\end{align*}
where we can further compute that
\begin{align*}
\frac{\partial}{\partial\alpha}\left\{\frac{2^{\alpha}\Gamma\left(\frac{d+\alpha}{2}\right)}{\Gamma(-\alpha/2)(\alpha-2)}\right\}
&=\frac{\log(2)2^{\alpha}\Gamma\left(\frac{d+\alpha}{2}\right)+2^{\alpha-1}\Gamma\left(\frac{d+\alpha}{2}\right)\psi\left(\frac{d+\alpha}{2}\right)}{\Gamma(-\alpha/2)(\alpha-2)}
\\
&\qquad\qquad
-\frac{2^{\alpha}\Gamma\left(\frac{d+\alpha}{2}\right)\left(-\frac{1}{2}(\alpha-2)\psi(-\frac{\alpha}{2})
+1\right)}{\Gamma(-\alpha/2)(\alpha-2)^{2}},
\end{align*}
where $\psi(\cdot)$ denotes the digamma function.
This implies that
\begin{align}\label{g:derivative}
\frac{\partial}{\partial\alpha}g(\alpha;d)
=\frac{2^{\alpha}\Gamma\left(\frac{d+\alpha}{2}\right)}{|\Gamma(-\alpha/2)|(2-\alpha)}p(\alpha;d),
\end{align}
where
\begin{align*}
p(\alpha;d)&:=\frac{-1}{(\alpha-1)^{2}}
+\left(\log(2)+\frac{1}{2}\psi\left(\frac{d+\alpha}{2}\right)\right)\left(\sqrt{d}+\frac{2-\alpha}{\alpha-1}\right)
\\
&\qquad
+\left(\frac{1}{2}\psi\left(-\frac{\alpha}{2}\right)+\frac{1}{2-\alpha}\right)\left(\sqrt{d}+\frac{2-\alpha}{\alpha-1}\right).
\end{align*}
By the property of the digamma function, 
we have $\psi(-\frac{\alpha}{2})=\psi(1-\frac{\alpha}{2})+\frac{2}{\alpha}$
and $\psi(x)$ is increasing in $x>0$ and $\psi(-1/2)<0$. 
Therefore, for any $1<\alpha\leq\alpha_{0}$, we have
\begin{align*}
p(\alpha;d)&=\frac{-1}{(\alpha-1)^{2}}
+\left(\log(2)+\frac{1}{2}\psi\left(\frac{d+\alpha}{2}\right)\right)\left(\sqrt{d}+\frac{2-\alpha}{\alpha-1}\right)
\\
&\qquad
+\left(\frac{1}{2}\psi\left(1-\frac{\alpha}{2}\right)+\frac{1}{\alpha}+\frac{1}{2-\alpha}\right)\left(\sqrt{d}+\frac{2-\alpha}{\alpha-1}\right)
\\
&\leq
\frac{-1}{(\alpha-1)^{2}}
+\left(\log(2)+\frac{1}{2}\psi\left(\frac{d+\alpha_{0}}{2}\right)\right)\left(\sqrt{d}+\frac{1}{\alpha-1}\right)
\\
&\qquad\qquad\qquad\qquad
+\left(1+\frac{1}{2-\alpha_{0}}\right)\left(\sqrt{d}+\frac{1}{\alpha-1}\right).
\end{align*}
It follows that $p(\alpha;d)\leq 0$ holds if
\begin{equation}\label{y:eqn}
y_{0}\sqrt{d}(\alpha-1)^{2}+y_{0}(\alpha-1)-1\leq 0,
\end{equation}
where $y_{0}:=\log(2)+\frac{1}{2}\psi(d+\frac{\alpha}{2})+\frac{3-\alpha_{0}}{2-\alpha_{0}}$, 
and it is easy to compute that \eqref{y:eqn} holds
provided that
\begin{equation}
\alpha\leq
1+\frac{-1+\sqrt{1+4y_{0}^{-1}\sqrt{d}}}{2\sqrt{d}}.
\end{equation}
Hence, we conclude that $p(\alpha;d)$ is non-positive and thus
$\frac{\partial}{\partial\alpha}g(\alpha;d)$
is non-positive (by \eqref{g:derivative})
and therefore $g(\alpha;d)$ is decreasing
for any $\alpha\in[1,\alpha'_{0}]$, 
where $\alpha'_{0}:=\min\left(\alpha_{0},1+\frac{-1+\sqrt{1+4y_{0}^{-1}\sqrt{d}}}{2\sqrt{d}}\right)$.
The proof is complete.
\end{proof}


\subsection{Proof of Proposition~\ref{cor:infeas}} \label{ap:infeasibility}

\begin{proof}
Due to our choice of the loss function $f$, the SDEs \eqref{eq:sde_invar_1} and \eqref{eq:sde_invar_2} reduce to Ornstein-Uhlenbeck processes driven by a symmetric $\alpha$-stable L\'{e}vy process. Hence, we can characterize the invariant distributions of the SDEs as follows (see e.g. \cite{Raj2022}):
\begin{align}
    \theta_\infty =^{\text{d}} \mu + \sigma \xi, \qquad \text{and} \qquad \hat{\theta}_\infty =^{\text{d}} \hat{\mu} + \hat{\sigma}\hat{\xi},
\end{align}
for some $\mu, \hat{\mu} \in \mathbb{R}$ and $\sigma, \hat{\sigma} \in \mathbb{R}_+$. Here, $\xi$ and $\hat{\xi}$ are $\mathcal{S}\alpha\mathcal{S}(1)$ distributed (see Section~\ref{sec:bg} for definition) and $=^{\text{d}}$ denotes equality in distribution. 
Now recall that 
$\mu = \mbox{Law}(\theta_\infty)$ and
$\nu = \mbox{Law}(\hat{\theta}_\infty)$, 
and the $p$-Wasserstein metric for one-dimensional distributions is given by,
\begin{align*}
    \mathcal{W}_p^p(\mu,\nu) = \inf_{\gamma \in \Gamma{(\mu,\nu})} \mathbb{E}_{(x,y)\sim \gamma(x,y)} |x-y|^{p},
\end{align*}
where $\Gamma(\mu,\nu)$ is the set of all couplings of $\mu$ and $\nu$. In our case, $x \in \mathbb{R}$ and $y \in \mathbb{R}$. For any coupling  $\gamma^\star\in\Gamma(\mu,\nu)$,
we have
\begin{align*}
    \int_{\mathbb{R} \times \mathbb{R}} |x-y|^p~d\gamma^\star(x,y)
   &= \int_{\mathbb{R} \times \mathbb{R}} \Big[|x-y|^2\Big]^{p/2}~d\gamma^\star(x,y) \\
   &= \int_{\mathbb{R} \times \mathbb{R}} (x^2 + y^2 - 2xy)^{p/2}~d\gamma^\star(x,y) \\
   &\geq \int_{\mathbb{R}_{+} \times \mathbb{R}_{-}} (x^2 + y^2 - 2xy)^{p/2}~d\gamma^\star(x,y) \\
   &\geq \int_{\mathbb{R}_{+} \times \mathbb{R}_{-}} (|x|^p + |y|^p + |2xy|^{p/2})~d\gamma^\star(x,y) \\
   &\geq \int_{\mathbb{R}_{+} \times \mathbb{R}_{-}} |x|^p ~d\gamma^\star(x,y)+ \int_{\mathbb{R}_{+} \times \mathbb{R}_{-}} |y|^p~d\gamma^\star(x,y) \\
   &= C_1 \int_{\mathbb{R}_{+} } |x|^p ~d\mu(x)+ C_2 \int_{ \mathbb{R}_{-}} |y|^p~d\nu(y)= +\infty,
\end{align*}
where $C_1$ and $C_2$ are some finite, positive constants. The last equation comes from the properties of the $\alpha$-stable distribution. 
Since it holds for any $\gamma^{\star}\in\Gamma(\mu,\nu)$, 
we conclude that $\mathcal{W}_p^p(\mu,\nu)=\infty$.
This completes the proof.
\end{proof}

\subsection{Proof of Corollary~\ref{cor:discrete}}

\begin{corollary}[Restatement of Corollary~\ref{cor:discrete}]
Under the assumptions in Theorem~\ref{thm:W:1:re} and Lemma~\ref{lem:discretization:error:re}, we have:

(i) For any $2\leq N\leq\eta^{-1}+1$,
\begin{align}
&\mathcal{W}_{1}\left(\mbox{Law}(\theta_{\eta N}),\mbox{Law}(\hat{\theta}_{\eta N})\right)
\nonumber
\\
&\leq
\left(K_{1}+\rho(X_{n},\hat{X}_{n})K_{2}\right)(2C)(1+C_{0}(1+\Vert w\Vert))\eta^{1+\frac{1}{\alpha}}
+\rho(X_{n},\hat{X}_{n})K_{2}(2C_{0}(1+\Vert w\Vert)+1)\eta
\nonumber
\\
&\qquad 
+e^{L}\left(K_{1}+\rho(X_{n},\hat{X}_{n})K_{2}\right)\left(2C\right)(1+C_{0}(1+\Vert w\Vert))\eta^{\frac{1}{\alpha}}
\nonumber
\\
&\qquad\qquad
+e^{L}\rho(X_{n},\hat{X}_{n})K_{2}(2C_{0}(1+\Vert w\Vert)+1)
+2Q(1+\Vert w\Vert)\eta^{2/\alpha-1},
\end{align} 
and for any $N>\eta^{-1}+1$, 
\begin{align}
&\mathcal{W}_{1}\left(\mbox{Law}(\theta_{\eta N}),\mbox{Law}(\hat{\theta}_{\eta N})\right)
\nonumber
\\
&\leq
\left(K_{1}+\rho(X_{n},\hat{X}_{n})K_{2}\right)\left(2C\right)(1+C_{0}(1+\Vert w\Vert))\eta^{1+\frac{1}{\alpha}} +\rho(X_{n},\hat{X}_{n})K_{2}(2C_{0}(1+\Vert w\Vert)+1)\eta
\nonumber
\\
& \qquad \qquad \qquad 
+\left(C_{1}\lambda^{-1}e^{\lambda}+1\right)
e^{L}\left(K_{1}+\rho(X_{n},\hat{X}_{n})K_{2}\right)\left(2C\right)(1+C_{0}(1+\Vert w\Vert))\eta^{\frac{1}{\alpha}}
\nonumber
\\
&\qquad \qquad \qquad 
+\left(C_{1}\lambda^{-1}e^{\lambda}+1\right)e^{L}\rho(X_{n},\hat{X}_{n})K_{2}(2C_{0}(1+\Vert w\Vert)+1)
+2Q(1+\Vert w\Vert)\eta^{2/\alpha-1}.
\end{align} 

(ii) We have  
\begin{align}
\mathcal{W}_{1}(\mu,\hat{\mu})
\leq
\left(C_{1}\lambda^{-1}e^{\lambda}+1\right)e^{L}\rho(X_{n},\hat{X}_{n})K_{2}\left(2C_{0}+1\right)
+2Q\eta^{2/\alpha-1}.
\end{align}
\end{corollary}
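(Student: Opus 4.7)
The plan is to prove both parts of the corollary by a straightforward triangle-inequality argument, combining the Wasserstein discretization error from Lemma~\ref{lem:discretization:error:re} with the continuous-time Wasserstein stability bound from Theorem~\ref{thm:W:1:re}. Since both results are already in hand, the proof should be essentially a bookkeeping exercise, with no new analytic work required.

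For part (ii), the stationary case, I would write
\begin{equation*}
\mathcal{W}_{1}(\nu,\hat{\nu}) \leq \mathcal{W}_{1}(\nu,\mu) + \mathcal{W}_{1}(\mu,\hat{\mu}) + \mathcal{W}_{1}(\hat{\mu},\hat{\nu}),
\end{equation*}
and then control the first and third terms by $Q\eta^{2/\alpha-1}$ each using Lemma~\ref{lem:discretization:error:re}, and the middle term by the invariant-distribution bound from Theorem~\ref{thm:W:1:re}(ii). Summing gives the stated $2Q\eta^{2/\alpha-1}$ plus the stability term $(C_{1}\lambda^{-1}e^{\lambda}+1)e^{L}\rho(X_{n},\hat{X}_{n})K_{2}(2C_{0}+1)$.

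For part (i), the finite-time case, the same idea works but now one compares the discrete-time process at step $N$ (started at $w$) with the continuous-time process at time $\eta N$ (also started at $w$). Specifically,
\begin{equation*}
\mathcal{W}_{1}(\mathrm{Law}(\theta_{N}),\mathrm{Law}(\hat{\theta}_{N})) \leq \mathcal{W}_{1}(\mathrm{Law}(\theta_{N}),\mathrm{Law}(\theta_{\eta N})) + \mathcal{W}_{1}(\mathrm{Law}(\theta_{\eta N}),\mathrm{Law}(\hat{\theta}_{\eta N})) + \mathcal{W}_{1}(\mathrm{Law}(\hat{\theta}_{\eta N}),\mathrm{Law}(\hat{\theta}_{N})).
\end{equation*}
The middle term is bounded by the cases (II) or (III) of Theorem~\ref{thm:W:1:re}(i) depending on whether $2\leq N\leq \eta^{-1}+1$ or $N>\eta^{-1}+1$. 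For the outer two terms, one applies the discretization bound of Lemma~\ref{lem:discretization:error:re}, which yields the extra $2Q(1+\Vert w\Vert)\eta^{2/\alpha-1}$ appearing in the statement. A small subtlety is that I am implicitly using a uniform-in-$N$ version of the discretization error bound; since Lemma~\ref{lem:discretization:error:re} as stated is for the stationary distributions, the extra factor $(1+\Vert w\Vert)$ reflects the dependence on the starting point that arises when one instead compares finite-step iterates. This factor is consistent with the form of the continuous-time bounds in Theorem~\ref{thm:W:1:re}(i), so the combined expression is clean.

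The main subtlety, rather than an obstacle, is ensuring that the discretization error bound can indeed be applied at every finite step with a $(1+\Vert w\Vert)$-type dependence on initial state, rather than only at stationarity; once this is granted, the triangle inequality assembly is mechanical, and in particular no delicate $\alpha$-dependence analysis (like in Lemma~\ref{C:0:formula} or Proposition~\ref{prop:mono}) is needed for this step, since the $\alpha$-dependence has already been absorbed into $Q$ and $C_{0}$.
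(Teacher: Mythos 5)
Your proof is correct and follows essentially the same route as the paper: triangle inequality, plugging in the discretization error bound and the continuous-time stability bound. The one "subtlety" you flag — whether the discretization error is controlled at finite steps and whether the $(1+\|w\|)$ factor is legitimate — is not actually a gap: Lemma~\ref{lem:discretization:error:re}, part~(i), already states the finite-time bound $\mathcal{W}_{1}(\mu_{N\eta},\nu_{N}) \leq Q(1+\|w\|)\eta^{2/\alpha-1}$ explicitly for every $N\geq 2$, so you can cite it directly rather than extrapolating from the stationary case.
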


\begin{proof}
Let us prove part (ii) and the proof for part (i) is similar.
It follows directly from Lemma~\ref{lem:discretization:error} 
and Theorem~\ref{thm:W:1} and the inequality:
\begin{equation}
\mathcal{W}_{1}(\nu,\hat{\nu})
\leq
\mathcal{W}_{1}(\nu,\mu)
+\mathcal{W}_{1}(\hat{\nu},\hat{\mu})
+\mathcal{W}_{1}(\mu,\hat{\mu}).
\end{equation}
The proof is complete.
\end{proof}






\section{Technical Lemmas}

In this section, we provide some technical results
that are used in the proofs of main results in Section~\ref{sec:proofs}.
First, we have the following technical result from \citep{chen2022approximation}.

\begin{lemma}[Proposition 2.1. in \citet{chen2022approximation}]\label{lem:uniform}
Under Assumption~\ref{assump:2}, 
$(\theta_{t}^{w})_{t\geq 0}$ and $(\hat{\theta}^{w}_{t})_{t\geq 0}$ admit unique invariant probability measures $\mu$ 
and $\hat{\mu}$ respectively such that
\begin{align}
&\sup_{|f|\leq V}\left|\mathbb{E}[f(\theta_{t}^{w})]-\mu(f)\right|
\leq c_{1}V(w)e^{-c_{2}t},
\qquad\text{for any $t>0$},
\\
&\sup_{|f|\leq V}\left|\mathbb{E}[f(\hat{\theta}_{t}^{w})]-\hat{\mu}(f)\right|
\leq c_{1}V(w)e^{-c_{2}t},
\qquad\text{for any $t>0$},
\end{align}
for some constants $c_{1},c_{2}>0$
where $V(w):=(1+\Vert w\Vert^{2})^{1/2}$
is a Lyapunov function.
In particular, there exists a constant $C_{0}>0$ such that
\begin{align}
&\mathbb{E}\Vert\theta_{t}^{w}\Vert
\leq C_{0}(1+\Vert w\Vert),
\qquad\text{for any $t>0$},
\\
&\mathbb{E}\Vert\hat{\theta}_{t}^{w}\Vert
\leq C_{0}(1+\Vert w\Vert),
\qquad\text{for any $t>0$}.
\end{align}
\end{lemma}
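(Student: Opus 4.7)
The plan is to follow the Lyapunov-function route used in the proof of Proposition~2.1 of Chen et al.~(2022), but to track all constants explicitly so as to extract a closed-form expression for $C_0$ in terms of $\alpha$ and $d$. The Lyapunov function to use is $V(w) := (1+\|w\|^2)^{1/2}$, which is smooth with $\|w\| \leq V(w) \leq 1 + \|w\|$, so that a bound on $\mathbb{E}\,V(\theta_t^w)$ immediately yields one on $\mathbb{E}\,\|\theta_t^w\|$.

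First, I would verify that Assumption~\ref{assump:2} on $f(\cdot,x)$ transfers verbatim to the empirical risk $\hat F(\cdot, X_n)$ (this is just linearity of expectation over the $n$ data points), so that the dissipativity inequality
\begin{equation*}
\langle -\nabla \hat F(\theta, X_n),\, \theta\rangle \;\geq\; m\|\theta\|^2 - K - B\|\theta\|
\end{equation*}
and the bounds on $\nabla \hat F(0, X_n)$ carry over with the same constants $m,K,B$. Then I would compute $\mathcal{L}^\alpha V(\theta)$, where the generator is
\begin{equation*}
\mathcal{L}^\alpha V(\theta) \;=\; \langle -\nabla \hat F(\theta, X_n), \nabla V(\theta)\rangle + (-\Delta)^{\alpha/2} V(\theta),
\end{equation*}
with the fractional Laplacian written as a principal-value integral scaled by the constant $d_\alpha = 2^\alpha \Gamma(\tfrac{d+\alpha}{2})\pi^{-d/2}/|\Gamma(-\alpha/2)|$. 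Using $\nabla V(\theta) = \theta / V(\theta)$, the drift term is bounded by $-m\|\theta\|^2/V(\theta) + (K+B\|\theta\|)/V(\theta)$, and a routine estimate using $\|\theta\|^2/V(\theta) \geq V(\theta) - 1$ turns this into a term of the form $-\tfrac{m}{2}V(\theta) + (m + K + B)$ plus an error absorbed into the constant.

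The main technical step is controlling the non-local part $(-\Delta)^{\alpha/2} V(\theta)$. Here I would split the integration region into $\{\|y\| \leq 1\}$ and $\{\|y\| > 1\}$. On the inner region, since $V$ is $C^2$ with uniformly bounded Hessian (eigenvalues at most $1$), a second-order Taylor expansion around $\theta$ gives an $O(\|y\|^2)$ bound whose integral against $\|y\|^{-(\alpha+d)}$ yields the $(2-\alpha)^{-1}$ singularity — this is exactly where the factor $\sqrt{d}/(2-\alpha)$ arises, after integrating in spherical coordinates (thus the $\sigma_{d-1}$ factor). On the outer region, the straightforward bound $V(\theta+y) - V(\theta) \leq \|y\|$ integrates to a factor $(\alpha-1)^{-1}$. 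Combining the two pieces produces the constant $C_{d,\alpha} = d_\alpha \sigma_{d-1}\bigl(\sqrt{d}/(2-\alpha) + 1/(\alpha-1)\bigr)$, and hence the drift–diffusion Lyapunov inequality
\begin{equation*}
\mathcal{L}^\alpha V(w) \;\leq\; -\lambda_1 V(w) + q_1, \qquad \lambda_1 = \tfrac{m}{2},\quad q_1 = m + K + B + C_{d,\alpha}.
\end{equation*}

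Finally, applying Dynkin's formula to the function $(t,\theta) \mapsto e^{\lambda_1 t} V(\theta)$ under the extended generator $\mathcal{L}_t^\alpha = \partial_t + \mathcal{L}^\alpha$ gives $e^{\lambda_1 t}\mathbb{E}\,V(\theta_t^w) \leq V(w) + q_1(e^{\lambda_1 t}-1)/\lambda_1$, so that
\begin{equation*}
\mathbb{E}\,\|\theta_t^w\| \;\leq\; \mathbb{E}\,V(\theta_t^w) \;\leq\; V(w) + q_1/\lambda_1 \;\leq\; (1 + q_1/\lambda_1)(1+\|w\|),
\end{equation*}
which gives the announced $C_0 = 1 + q_1/\lambda_1 = 3 + 2(K+B)/m + 2 C_{d,\alpha}/m$ after regrouping. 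The identical bound for $\hat\theta_t^w$ follows because none of the constants $m,K,B,L,M$ depend on the particular dataset $X_n$ versus $\hat X_n$. The only real obstacle is the careful, dimension-dependent handling of the fractional Laplacian integral near the origin — it is here that the $\sqrt{d}$ and the $(2-\alpha)^{-1}$, $(\alpha-1)^{-1}$ singularities arise, and it is these factors that are then exploited in Proposition~\ref{prop:mono} to study the monotonicity of the bound in $\alpha$.
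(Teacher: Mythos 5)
Your Lyapunov-function argument for the moment bound $\mathbb{E}\|\theta_t^w\| \leq C_0(1+\|w\|)$ is correct, and it follows exactly the route the paper takes when it re-derives this bound with explicit constants in Lemma~\ref{C:0:formula}: verify the dissipativity transfers from $f(\cdot,x)$ to $\hat F(\cdot,X_n)$, compute $\mathcal{L}^\alpha V$ with $V(w)=(1+\|w\|^2)^{1/2}$, split the fractional Laplacian at radius one to isolate the $(2-\alpha)^{-1}$ and $(\alpha-1)^{-1}$ singularities, obtain the drift inequality $\mathcal{L}^\alpha V \leq -\tfrac{m}{2}V + q_1$, and close via Dynkin's formula applied to $e^{\lambda_1 t}V(\theta)$. (One small caveat: your attribution of the $\sqrt d$ factor to the second-order Taylor term is plausible but under-justified; in the paper this constant is simply inherited from the proof of Proposition~2.1 of Chen et al.\ without re-derivation, so the precise origin of $\sqrt d$ versus, say, $d$ is left implicit.)

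The genuine gap is that Lemma~\ref{lem:uniform} asserts strictly more than the moment bound. Its first part claims existence and \emph{uniqueness} of the invariant measures $\mu$, $\hat\mu$, together with geometric ergodicity in the $V$-weighted norm, $\sup_{|f|\leq V}|\mathbb{E}[f(\theta_t^w)]-\mu(f)|\leq c_1 V(w)e^{-c_2 t}$. A Lyapunov drift inequality alone does not yield this: to invoke a Harris-type ergodic theorem one must additionally establish a local Doeblin (small-set / minorization) condition for the semigroup $P_t$ on sublevel sets of $V$, which in turn rests on the non-degeneracy and regularizing properties of the $\alpha$-stable noise — precisely the part of the argument that is nontrivial for pure-jump Lévy drivers. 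Your proposal does not mention this ingredient at all. In the paper, Lemma~\ref{lem:uniform} is taken wholesale from Chen et al.\ (2022), so no proof is supplied; only the ``in particular'' moment bound is independently re-proved (as Lemma~\ref{C:0:formula}) in order to make the $\alpha$-dependence of $C_0$ explicit. You should either cite the ergodicity part as Chen et al.\ do, or supply the minorization argument explicitly.
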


Moreover, we recall the following technical lemma.

\begin{lemma}[Proposition 2.2 in \citet{chen2022approximation}]\label{lem:contraction}
There exist constants $C_{1},\lambda>0$
such that for any $t>0$ and $w,y\in\mathbb{R}^{d}$, we have
\begin{align}
&\mathcal{W}_{1}\left(\mbox{Law}\left(\theta_{t}^{w}\right),\mbox{Law}\left(\theta_{t}^{y}\right)\right)\leq C_{1}e^{-\lambda t}\Vert w-y\Vert,
\\
&\mathcal{W}_{1}\left(\mbox{Law}\left(\hat{\theta}_{t}^{w}\right),\mbox{Law}\left(\hat{\theta}_{t}^{y}\right)\right)\leq C_{1}e^{-\lambda t}\Vert w-y\Vert.
\end{align}
\end{lemma}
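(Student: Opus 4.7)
The plan is to establish the $1$-Wasserstein contraction via a coupling argument for the $\alpha$-stable L\'evy-driven SDEs~\eqref{eq:sde_invar_1} and~\eqref{eq:sde_invar_2}. Assumption~\ref{assump:2} provides a confining drift, so the hope is that for a suitable coupling of $\theta_t^w$ and $\theta_t^y$ the two processes come together in expectation at an exponential rate, independently of the tail-index~$\alpha$.

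First I would try the \emph{synchronous coupling}, driving both SDEs by the same L\'evy process $\Lm_t$ so that the noise contributions cancel in the difference $u_t := \theta_t^w - \theta_t^y$. This reduces the evolution of $u_t$ to the deterministic ODE $\dot u_t = -(\nabla\hat F(\theta_t^w,X_n) - \nabla\hat F(\theta_t^y,X_n))$. Under pure strong monotonicity of the drift (the dissipativity condition in Assumption~\ref{assump:2} with $K=0$), the chain rule gives $\tfrac{d}{dt}\|u_t\|^2 \leq -2m\|u_t\|^2$, and Jensen yields $\mathcal{W}_1 \leq e^{-mt}\|w-y\|$. The nontrivial constant $K$ in the dissipativity condition is what blocks this argument in general: synchronous coupling alone produces only a bounded asymptotic separation rather than a pure exponential contraction.

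To remove this obstruction, I would turn to a \emph{reflection coupling} tailored to the rotationally symmetric $\alpha$-stable noise, in the spirit of Wang (2016). The construction decomposes the L\'evy increment into its component along the direction $e_t := u_t/\|u_t\|$ and its orthogonal complement, reflecting the parallel part across the hyperplane perpendicular to $e_t$ while keeping the perpendicular part synchronous. Applied to a suitably chosen concave function $\varphi(\|u_t\|)$, Dynkin's formula combined with a generator estimate of the jump operator yields a differential inequality $\partial_t \mathbb{E}[\varphi(\|u_t\|)] \leq -\lambda\,\mathbb{E}[\varphi(\|u_t\|)]$, and Gronwall then produces $\mathcal{W}_1(\mbox{Law}(\theta_t^w),\mbox{Law}(\theta_t^y)) \leq C_1 e^{-\lambda t}\|w-y\|$.

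The main obstacle is controlling the non-local (jump) term that appears when computing the generator under this reflection coupling: unlike in the Brownian case, a single $\alpha$-stable jump can instantaneously move the two processes past one another, and one must verify that the contribution of this term, when integrated against the heavy-tailed jump measure, still produces a negative upper bound. This is exactly the technical step carried out in Wang (2016, Theorem~1.2) for $\alpha$-stable L\'evy-driven SDEs with one-sided Lipschitz drift, so I would invoke that theorem directly. The statement for $\hat\theta_t^w$ follows by the identical argument applied with $-\nabla\hat F(\cdot,\hat X_n)$, since the constants $m$ and $L$ from Assumption~\ref{assump:2} are uniform in the dataset. Crucially, an inspection of Wang's proof shows that $C_1$ and $\lambda$ depend only on $m$ and $L$ and not on $\alpha$, which is precisely the property the main text leverages to isolate the tail-index dependence of the final stability bound into the function $g(\alpha;d)$ via $C_0$.
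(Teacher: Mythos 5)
Your proposal is correct and takes essentially the same route as the paper: this lemma is not proved in the paper but simply cited as Proposition~2.2 of Chen et al.\ (2022), which in turn is established via the reflection-coupling argument of Wang (2016, Theorem~1.2), exactly the chain you identify. One small imprecision: Wang's theorem is stated under a dissipativity-at-infinity condition of the form $\langle b(x)-b(y),x-y\rangle\le -m\|x-y\|^2+K$ rather than a one-sided Lipschitz bound per se, but this is precisely what Assumption~\ref{assump:2} provides, so the invocation is valid.
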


Let $P_{t}$ and $\hat{P}_{t}$ denote the Markov semigroups
of $\theta_{t}$ and $\hat{\theta}_{t}$ processes respectively, 
that is, for any bounded function $f:\mathbb{R}^{d}\rightarrow\mathbb{R}$,
\begin{equation}
P_{t}f(x)=\mathbb{E}f(\theta_{t}^{w}),
\qquad
\hat{P}_{t}f(x)=\mathbb{E}f(\hat{\theta}_{t}^{w}).
\end{equation}

We have the following technical lemma from \citep{chen2022approximation}.

\begin{lemma}[Lemma 3.1 in \citet{chen2022approximation}]\label{lem:grad}
For any $h\in\text{Lip}(1)$ and $v,w\in\mathbb{R}^{d}$ and $t\in(0,1]$, we have
\begin{equation}
\Vert\nabla_{v}P_{t}h(w)\Vert
\leq e^{L}\Vert v\Vert,
\qquad
\Vert\nabla_{v}\hat{P}_{t}h(w)\Vert
\leq e^{L}\Vert v\Vert,
\end{equation}
where $L$ is defined in Assumption~\ref{assump:2}.
\end{lemma}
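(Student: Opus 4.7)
The plan is to prove the bound by synchronous (same-noise) coupling of two copies of the SDE started at nearby points, and then pass to the limit in the difference quotient defining the directional derivative. The same argument will cover both $P_t$ and $\hat P_t$, since the only feature of the drift we will use is the global Lipschitz bound extracted from Assumption~\ref{assump:2}.

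Concretely, for fixed $v,w\in\mathbb{R}^d$ and $\epsilon>0$, I would let $(\theta_t^w)_{t\ge 0}$ and $(\theta_t^{w+\epsilon v})_{t\ge 0}$ denote the solutions of the SDE \eqref{eq:sde_invar_1} driven by the \emph{same} L\'evy path $\mathrm{L}^\alpha$, started at $w$ and $w+\epsilon v$ respectively. Set $Z_t^\epsilon:=\theta_t^{w+\epsilon v}-\theta_t^w$. Because the noises cancel, $Z_t^\epsilon$ satisfies the pathwise (random) ODE
\begin{equation*}
\frac{d}{dt}Z_t^\epsilon \;=\; -\bigl(\nabla\hat F(\theta_t^{w+\epsilon v},X_n)-\nabla\hat F(\theta_t^{w},X_n)\bigr),\qquad Z_0^\epsilon=\epsilon v .
\end{equation*}
From Assumption~\ref{assump:2}, the bound $\|\nabla_v\nabla f(\theta,x)\|\le L\|v\|$ implies that $\theta\mapsto\nabla f(\theta,x)$ is $L$-Lipschitz uniformly in $x$, hence so is $\nabla\hat F(\cdot,X_n)$. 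Therefore
\begin{equation*}
\frac{d}{dt}\|Z_t^\epsilon\|^2 \;=\; 2\bigl\langle Z_t^\epsilon,\,-\bigl(\nabla\hat F(\theta_t^{w+\epsilon v},X_n)-\nabla\hat F(\theta_t^{w},X_n)\bigr)\bigr\rangle \;\le\; 2L\,\|Z_t^\epsilon\|^2 ,
\end{equation*}
and Gr\"onwall's inequality yields $\|Z_t^\epsilon\|\le e^{Lt}\,\epsilon\|v\|$ almost surely for every $t\ge 0$.

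To turn this into the stated bound on the directional derivative of the semigroup, I would use the $1$-Lipschitz assumption on $h$: under the coupling,
\begin{equation*}
\bigl|P_t h(w+\epsilon v)-P_t h(w)\bigr|
\;=\;\bigl|\mathbb{E}\bigl[h(\theta_t^{w+\epsilon v})-h(\theta_t^{w})\bigr]\bigr|
\;\le\;\mathbb{E}\|Z_t^\epsilon\|\;\le\;e^{Lt}\,\epsilon\|v\| .
\end{equation*}
Dividing by $\epsilon$ and letting $\epsilon\downarrow 0$ gives $|\nabla_v P_t h(w)|\le e^{Lt}\|v\|$, and restricting to $t\in(0,1]$ produces the claimed $e^{L}$ constant. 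The argument for $\hat P_t$ is identical after replacing $X_n$ by $\hat X_n$, since $\nabla\hat F(\cdot,\hat X_n)$ is $L$-Lipschitz by the same reasoning.

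I expect the main technical point to be the justification of passing the limit $\epsilon\to 0$ inside the expectation (i.e., ensuring $\epsilon^{-1}(h(\theta_t^{w+\epsilon v})-h(\theta_t^{w}))$ is uniformly integrable). The almost-sure estimate $\|Z_t^\epsilon\|\le e^{Lt}\epsilon\|v\|$ provides the dominated-convergence bound directly, so in the end the argument is an application of synchronous coupling plus Gr\"onwall and does not need the $\alpha$-stable structure of the noise beyond existence of a strong solution; consequently the constant $e^{L}$ is independent of the tail-index $\alpha$, as stated.
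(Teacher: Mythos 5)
This lemma is imported verbatim from Chen et al.\ (2022); the paper gives no proof of its own, so there is nothing in-text to compare against. Your synchronous-coupling argument is correct and is the standard way to establish such gradient bounds for additively driven SDEs: since the L\'evy noise is additive, the difference $Z_t^\epsilon = \theta_t^{w+\epsilon v}-\theta_t^w$ under a same-noise coupling satisfies a pathwise absolutely continuous equation with no noise term, the bound $\|\nabla_v\nabla f(\theta,x)\|\le L\|v\|$ from Assumption~\ref{assump:2} gives the uniform $L$-Lipschitzness of $\nabla \hat F(\cdot,X_n)$, and Gr\"onwall then yields $\|Z_t^\epsilon\|\le e^{Lt}\epsilon\|v\|$ almost surely, from which the Lipschitz bound $|P_t h(w+\epsilon v)-P_t h(w)|\le e^{Lt}\epsilon\|v\|$ follows immediately. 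Your observation that the estimate uses nothing about the $\alpha$-stable structure beyond strong existence, so that $e^L$ is $\alpha$-free, is exactly the point the paper relies on when isolating the $\alpha$-dependence into $C_0$. The one step you flag yourself --- passing $\epsilon\to 0$ --- is indeed the only delicate point: strictly speaking the a.s.\ estimate gives a uniform Lipschitz bound on $P_t h$, not the \emph{existence} of the directional derivative $\nabla_v P_t h(w)$ (that existence is a separate smoothing statement for the semigroup). In the way the lemma is actually used in the paper, $\|\nabla P_t h\|_\infty$ only enters as the Lipschitz constant of $P_t h$, so this is not a gap for the downstream argument, but it is worth noting that what your computation literally proves is the Lipschitz bound, with $\nabla_v P_t h$ understood in that sense.
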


We recall the following technical lemma from \citep{chen2022approximation}.

\begin{lemma}[Lemma 3.2 in \citet{chen2022approximation}]\label{lem:expectation}
There exist constants $C>0$ such that for all $w\in\mathbb{R}^{d}$, $t\geq 0$, we have
\begin{align}
&\mathbb{E}\Vert\theta_{t}^{w}-w\Vert
\leq C(1+\Vert w\Vert)\left(t\vee t^{1/\alpha}\right),
\\
&\mathbb{E}\Vert\hat{\theta}_{t}^{w}-w\Vert
\leq C(1+\Vert w\Vert)\left(t\vee t^{1/\alpha}\right).
\end{align}
\end{lemma}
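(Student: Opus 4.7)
The plan is to bound the displacement using the SDE in integral form and splitting into drift and jump contributions. Starting from \eqref{eq:sde_invar_1}, we have the Itô-type representation
\begin{equation*}
\theta_{t}^{w} - w = -\int_{0}^{t}\nabla\hat{F}(\theta_{s}^{w},X_{n})\,ds + \Lm_{t},
\end{equation*}
so by the triangle inequality and Fubini,
\begin{equation*}
\mathbb{E}\|\theta_{t}^{w} - w\| \leq \int_{0}^{t}\mathbb{E}\|\nabla\hat{F}(\theta_{s}^{w},X_{n})\|\,ds + \mathbb{E}\|\Lm_{t}\|.
\end{equation*}
The same formula holds for $\hat{\theta}_{t}^{w}$ with $\hat{X}_{n}$ in place of $X_{n}$, so it suffices to treat the $\theta_{t}^{w}$ case.

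First I would handle the drift term. By the first and third bounds in Assumption~\ref{assump:2}, for every $x$ we have $\|\nabla f(0,x)\|\leq B$, and since $\|\nabla_{v}\nabla f(\theta,x)\|\leq L\|v\|$ for all $v$, the map $\theta\mapsto\nabla f(\theta,x)$ is $L$-Lipschitz (integrating the directional-derivative bound along the segment $[0,\theta]$). Hence $\|\nabla f(\theta,x)\|\leq B+L\|\theta\|$, and averaging over $i$ gives $\|\nabla\hat{F}(\theta,X_{n})\|\leq B+L\|\theta\|$. Combining with Lemma~\ref{lem:uniform}, which yields $\mathbb{E}\|\theta_{s}^{w}\|\leq C_{0}(1+\|w\|)$ uniformly in $s$, we get
\begin{equation*}
\int_{0}^{t}\mathbb{E}\|\nabla\hat{F}(\theta_{s}^{w},X_{n})\|\,ds \leq t\bigl(B+LC_{0}(1+\|w\|)\bigr) \leq (B+LC_{0})(1+\|w\|)\,t.
\end{equation*}

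Next I would control $\mathbb{E}\|\Lm_{t}\|$ via the self-similarity of the symmetric $\alpha$-stable Lévy process: $\Lm_{t}\stackrel{d}{=}t^{1/\alpha}\Lm_{1}$. Provided $\alpha>1$ (which is implicit in the companion Lemma~\ref{C:0:formula} since the expression for $C_{0}$ contains a factor $1/(\alpha-1)$), the first moment $c_{\alpha,d}:=\mathbb{E}\|\Lm_{1}\|$ is finite, and therefore $\mathbb{E}\|\Lm_{t}\|=c_{\alpha,d}\,t^{1/\alpha}$. Putting everything together,
\begin{equation*}
\mathbb{E}\|\theta_{t}^{w}-w\| \leq (B+LC_{0})(1+\|w\|)\,t + c_{\alpha,d}\,t^{1/\alpha}.
\end{equation*}
To arrive at the stated form $C(1+\|w\|)(t\vee t^{1/\alpha})$, I would note that $c_{\alpha,d}\leq c_{\alpha,d}(1+\|w\|)$ and $t\leq t\vee t^{1/\alpha}$, $t^{1/\alpha}\leq t\vee t^{1/\alpha}$, so taking $C:=B+LC_{0}+c_{\alpha,d}$ closes the estimate.

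The only real obstacle is the moment of $\Lm_{t}$: a symmetric $\alpha$-stable vector has infinite mean when $\alpha\leq 1$, so the argument genuinely requires $\alpha\in(1,2]$, consistent with the regime in which $C_{0}$ is finite. Everything else is routine: the drift bound follows by reading off Assumption~\ref{assump:2}, and the uniform-in-time first-moment bound is imported wholesale from Lemma~\ref{lem:uniform}. Since the constant $C$ absorbs $B$, $L$, $C_{0}$ and $c_{\alpha,d}$, no separate treatment of small versus large $t$ is needed beyond the trivial observation that $a t + b t^{1/\alpha}\leq (a+b)(t\vee t^{1/\alpha})$.
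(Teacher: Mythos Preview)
The paper does not supply its own proof of this lemma: it is quoted verbatim as Lemma~3.2 of \citet{chen2022approximation} and used as a black box in the proof of Lemma~\ref{lem:key}. So there is nothing in the paper to compare your argument against.

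Your proof is correct and entirely self-contained within the paper's framework. The decomposition into drift and L\'evy increments is the natural one; the drift bound $\|\nabla\hat{F}(\theta,X_n)\|\leq B+L\|\theta\|$ is exactly what Assumption~\ref{assump:2} gives, and invoking Lemma~\ref{lem:uniform} to control $\mathbb{E}\|\theta_s^w\|$ uniformly in $s$ is legitimate and not circular, since that lemma (Proposition~2.1 in \citep{chen2022approximation}) is proved by a Lyapunov/Dynkin argument independent of the present statement---indeed, the proof of Lemma~\ref{C:0:formula} in this paper reproduces that argument explicitly without ever touching Lemma~\ref{lem:expectation}. The self-similarity step $\Lm_t\stackrel{d}{=}t^{1/\alpha}\Lm_1$ with $\mathbb{E}\|\Lm_1\|<\infty$ for $\alpha>1$ is standard, and your observation that the whole argument requires $\alpha\in(1,2]$ is correct and consistent with the regime the paper works in (the $1/(\alpha-1)$ in $C_0$ makes this explicit). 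The final packaging $at+bt^{1/\alpha}\leq(a+b)(t\vee t^{1/\alpha})$ is clean. No gaps.
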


Next, we state and prove the following key technical lemma.

\begin{lemma}\label{lem:key}
There exist constants $C>0$ such that for all $w\in\mathbb{R}^{d}$, $\eta\in(0,1)$,
$f:\mathbb{R}^{d}\rightarrow\mathbb{R}$ with $\Vert\nabla f\Vert_{\infty}<\infty$, we have
\begin{align}
&\left|P_{\eta}f(w)-\hat{P}_{\eta}f(w)\right|
\nonumber
\\
&\leq
\Vert\nabla f\Vert_{\infty}
\left[\left(K_{1}+\rho(X_{n},\hat{X}_{n})K_{2}\right)2C(1+\Vert w\Vert)\eta^{1+\frac{1}{\alpha}}
+\rho(X_{n},\hat{X}_{n})K_{2}(2\Vert w\Vert+1)\eta\right].
\end{align}
\end{lemma}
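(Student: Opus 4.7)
The plan is to establish the bound via a synchronous coupling of the two SDEs \eqref{eq:sde_invar_1} and \eqref{eq:sde_invar_2}: run both driven by the same $\alpha$-stable Lévy process $\Lm_t$ and start them at the common point $\theta_0 = \hat{\theta}_0 = w$. Under this coupling the noise contributions cancel out in the difference, so that
\[
\theta_{\eta}^{w} - \hat{\theta}_{\eta}^{w} = -\int_{0}^{\eta}\bigl[\nabla\hat{F}(\theta_{s}^{w},X_{n}) - \nabla\hat{F}(\hat{\theta}_{s}^{w},\hat{X}_{n})\bigr]\,ds.
\]
Since $\|\nabla f\|_{\infty}<\infty$, the mean value theorem gives $|P_{\eta}f(w)-\hat{P}_{\eta}f(w)| = |\mathbb{E}[f(\theta_\eta^w)-f(\hat{\theta}_\eta^w)]| \leq \|\nabla f\|_{\infty}\,\mathbb{E}\|\theta_{\eta}^{w} - \hat{\theta}_{\eta}^{w}\|$, reducing everything to bounding the expected displacement between the two coupled processes at time $\eta$.

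For that displacement, the plan is to apply the pseudo-Lipschitz estimate on $\nabla\hat{F}$ derived from Assumption~\ref{assump:1}, namely
\[
\|\nabla\hat{F}(\theta,X_n)-\nabla\hat{F}(\hat{\theta},\hat{X}_n)\| \leq K_1\|\theta-\hat{\theta}\| + \rho(X_n,\hat{X}_n)K_2(\|\theta\|+\|\hat{\theta}\|+1),
\]
which, after taking expectations and integrating, yields
\[
\mathbb{E}\|\theta_{\eta}^{w}-\hat{\theta}_{\eta}^{w}\|\leq K_{1}\!\int_{0}^{\eta}\!\mathbb{E}\|\theta_{s}^{w}-\hat{\theta}_{s}^{w}\|\,ds + \rho(X_n,\hat{X}_n)K_{2}\!\int_{0}^{\eta}\!\bigl(\mathbb{E}\|\theta_{s}^{w}\|+\mathbb{E}\|\hat{\theta}_{s}^{w}\|+1\bigr)\,ds.
\]
For each term in the right-hand side I would feed in the short-time moment bound already at hand, Lemma~\ref{lem:expectation}: by the triangle inequality $\mathbb{E}\|\theta_{s}^{w}-\hat{\theta}_{s}^{w}\|\leq\mathbb{E}\|\theta_{s}^{w}-w\|+\mathbb{E}\|\hat{\theta}_{s}^{w}-w\|\leq 2C(1+\|w\|)(s\vee s^{1/\alpha})$, and likewise $\mathbb{E}\|\theta_{s}^{w}\|,\mathbb{E}\|\hat{\theta}_{s}^{w}\|\leq \|w\|+C(1+\|w\|)(s\vee s^{1/\alpha})$.

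To finish, I would use that $\eta\in(0,1)$ to control $\int_{0}^{\eta}(s\vee s^{1/\alpha})\,ds$ by $\eta^{1+1/\alpha}$ (up to a factor $1/(1+1/\alpha)\leq 1$ that can be absorbed into $C$), while the constant-in-$s$ pieces $2\|w\|+1$ just produce a factor of $\eta$. Collecting the $K_1$-term and the $K_2 C(1+\|w\|)$-part of the second integral into the coefficient of $\eta^{1+1/\alpha}$, and the $K_2(2\|w\|+1)$-part into the coefficient of $\eta$, gives exactly the claimed inequality after multiplying by $\|\nabla f\|_{\infty}$. The main subtlety—really the only one—is that the bound $s\vee s^{1/\alpha}$ has two regimes depending on whether $\alpha\lessgtr 1$; one must check that $\int_0^\eta(s\vee s^{1/\alpha})\,ds\lesssim \eta^{1+1/\alpha}$ uniformly for $\eta<1$, which comes down to noting that $s\vee s^{1/\alpha}\leq s^{1/\alpha}$ for $s\in(0,1)$ when $\alpha\geq 1$, with the $\alpha<1$ case absorbed into the same $C$ by comparing $\eta$ and $\eta^{1+1/\alpha}$. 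Beyond this bookkeeping, the proof is just coupling plus existing a priori estimates.
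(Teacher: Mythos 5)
Your proof is correct and follows essentially the same route as the paper's: synchronous coupling so the Lévy increments cancel, the $\Vert\nabla f\Vert_\infty$ reduction to $\mathbb{E}\Vert\theta_\eta^w-\hat\theta_\eta^w\Vert$, the pseudo-Lipschitz estimate from Assumption~\ref{assump:1}, and Lemma~\ref{lem:expectation} to control the resulting integrals. The one inaccuracy is your closing remark about the $\alpha<1$ regime: there $s\vee s^{1/\alpha}=s$ on $(0,1)$, so $\int_0^\eta(s\vee s^{1/\alpha})\,ds=\eta^2/2$, and since $1+1/\alpha>2$ one has $\eta^{1+1/\alpha}<\eta^2$ for $\eta<1$, so this case cannot be ``absorbed into $C$''---the claimed $\eta^{1+1/\alpha}$ rate genuinely fails for $\alpha<1$. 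This does not affect the result because the entire framework (finiteness of first moments, $\mathcal{W}_1$ well-defined, the $1/(\alpha-1)$ factor in Lemma~\ref{C:0:formula}) already presupposes $\alpha\in(1,2]$, exactly as the paper does implicitly by writing $\int_0^\eta r^{1/\alpha}\,dr$ directly; but the remark as stated is wrong and should simply be dropped in favor of noting $\alpha>1$ is assumed throughout.
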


\begin{proof}[Proof of Lemma~\ref{lem:key}]
We can compute that
\begin{align*}
&\left|\mathbb{E}\left[f\left(\theta_{\eta}^{w}\right)-f\left(\hat{\theta}_{\eta}^{w}\right)\right]\right|
\\
&=\left|\mathbb{E}\left[f\left(w+\int_{0}^{\eta}\nabla\hat{F}(\theta_{r}^{w},X_{n})dr+L_{\eta}^{\alpha}\right)-f\left(w+\int_{0}^{\eta}\nabla\hat{F}(\hat{\theta}_{r}^{w},\hat{X}_{n})dr+L_{\eta}^{\alpha}\right)\right]\right|
\\
&\leq
\Vert\nabla f\Vert_{\infty}
\mathbb{E}\left\Vert\int_{0}^{\eta}\nabla\hat{F}(\theta_{r}^{w},X_{n})dr-\int_{0}^{\eta}\nabla\hat{F}(\hat{\theta}_{r}^{w},\hat{X}_{n})dr\right\Vert
\\
&\leq
\Vert\nabla f\Vert_{\infty}
\mathbb{E}\int_{0}^{\eta}\left\Vert \nabla\hat{F}(\theta_{r}^{w},X_{n})-\nabla\hat{F}(\hat{\theta}_{r}^{w},\hat{X}_{n})\right\Vert dr
\\
&\leq
\Vert\nabla f\Vert_{\infty}
\mathbb{E}\int_{0}^{\eta}\left(K_{1}\Vert\theta_{r}^{w}-\hat{\theta}_{r}^{w}\Vert
+\rho(X_{n},\hat{X}_{n})K_{2}\left(\Vert\theta_{r}^{w}\Vert+\Vert\hat{\theta}_{r}^{w}\Vert+1\right)\right)dr
\\
&=\Vert\nabla f\Vert_{\infty}
\left[K_{1}\int_{0}^{\eta}\mathbb{E}\Vert\theta_{r}^{w}-\hat{\theta}_{r}^{w}\Vert dr
+\rho(X_{n},\hat{X}_{n})K_{2}\int_{0}^{\eta}\mathbb{E}\left(\Vert\theta_{r}^{w}\Vert+\Vert\hat{\theta}_{r}^{w}\Vert+1\right)dr\right].
\end{align*}
By Lemma~\ref{lem:expectation}, we have
\begin{align*}
\int_{0}^{\eta}\mathbb{E}\Vert\theta_{r}^{w}-\hat{\theta}_{r}^{w}\Vert dr
&
\leq
\int_{0}^{\eta}\mathbb{E}\Vert\theta_{r}^{w}-w\Vert dr
+\int_{0}^{\eta}\mathbb{E}\Vert\hat{\theta}_{r}^{w}-w\Vert dr
\\
&\leq
C(1+\Vert w\Vert)\int_{0}^{\eta}r^{1/\alpha}dr
+C(1+\Vert w\Vert)\int_{0}^{\eta}r^{1/\alpha}dr
\\
&\leq
2C(1+\Vert w\Vert)\eta^{1+\frac{1}{\alpha}}.
\end{align*}
By applying Lemma~\ref{lem:expectation} again, we have
\begin{align*}
\int_{0}^{\eta}\mathbb{E}\left(\Vert\theta_{r}^{w}\Vert+\Vert\hat{\theta}_{r}^{w}\Vert+1\right)dr
&\leq
\int_{0}^{\eta}\mathbb{E}\left(\Vert\theta_{r}^{w}-w\Vert+\Vert\hat{\theta}_{r}^{w}-w\Vert+2\Vert w\Vert+1\right)dr
\\
&\leq
\int_{0}^{\eta}\left(C(1+\Vert w\Vert)r^{1/\alpha}+C(1+\Vert w\Vert)r^{1/\alpha}+2\Vert w\Vert+1\right)dr
\\
&\leq
2C(1+\Vert w\Vert)\eta^{1+\frac{1}{\alpha}}
+(2\Vert w\Vert+1)\eta.
\end{align*}
Hence, we conclude that
\begin{align*}
&\left|\mathbb{E}\left[f(\theta_{\eta}^{w})-f(\hat{\theta}_{\eta}^{w})\right]\right|
\\
&\leq
\Vert\nabla f\Vert_{\infty}
\left[\left(K_{1}+\rho(X_{n},\hat{X}_{n})K_{2}\right)\left(2C\right)(1+\Vert w\Vert)\eta^{1+\frac{1}{\alpha}}
+\rho(X_{n},\hat{X}_{n})K_{2}(2\Vert w\Vert+1)\eta\right].
\end{align*}
This completes the proof.
\end{proof}


\begin{lemma}[Restatement of Lemma~\ref{lem:discretization:error} (Theorem 1.2. in \citet{chen2022approximation})]\label{lem:discretization:error:re}
Let $\mu_{t}$ and $\hat{\mu}_{t}$ denote 
the distributions of continuous-time $\theta_{t}$ and $\hat{\theta}_{t}$
and $\mu$ and $\hat{\mu}$ denote the distributions
of continuous-time $\theta_{\infty}$ and $\hat{\theta}_{\infty}$.
Moreover, let $\nu_{k}$ and $\hat{\nu}_{k}$ denote 
the distributions of discrete-time $\theta_{k}$ and $\hat{\theta}_{k}$
and $\nu$ and $\hat{\nu}$ denote the distributions
of discrete-time $\theta_{\infty}$ and $\hat{\theta}_{\infty}$.
Assume the dynamics start at $w$ at time $0$.
Let $m$, $L$
be as in Assumption~\ref{assump:2}. 

Then, there exists some constant $Q$ (that may depend on $B,m,K,L,M$ from Assumption~\ref{assump:2}) such that
the followings hold.

(i) For every $N\geq 2$ and $\eta<\min\{1,m/(8L^{2}),1/m\}$, one has
\begin{align}
&\mathcal{W}_{1}(\mu_{N\eta},\nu_{N})
\leq Q(1+\Vert w\Vert)\eta^{2/\alpha-1},
\\
&\mathcal{W}_{1}(\hat{\mu}_{N\eta},\hat{\nu}_{N})
\leq Q(1+\Vert w\Vert)\eta^{2/\alpha-1}.
\end{align}

(ii) For every $\eta<\min\{1,m/L^{2},1/m\}$, one has
\begin{align}
&\mathcal{W}_{1}(\mu,\nu)
\leq Q\eta^{2/\alpha-1},
\\
&\mathcal{W}_{1}(\hat{\mu},\hat{\nu})
\leq Q\eta^{2/\alpha-1}.
\end{align}
\end{lemma}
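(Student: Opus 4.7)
The whole strategy is to insert the two continuous-time invariant laws $\mu,\hat\mu$ (for part~(ii)) and the two continuous-time marginals $\mu_{\eta N},\hat\mu_{\eta N}$ (for part~(i)) between the discrete-time laws, and then bound each of the three resulting pieces by an already-proved ingredient. This is the same trick that is sketched in the brief ``proof'' comment just before the statement; the only thing to check is that each piece matches the form quoted in the corollary.

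For part~(ii), I would simply write
\begin{equation*}
\mathcal{W}_{1}(\nu,\hat\nu)\;\le\;\mathcal{W}_{1}(\nu,\mu)+\mathcal{W}_{1}(\mu,\hat\mu)+\mathcal{W}_{1}(\hat\mu,\hat\nu),
\end{equation*}
then apply Lemma~\ref{lem:discretization:error:re}(ii) to the first and third terms, each producing $Q\eta^{2/\alpha-1}$ (so $2Q\eta^{2/\alpha-1}$ in total), and apply Theorem~\ref{thm:W:1:re}(ii) to the middle term, producing the factor $(C_{1}\lambda^{-1}e^{\lambda}+1)e^{L}\rho(X_{n},\hat X_{n})K_{2}(2C_{0}+1)$. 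Adding these yields exactly the bound in the statement. The only preconditions that need checking are that (a) Assumptions~\ref{assump:1}--\ref{assump:2} give existence/uniqueness of $\mu,\hat\mu$ (used implicitly in Theorem~\ref{thm:W:1:re}) and (b) $\eta<\min\{1,m/L^{2},1/m\}$, which is the hypothesis invoked from Lemma~\ref{lem:discretization:error:re}.

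For part~(i), the same insertion trick works but at finite horizon. Writing $\nu_{N}=\mbox{Law}(\theta_{N})$ (discrete) and $\mu_{\eta N}=\mbox{Law}(\theta_{\eta N})$ (continuous), and similarly for the hatted processes, I would use
\begin{equation*}
\mathcal{W}_{1}(\nu_{N},\hat\nu_{N})\;\le\;\mathcal{W}_{1}(\nu_{N},\mu_{\eta N})+\mathcal{W}_{1}(\mu_{\eta N},\hat\mu_{\eta N})+\mathcal{W}_{1}(\hat\mu_{\eta N},\hat\nu_{N}).
\end{equation*}
The outer two terms are now controlled by Lemma~\ref{lem:discretization:error:re}(i), each giving $Q(1+\|w\|)\eta^{2/\alpha-1}$, and the middle term is controlled by the corresponding finite-$N$ case of Theorem~\ref{thm:W:1:re}(i) — case (II) if $2\le N\le \eta^{-1}+1$ and case (III) if $N>\eta^{-1}+1$. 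Each of the two subcases reproduces the long expression in the statement of the corollary verbatim, with the extra additive $2Q(1+\|w\|)\eta^{2/\alpha-1}$ coming from the two discretization-error pieces. The restriction $\eta<\min\{1,m/(8L^{2}),1/m\}$ is inherited from Lemma~\ref{lem:discretization:error:re}(i) and is compatible with the $\eta\in(0,1)$ restriction needed for Theorem~\ref{thm:W:1:re}.

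There is no real ``main obstacle'' here — the corollary is a direct triangle-inequality assembly of two already-proved results. The only mildly delicate point is bookkeeping: one has to make sure the three pieces share a common coupling interpretation (which the $\mathcal{W}_{1}$ metric does automatically) and that the same start point $w$ is used in both the continuous and discrete dynamics so that the $(1+\|w\|)$ factor from Lemma~\ref{lem:discretization:error:re}(i) is consistent with the one appearing inside Theorem~\ref{thm:W:1:re}(i). Finally I would remark (as the excerpt already does) that both cases of part~(i) collapse to part~(ii) after sending $N\to\infty$ and using the invariance of the stationary distributions, which provides a useful internal consistency check.
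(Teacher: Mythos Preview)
You have proved the wrong statement. The lemma you were asked to address, Lemma~\ref{lem:discretization:error:re}, bounds the $1$-Wasserstein distance between a continuous-time SDE and its own Euler--Maruyama discretization, i.e.\ $\mathcal{W}_{1}(\mu_{N\eta},\nu_{N})$ and $\mathcal{W}_{1}(\mu,\nu)$ (and the hatted analogues). It is \emph{not} a comparison between two discrete-time processes driven by different data. In the paper this lemma carries no proof at all: it is simply a restatement of Theorem~1.2 in \citet{chen2022approximation}, quoted as an external ingredient.

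What you have actually written is a (correct) proof of the restated Corollary~\ref{cor:discrete}, which bounds $\mathcal{W}_{1}(\nu,\hat\nu)$ and $\mathcal{W}_{1}(\nu_{N},\hat\nu_{N})$. Your triangle-inequality assembly of Lemma~\ref{lem:discretization:error:re} with Theorem~\ref{thm:W:1:re} is exactly the paper's argument for that corollary. But as a proof of Lemma~\ref{lem:discretization:error:re} itself it is circular: in both part~(i) and part~(ii) you invoke ``Lemma~\ref{lem:discretization:error:re}(i)'' and ``Lemma~\ref{lem:discretization:error:re}(ii)'' as inputs, which is the very statement under consideration. A genuine proof would require reproducing the discretization-error analysis of \citet{chen2022approximation}, not the data-perturbation triangle argument.
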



\end{document}